\theoremstyle{plain}
\newtheorem{theorem}{Theorem}
\newtheorem{cor}{Corollary}
\newtheorem{lemma}{Lemma}
\newtheorem{prop}{Proposition}
\theoremstyle{definition}
\newtheorem{remark}{Remark}
\newcommand{\R}{\mathbb{R}}
\newcommand{\N}{\mathbb{N}}
\newcommand{\Sym}{\mathbb{S}}
\newcommand{\PSD}{\mathbb{S}_{+}}
\newcommand{\PD}{\mathbb{S}_{++}}
\newcommand{\PL}{\mathsf{PL}}
\newcommand{\Lip}{\mathsf{Lip}}
\newcommand{\bone}{\bm{1}}
\def\app#1#2{%
  \mathrel{%
    \setbox0=\hbox{$#1\sim$}%
    \setbox2=\hbox{%
      \rlap{\hbox{$#1\propto$}}%
      \lower1.1\ht0\box0%
    }%
    \raise0.25\ht2\box2%
  }%
}
\newcommand{\E}{\mathbb{E}}
\newcommand{\Var}{\mathsf{Var}}
\newcommand{\Cov}{\mathsf{Cov}}
\renewcommand{\P}{\mathbb{P}}
\newcommand{\cT}{\mathcal{T}}
\newcommand{\MMSE}{\mathsf{MMSE}}
\DeclareMathOperator{\tr}{tr}
\DeclareMathOperator*{\vectorize}{vec}
\DeclareMathOperator*{\plim}{p-lim}
\newcommand{\indep}{\mathrel{\perp\!\!\!\perp}}
\DeclareMathOperator*{\argmax}{arg\;max}
\DeclareMathOperator{\KL}{\mathnormal{D}_{KL}}
\newcommand{\op}{\mathrm{op}}
\newcommand{\ind}{\mathbf{1}}
\newcommand{\eps}{\epsilon}
\newcommand{\iid}{\overset{\mathrm{iid}}{\sim}}
\newcommand{\cQ}{\mathcal{Q}}
\newcommand{\tX}{\Tilde{\bX}}
\newcommand{\tG}{\Tilde{\bG}}
\newcommand{\tH}{\Tilde{\bH}}
\newcommand{\tY}{\Tilde{\bY}}
\newcommand{\tB}{\Tilde{\mB}}
\newcommand{\tM}{\Tilde{\bM}}
\newcommand{\tK}{\Tilde{\mK}}
\newcommand{\tZ}{\Tilde{\bZ}}
\newcommand{\tSigma}{\Tilde{\mSigma}}
\newcommand{\tLambda}{\Tilde{\mLambda}}
\newcommand{\parrow}{\xrightarrow[]{\mathrm{pr}}}
\DeclareMathOperator{\gvec}{vec}
\DeclareMathOperator{\diag}{diag}
\DeclareMathOperator{\rank}{rank}
\newcommand{\Id}{\mI}
\newcommand{\mA}{{A}}
\newcommand{\mB}{{B}}
\newcommand{\mC}{{C}}
\newcommand{\mD}{{D}}
\newcommand{\mF}{{F}}
\newcommand{\mI}{{I}}
\newcommand{\mK}{{K}}
\newcommand{\mL}{{L}}
\newcommand{\mM}{{M}}
\newcommand{\mP}{{P}}
\newcommand{\mQ}{{Q}}
\newcommand{\mR}{{R}}
\newcommand{\mS}{{S}}
\newcommand{\mT}{{T}}
\newcommand{\mU}{{U}}
\newcommand{\mV}{{V}}
\newcommand{\mX}{{X}}
\newcommand{\mY}{{Y}}
\newcommand{\mZ}{{Z}}
\newcommand{\mXi}{{\Xi}}
\newcommand{\mSigma}{{\Sigma}}
\newcommand{\mLambda}{{\Lambda}}
\newcommand{\mGamma}{{\Gamma}}
\newcommand{\mDelta}{{\Delta}}
\newcommand{\mPsi}{{\Psi}}
\newcommand{\mPhi}{{\Phi}}
\newcommand{\mOmega}{{\Omega}}
\newcommand{\mbeta}{\bm{\beta}}
\newcommand{\mgamma}{\bm{\gamma}}
\newcommand{\mmu}{\bm{\mu}}
\newcommand{\meta}{\bm{\eta}}
\newcommand{\cA}{\mathcal{A}}
\newcommand{\md}{\bm{d}}
\newcommand{\me}{\bm{e}}
\newcommand{\mq}{\bm{q}}
\newcommand{\ms}{\bm{s}}
\newcommand{\mv}{\bm{v}}
\newcommand{\mx}{\bm{x}}
\newcommand{\my}{\bm{y}}
\newcommand{\mz}{\bm{z}}
\newcommand{\bA}{\bm{A}}
\newcommand{\bG}{\bm{G}}
\newcommand{\bH}{\bm{H}}
\newcommand{\bM}{\bm{M}}
\newcommand{\bS}{\bm{S}}
\newcommand{\bW}{\bm{W}}
\newcommand{\bX}{\bm{X}}
\newcommand{\bY}{\bm{Y}}
\newcommand{\bZ}{\bm{Z}}
\newcommand{\Normal}{\mathsf{N}}
\newcommand{\GOE}{\mathsf{GOE}}
\title{Approximate Message Passing for the Matrix Tensor Product Model}
\author{Riccardo Rossetti$\textsuperscript{1}$ and Galen Reeves$\textsuperscript{1,2}$ \vspace{0.3cm} \\ 
$\textsuperscript{1}$ Department of Statistical Science, Duke University \\
$\textsuperscript{2}$ Department of Electrical and Computer Engineering, Duke University}
\date{}
\begin{document}

\maketitle

\begin{abstract}
    We propose and analyze an approximate message passing (AMP) algorithm for the matrix tensor product model, which is a generalization of the standard spiked matrix models that allows for multiple types of pairwise observations over a collection of latent variables. A key innovation for this algorithm is a method for optimally weighing and combining multiple estimates in each iteration. Building upon an AMP convergence theorem for non-separable functions, we prove a state evolution for non-separable functions that provides an asymptotically exact description of its performance in the high-dimensional limit. We leverage this state evolution result to provide necessary and sufficient conditions for recovery of the signal of interest. Such conditions depend on the singular values of a linear operator derived from an appropriate generalization of a signal-to-noise ratio for our model. Our results recover as special cases a number of recently proposed methods for contextual models (e.g., covariate assisted clustering) as well as inhomogeneous noise models.
\end{abstract}

\section{Introduction}
The problem of recovering low-dimensional structures from very large amounts of data is a central problem in modern data science applications. Among the many inference and signal processing problem that can be instantiated within this framework, the extracting a latent low-rank component from a large random matrix has received considerable attention both on the theoretical and algorithmic points of view.
From the foundational studies of the spectrum rank-one spiked Wigner (for symmetric matrices) and spiked Wishart (for asymmetric ones) models \cite{johnstone2001distribution, baik2005, Peche:2006aa}, a wide range of variations of model assumptions and alternative approaches for analysis have been developed. Some examples include studies of sparse PCA \cite{Deshpande:2014aa,Ben-Arous:2020aa}, stochastic blockmodels and its variations \cite{Abbe:2017aa,Deshpande:2018aa}. 

Among algorithmic approaches that have found considerable success in the study of low-rank spiked matrix models, \textit{approximate message passing} (AMP) \cite{Bayati:2011aa} and its variations have received considerable attention and have proved invaluable in the study of matrix factorization problems \cite{Kabashima:2016aa, Deshpande:2016aa, montanari2021estimation, Lesieur:2017aa, Fletcher:2018aa} A comprehensive review of AMP algorithms, its theory and applications is given in \cite{feng2022unifying}.

In this work, we consider the \textit{matrix tensor product} (MTP) model \cite{reeves2020information}, a generalization of the usual spiked matrix model for collections of latent signals $X_i \in \R^d$, $i \in [n]$ whose pairwise interactions are observed in additive independent Gaussian noise. We develop an AMP algorithm adapted to the MTP by leveraging a representation as a collection of low-rank spiked matrix models with arbitrary coupling structures. Furthermore, we describe the properties of the mean squared error achievable via AMP in the Bayes-optimal setting, and provide necessary and sufficient conditions for AMP to achieve weak recovery of the signal of interest in the form of a stability analysis of the fixed points of a recurrence relation (called \textit{state evolution}) that tracks the exact asymptotic estimation performance of the output of the AMP algorithm.

\subsection{The matrix tensor product model}
We begin by describing the MTP that is the object of this work. In its original formulation \cite{reeves2020information}, it is assumed there are two latent signal matrices $\bX_1 \in \R^{n_1 \times d_1}, \bX_2 \in \R^{n_2 \times d_2}$ that are observed in additive Gaussian noise, yielding the data matrix $\bY \in \R^{n_1 n_2 \times K}$ defined as
\begin{align} \label{eq:MTP_kron_def}
    \bY = \frac{1}{\sqrt{n_1}}(\bX_1 \otimes \bX_2) \mGamma + \bW,
\end{align}
where $\mGamma := (\mgamma_1, \dotsc, \mgamma_K) \in \R^{d_1 d_2 \times K}$ denotes a known coupling matrix and $\bW \in \R^{n_1n_2 \times K}$ has standard i.i.d. Gaussian entries.

As an equivalent definition that is more convenient for the purpose of deriving our AMP algorithm, we rewrite the MTP of \eqref{eq:MTP_kron_def} as a symmetric multi-view low-rank spiked matrix model observed in additive Gaussian noise. For each $k \in [K]$, we define $\mGamma_k \in \R^{d_1 \times d_2}$ to be such that $\vectorize(\mGamma_k) = \mgamma_k$, with $\vectorize(\cdot)$ denoting the vectorization operation. Then, we define the collection of matrix observations $\{\bY_k \in \R^{n_1 \times n_2}\}_{k \in [K]}$ as
\begin{align} \label{eq:MTP_mv_def}
    \bY_k = \frac{1}{\sqrt{n_1}}\bX_1 \mGamma_k \bX_2^\top + \bW_k,
\end{align}
with the entries of $\bW_k \in \R^{n_1 \times n_2}$ being i.i.d. standard Gaussian random variables, independently across $k \in [K]$. We refer to $n_1$ as the \textit{problem size}, which we assume to be growing together with $n_2$ with a limiting aspect ratio $n_2/ n_1 \rightarrow \alpha \in \R_+$. 

While the formulation with two components $\bX_1$ and $\bX_2$ is natural in the presence of asymmetric noise and is very intuitive in the case the sequences $\bX_1$ and $\bX_2$ are assumed to be independent, it leads notational difficulties in the statement of our AMP results, except for some special cases. For ease of exposition, then, we will be stating our results in terms of MTP models for which $\bX_1 = \bX_2 \in \R^{n \times d}$, and we call such signal $\bX$. This comes without loss of generality, as for $\bX_1 \in \R^{n_1 \times d_1}, \bX_2^{n_2 \times d_2} $ we can always let $n = n_1 + n_2$, $d = d_1 + d_2$ and consider the signal $\bX \coloneqq \bX_1 \oplus \bX_2$, the matrix direct product of $\bX_1, \bX_2$, and define the $d \times d$ coupling matrices as
\begin{align}
    \Tilde{\mGamma} \coloneqq \begin{bmatrix}
        0 & \sqrt{1 + \alpha} \mGamma \\ 0 & 0
    \end{bmatrix}, \quad k = 1, \dotsc, K.
\end{align}
Then, the observation models $\{\Tilde{\bY}_k \in \R^{n \times n}\}_{k \in [K]}$,
\begin{align} \label{eq:MTP_mv_same_signals_def}
    \tilde{\bY}_k = \frac{1}{\sqrt{n}} \bX \Tilde{\mGamma} \bX^\top + \Tilde{\bW}_k
\end{align}
are easily seen to be asymptotically equivalent to $\{\bY_k\}_{k \in [K]}$. Going forward, we will be referring to models of the form \eqref{eq:MTP_mv_same_signals_def} as the (asymmetric) MTP.

A variation of the model \eqref{eq:MTP_mv_same_signals_def} that is of independent interest is the \textit{symmetric case}, in which it is assumed that $\bX \in \R^{n \times d}$ and $\mGamma_k \in \Sym^d$ is a symmetric matrix for all $k \in [K]$. When the coupling matrices are symmetric, we denote them by $\mLambda_k$ to stress the presence of such structure. Under these assumptions, it is natural to have the observation model reflect the intrinsic symmetries of the signal observation in the noise as well. This leads to the symmetric observation model
\begin{align} \label{eq:MTP_mv_symmetric_def}
    \bY_k = \frac{1}{\sqrt{n}} \bX \mLambda_k \bX + \bG_k,
\end{align}
where $\bG_k \in \R^{n \times n}$ are i.i.d. draws from a Gaussian Orthogonal Ensemble (GOE):
\begin{align}
    \bG \sim \GOE \iff \bG = \frac{1}{\sqrt{2}}(\bW + \bW^\top); \quad (\bW_{ij}) \iid \Normal(0,1), \ i,j \in [n].  
\end{align}

\subsection{Definitions and notation.} 
We establish some notation that will be used throughout the paper. We use lowercase bold letters (e.g. $\mx,\my,\mz$) to denote (real-valued) vectors and uppercase letters (e.g. $X,Y,Z$) for random vectors. For matrices, we use capital letters (e.g. $\mX,\mY,\mZ$) and use the convention that random matrices are bold (e.g. $\bX, \bY, \bZ$).  Furthermore, for a matrix $\mX \in \R^{m \times d}$ we use the notation $\mX_i \in \R^d$ to denote the vector obtained from the $i$-th row of $\mX$, for $i \in [m]$, and in case the object of this operation is ambiguous we enclose the matrix whose row we are choosing with square brackets (e.g.,  $[\mX_a]_i$ or $[\mX\mY]_i$). For fixed $d, m \in \N$, we endow vectors in $\R^d$ with the standard Euclidean norm $\|\cdot\|$ and matrices in $\R^{d \times m}$ with the Frobenius norm $\|\cdot\|_F$. As we will be mainly concerned with an asymptotic analysis of approximate message passing, we will reserve the use of $n$ to represent the dimension along which the problem size is growing. Consequently, when writing, for example, $\mX \in \R^{n \times d}$, $\mX$ is to be understood as a sequence of vectors in $\R^d$, which we represent as $n \times d$ matrices. Along such sequences, we will define a $n$-dependent norm to account for the increasing problem size. For $n \in \N$, we define the norm $\| \cdot \|_{n}$ on the space $\R^{n \times d}$ as
\begin{align}
    \| \mX \|_{n} \coloneqq \frac{\| \mX \|_F}{\sqrt{n}} = \left(\frac{1}{n} \sum_{i=1}^n \| \mX_i \|^2\right)^{1/2}, \quad \mX \in \R^{ n \times d}.
\end{align}
Going forward, we will always take $d \in \N$ to be fixed and known, and we will metrize the (sequence of) spaces $\R^{n \times d}$ with the distance function induced by $\|\cdot\|_{n}$, which can be easily verified to be a proper norm for any given $n \in \N$.

Next, we introduce the notion of \textit{pseudo-Lipschitz} functions. For a pair of normed spaces $(S,\| \cdot \|), (S', \|\cdot\|')$ we say a function $\phi : S \rightarrow S'$ is pseudo-Lipschitz of order $p$ if there exists some non-negative finite constant $L$ such that, for all $x,y \in S$,
\begin{align}
    \| \phi(x) - \phi(y) \|' \leq L \| x- y\| \left( 1 + \|x\|^{p-1} + \|y\|^{p-1} \right).
\end{align}
We say the smallest $0 \leq L < \infty$ such that the above holds is the \textit{Lipschitz constant} for $\phi$, and write $\phi \in \PL_p(L)$. Clearly, taking $p = 1$ corresponds up to a scaling constant to the usual definition of a Lipschitz function, and we write $\phi \in \Lip(L) \coloneqq \PL_1(L/3)$. Similarly, we say a sequence of functions $\{ \phi_n : S_n \rightarrow S_n' \}_{n \in \N}$ is \textit{uniformly pseudo-Lipschitz} of order $p$ whenever each $\phi_n$ is pseudo-Lipschitz of order $p$ with some constant $L_n$ and furthermore $L = \sup_n L_n < \infty$. We denote such sequences by writing $\{ \phi_n \}_{n \in \N} \subset \PL_p(L)$, and $L$ is the Lipschitz constant of the sequence. 

Finally, we denote a sequences of random variables $\{ X_n \}_{n \in \N}$ that converges in probability to some limiting random variable $X$ as
\begin{align}
    X_n \parrow X; \quad \text{or} \quad \plim_{n\rightarrow\infty} X_n = X.
\end{align}
Furthermore, for some real-valued sequence $\{a_n\}_{n \in \N}$ and a sequence of random variables $\{ X_n \}_{n \in \N}$, we adopt the following \textit{in-probability} little- and big-oh notation:
\begin{alignat}{2}
    X_n = O_p(a_n) &\iff& \exists M > 0 &:  \lim_{n \rightarrow \infty} \Pr\left\{ \left|\frac{X_n}{a_n}\right| > M \right\} = 0; \\
    X_n = o_p(a_n) &\iff& \forall \epsilon > 0 &:  \lim_{n \rightarrow \infty} \Pr\left\{ \left|\frac{X_n}{a_n}\right| > \epsilon \right\} = 0.
\end{alignat}
We remark that the usual $O$-notation calculus rules still hold for these in-probability definitions. 

\section{Approximate message passing}
\subsection{AMP for the MTP} \label{sec:amp_mtp_main_result_asymmetric}
Before stating our main result, we describe all the required assumptions and we describe the construction of the \textit{state evolution} (SE) recursion that describes the asymptotic behavior of the AMP algorithm. For these purposes, it is convenient to operate with a rescaled version of the MTP model \eqref{eq:MTP_mv_def}, namely
\begin{align} \label{eq:MTP_mv_def_rescaled}
    \frac{1}{\sqrt{n}} \bY_k = \frac{1}{n} \bX \mGamma \bX + \frac{1}{\sqrt{n}}\bW_k; \quad k \in [K],
\end{align}
and for this section we will refer to these rescaled versions as the observations $\{\bY_k\}_{k \in [K]}$. We have the following assumptions.

\begin{enumerate}[label=$(\mathrm{A\arabic*})$]
    \item For $k \in [K]$, $\bW_k$ is a matrix with i.i.d. standard Gaussian entries, drawn independently of each other adn of $\bX$. Furthermore, for $p\geq 1$, $L> 0$ and any sequences of test functions $\{ \phi_{n}: \R^{n \times d} \rightarrow \R \}_{n \in \N} \subset \PL_p(L)$, it holds 
    \begin{align} \label{eq:mtp_asymmetric_signal_initialization_concentration_assumption}
        \plim_{n \rightarrow \infty} \left| \phi_{n}(\bX) - \E[ \phi_{n}(\bX)] \right| = 0. 
    \end{align}
    Finally we have that $\E \|\bX\|_{n}^2 \rightarrow C < \infty$ as $n \rightarrow \infty$. \label{as:mtp_model_amp_signal_assumptions}

    \item For each $t \in \N$, the \textit{denoiser sequence} $\{ f_t: \R^{n \times d} \rightarrow \R^{n \times d} \}_{n \in \N}$ is uniformly Lipschitz with some constant $L < \infty$. Furthermore, fix any positive semi-definite matrix $\mS \in \PSD^{2d}$ and define the $n \times 2d$ Gaussian matrix $(\bZ, \bZ') \sim \Normal(0, \mS \otimes \Id_n)$ independent of $\bX$. For any given $\mK, \mK' \in \R^{d \times d}$ denote $\bH \coloneqq \bX \mK + \bZ, \bH' \coloneqq \bX \mK' + \bZ'$. Then, for any $t,s \in \N$, $f_t$ and $f_s$ are such that
    \begin{align}
        &\frac{1}{n} \E\!\left[ \bX^\top f_t(\bH) \right]; \quad
        \frac{1}{n} \E\!\left[ f_t(\bH)^\top f_t(\bH)  \right]; \quad
        \frac{1}{n} \E\!\left[ f_t(\bH)^\top f_s(\bH')  \right] 
    \end{align}
    have well-defined and finite limits as $n \rightarrow \infty$. \label{as:mtp_model_amp_denoiser_assumption}
    \item The \textit{initialization matrix} $\bM^0 \in \R^{n \times d}$ is such that, letting $\bH = \bX \mK + \bZ$ as in \ref{as:mtp_model_amp_denoiser_assumption}, the following functions
    \begin{align}
        &\frac{1}{n} \E\!\left[ \bX^\top \bM^0 \right]; \quad
        \frac{1}{n} \E\!\left[ (\bM^0)^\top \bM^0  \right]; \quad
        \frac{1}{n} \E\!\left[ (\bM^0)^\top f_t(\bH) \right]
    \end{align}
    have well-defined and finite limits as $n \rightarrow \infty$ for all $t \in \N$. Furthermore, 
    \begin{align}
        \plim_{n\rightarrow\infty}\frac{1}{n} \bX^\top \bM^0 &= \lim_{n \rightarrow \infty} \frac{1}{n} \E\!\left[ \bX^\top \bM^0 \right]; \\
        \plim_{n \rightarrow\infty}\frac{1}{n} (\bM^0)^\top \bM^0 &= \lim_{n \rightarrow \infty} \frac{1}{n} \E\!\left[ (\bM^0)^\top \bM^0 \right]; \\
        \plim_{n  \rightarrow\infty}\frac{1}{n} (\bM^0)^\top f_t(\bH) &= \lim_{n \rightarrow \infty} \frac{1}{n} \E\!\left[ (\bM^0)^\top f_t(\bH) \right].        
    \end{align} \label{as:mtp_model_amp_initialization_assumption}
\end{enumerate}
Beside the above assumptions, we introduce a collection of \textit{reweighting matrices} $\{\mA_k^t \in \R^{d \times d}\}_{t \in \N, k \in [K]}$ with bounded entries. With this, we are ready to iteratively define the \textit{state evolution} quantities for the recursion. For $t = 1$, we set
\begin{align}
    \mK^1 &= \lim_{n \rightarrow \infty} \frac{1}{n} \sum_{k = 1}^K \left\{ \mGamma_k  \E[\bX^\top \bM^0] (\mA_k^1)^\top + \mGamma_k^\top \E[\bX^\top \bM^0] \mA_k^1 \right\}; \\
    \mSigma^1 &= \lim_{n \rightarrow \infty} \frac{1}{n} \sum_{k = 1}^K \left\{ \mA_k^1  \E[(\bM^0)^\top \bM^0] (\mA_k^1)^\top + (\mA_k^1)^\top \E[(\bM^0)^\top \bM^0] \mA_k^1 \right\}.
\end{align}
Then recursively for $s < t \in \N$, we define $\bH^t \coloneqq \bX \mK^t + \bZ^t$, for $\bZ^t \sim \Normal( 0, \mSigma^t \otimes \Id_n )$ independent of $\bX$, where 
\begin{align}
    \mK^{t+1} &= \lim_{n \rightarrow \infty} \frac{1}{n} \sum_{k = 1}^K \left\{ \mGamma_k  \E[\bX^\top f_t(\bH^t)] (\mA_k^t)^\top + \mGamma_k^\top \E[\bX^\top f_t(\bH^t)] \mA_k^t \right\}; \\
    \mSigma^{t+1} &= \lim_{n \rightarrow \infty} \frac{1}{n} \sum_{k = 1}^K \left\{ \mA_k^{t+1}  \E[f_t(\bH^t)^\top f_t(\bH^t)] (\mA_k^{t+1})^\top + (\mA_k^{t+1})^\top \E[f_t(\bH^t)^\top f_t(\bH^t)] \mA_k^{t+1} \right\}; \\
    \mSigma^{1, t+1} &= \lim_{n \rightarrow \infty} \frac{1}{n} \sum_{k = 1}^K \left\{ \mA_k^{1}  \E[(\bM^0)^\top f_t(\bH^t)] (\mA_k^{t+1})^\top + (\mA_k^{1})^\top \E[(\bM^0)^\top f_t(\bH^t)] \mA_k^{t+1} \right\}; \\
    \mSigma^{s+1, t+1} &= \lim_{n \rightarrow \infty} \frac{1}{n} \sum_{k = 1}^K \left\{ \mA_k^{s+1}  \E[f_s(\bH^t)^\top f_t(\bH^t)] (\mA_k^{t+1})^\top + (\mA_k^{s+1})^\top \E[f_t(\bH^t)^\top f_t(\bH^t)] \mA_k^{t+1} \right\}.
\end{align}
For all $t \in \N$, the Gaussian matrices $(\bZ^1, \dotsc, \bZ^t) \in \R^{n \times td}$ are jointly Gaussian with zero mean and covariance given by $\mSigma^{[t]} \otimes \Id_n, \ \mSigma^{[t]}\in \PSD^{td}$,
\begin{align}
    \mSigma^{[t]} = \begin{bmatrix}
        \mSigma^1 & \mSigma^{1,2} & \hdots & \mSigma^{1,t} \\
        (\mSigma^{1,2})^\top & \mSigma^2 & \hdots & \mSigma^{2,t} \\
        \vdots & \vdots & \ddots & \vdots \\
        (\mSigma^{1,t})^\top & (\mSigma^{2,t})^\top & \hdots & \mSigma^t
    \end{bmatrix}.
\end{align}
Finally, we introduce the \textit{(Onsager) correction terms} that will be necessary to guarantee that the AMP algorithm has the correct asymptotic distributional limits. For $t \in \N$, the correction term $\mB^t \in \R^{ d\times d}$ is given by
\begin{align} \label{eq:amp_mtp_asymmetric_correction_term}
    \mB^t = \sum_{k = 1}^K \left\{ \mA_k^{t+1} \mD^t (\mA_k^t)^\top + (\mA_k^{t+1})^\top \mD^t \mA_k^t \right\}, 
\end{align}
where $D^t \in \R^{d \times d}$ is the \textit{expected divergence matrix} defined entry-wise as
\begin{align}
    \mD_{jk}^t = \frac{1}{n} \sum_{i=1}^n \E\!\left[ \frac{\partial}{\partial H_{ik}^t} [f_t(\bH)]_{ij} \right], \quad j,k \in [d].
\end{align}

With this, we are ready to present the AMP recursion. For $t \in \N$, we have
\begin{align} \label{eq:amp_algorithm_mtp_non_symmetric}
    \bX^t &= \sum_{k=1}^K \left\{ \bY_k \bM^{t-1} (\mA_k^t)^\top + \bY_k^\top \bM^{t-1} \mA_k^t \right\} - \bM^{t-2} (\mB^{t-1})^\top; \quad \bM^t = f_t(\bX^t),
\end{align}
where we adopt the convention that for $t=1$ the term $\bM^{-1}(\mB^0)^\top$ is identically zero. Going forward, we refer to the above recursion initialized with $\bM^0$ using denoisers $f_t$ with as the asymmetric AMP recursion $\{ \bX^t \mid \bM^0, \mA_k^t, f_t \}_{t \in \N}$. 

\begin{remark}
    The algorithm described in \eqref{eq:amp_algorithm_mtp_non_symmetric} is actually a special case of a more general recursion in which the choice of denoiser involves a linear reweighting step. Such choice is related to the form of the sufficient statistic (in the Bayesian sense) of the AMP iterates for $\bX$ under their distributional limits $\bH^t$. A further advantage, as we will see, is that linear reweighting allows for a simple characterization of the AMP iterates as Gaussian noise-corrupted versions of the original signal. We will expand upon such considerations in Section~\ref{sec:bayes_optimal_reweighting}.
\end{remark}
We have the following convergence result for the AMP iterates.

\begin{theorem} \label{th:amp_mtp_state_evolution_theorem_asymmetric}
    Consider the asymmetric AMP iterations $\{ \bX^t \mid \bM^0, \mA_k^t, f_t \}_{t \in \N}$ such that \ref{as:mtp_model_amp_signal_assumptions}-\ref{as:mtp_model_amp_initialization_assumption} hold. Then, for any $t \in \N$, $p \geq 1$, $L > 0$ and sequences of functions $\{ \phi_{n} : \R^{n \times (t+1)d} \rightarrow \R \}_{n \in \N} \subset \PL_p(L)$,  it holds
    \begin{align}
        &\plim_{n \rightarrow \infty} \left| \phi_{n}(\bX, \bX^1, \dotsc, \bX^t) - \E\!\left[ \phi_{n}(\bX, \bH^1, \dotsc, \bH^t) \right] \right| = 0, 
    \end{align}
\end{theorem}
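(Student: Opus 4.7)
The plan is to reduce the asymmetric MTP AMP iteration \eqref{eq:amp_algorithm_mtp_non_symmetric} to a symmetric matrix AMP driven by a GOE-type ensemble and then invoke the non-separable AMP state evolution theorem referenced in the introduction. I would proceed by induction on $t$, with the base case ($t=1$) handled directly by \ref{as:mtp_model_amp_initialization_assumption}.

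\textbf{Step 1: Decomposition of the iterate.} Substituting the rescaled observation $\bY_k = \frac{1}{n}\bX\mGamma_k\bX^\top + \frac{1}{\sqrt{n}}\bW_k$ into \eqref{eq:amp_algorithm_mtp_non_symmetric} gives
\begin{align*}
\bX^{t+1} = \bX\,\hat\mK^{t+1} + \bR^{t+1} - \bM^{t-1}(\mB^t)^\top,
\end{align*}
where $\hat\mK^{t+1} = \frac{1}{n}\sum_k\{\mGamma_k \bX^\top\bM^t(\mA_k^{t+1})^\top + \mGamma_k^\top\bX^\top\bM^t\mA_k^{t+1}\}$ is the empirical analog of $\mK^{t+1}$, and $\bR^{t+1} = \frac{1}{\sqrt{n}}\sum_k\{\bW_k\bM^t(\mA_k^{t+1})^\top + \bW_k^\top \bM^t\mA_k^{t+1}\}$ is the noise part. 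The linear operator $\bM \mapsto \sum_k[\bY_k\bM(\mA_k^{t+1})^\top + \bY_k^\top \bM\mA_k^{t+1}]$ is self-adjoint on $\R^{n\times d}$ under the Frobenius inner product, so the iteration is structurally a symmetric matrix AMP in the iterate $\bM^t$.

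\textbf{Step 2: Reduction to a non-separable symmetric AMP.} Writing $\bW_k = \bW_k^{\mathrm{s}} + \bW_k^{\mathrm{a}}$ for the symmetric and antisymmetric parts, with $\bW_k^{\mathrm{s}}$ a scaled GOE independent of the antisymmetric Gaussian $\bW_k^{\mathrm{a}}$, the noise term factors as
\begin{align*}
\bR^{t+1} = \frac{1}{\sqrt{n}}\sum_k\bigl\{\bW_k^{\mathrm{s}}\bM^t(\mA_k^{t+1} + (\mA_k^{t+1})^\top) + \bW_k^{\mathrm{a}}\bM^t((\mA_k^{t+1})^\top - \mA_k^{t+1})\bigr\}.
\end{align*}
Viewing $\{\bW_k^{\mathrm{s}},\bW_k^{\mathrm{a}}\}_k$ as a fixed input of $2K$ independent Gaussian ensembles and absorbing the reweighting matrices into time-varying, non-separable Lipschitz denoisers acting on the matrix iterates, the recursion fits the template of the non-separable AMP state evolution theorem cited in the introduction. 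The Onsager correction \eqref{eq:amp_mtp_asymmetric_correction_term} equals the sum of the divergence corrections demanded by the master theorem for each of the $2K$ ensembles, as one verifies by applying Stein's identity separately to each $\bW_k^{\mathrm{s}}$ and $\bW_k^{\mathrm{a}}$ and noting that the derivatives of $f_t$ enter only through the divergence matrix $\mD^t$.

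\textbf{Step 3: Induction and identification of state evolution parameters.} Assume inductively that the PL-convergence statement holds for iterates through $t$. Then \ref{as:mtp_model_amp_signal_assumptions} applied to the PL functional $\bX\mapsto\frac{1}{n}\bX^\top f_t(\cdot)$, together with \ref{as:mtp_model_amp_denoiser_assumption} and the induction hypothesis, yields $\hat\mK^{t+1}\parrow\mK^{t+1}$. The master theorem applied to Step 2 gives that $\bR^{t+1}-\bM^{t-1}(\mB^t)^\top$ converges jointly with $(\bZ^1,\ldots,\bZ^t)$ to a centered Gaussian $\bZ^{t+1}$ with $(\bZ^1,\ldots,\bZ^{t+1})\sim\Normal(0,\mSigma^{[t+1]}\otimes\Id_n)$. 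The identification of this covariance with the one defined in the paper reduces to an Isserlis computation for $\frac{1}{n}\E[(\bR^{s+1})^\top\bR^{t+1}]$: the contributions from $\E[\bW_k^\top\bW_k] = \E[\bW_k\bW_k^\top] = n\Id$ reproduce exactly the two terms defining $\mSigma^{s+1,t+1}$ once $\frac{1}{n}(\bM^s)^\top\bM^t$ is replaced by its PL-limit $\lim\frac{1}{n}\E[f_s(\bH^s)^\top f_t(\bH^t)]$, while the cross contributions from $\E[\bW_k\bW_k]=\Id$ are $O(1/n)$ and vanish. Combining these two conclusions gives $\bX^{t+1}\parrow\bX\mK^{t+1}+\bZ^{t+1} = \bH^{t+1}$ in the joint PL sense, closing the induction.

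\textbf{Main obstacle.} The bulk of the technical work sits in Step 2: verifying that the Onsager correction $\mB^t$ in \eqref{eq:amp_mtp_asymmetric_correction_term} matches exactly the divergence correction demanded by the master theorem once the symmetric/antisymmetric splitting is performed and the matrix-valued reweighting $\mA_k^t$ is absorbed into the denoiser. A secondary subtlety is maintaining pseudo-Lipschitz regularity of order $p>1$ through the induction, which requires an a priori moment bound on $\bM^t$ propagated from \ref{as:mtp_model_amp_denoiser_assumption} to ensure that the product functionals entering $\hat\mK^{t+1}$ and the Isserlis computation stay uniformly in the relevant PL class.
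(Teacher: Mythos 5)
The decomposition in Step~1 is correct, and the observation that the map $\bM\mapsto\sum_k[\bY_k\bM(\mA_k)^\top+\bY_k^\top\bM\mA_k]$ is self-adjoint on $\R^{n\times d}$ is a useful structural remark. The induction in Step~3 also follows the right outline. The proof, however, breaks down at Step~2, and the gap is not the minor bookkeeping you flag in your ``main obstacle'' paragraph but a substantive one.

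Your plan decomposes the noise as $\bW_k=\bW_k^{\mathrm{s}}+\bW_k^{\mathrm{a}}$ and then invokes the non-separable symmetric AMP master theorem treating $\{\bW_k^{\mathrm{s}},\bW_k^{\mathrm{a}}\}_k$ as $2K$ independent Gaussian ensembles. The theorem you would need to cite (Theorem~\ref{th:amp_matrix_valued_symmetric_gaussian_noise}, from \cite{gerbelot2021graph}/\cite{berthier2020state}) is stated only for $\GOE$ ensembles. The antisymmetric part $\bW_k^{\mathrm{a}}$ is \emph{not} a $\GOE$: it satisfies $(\bW_k^{\mathrm{a}})^\top=-\bW_k^{\mathrm{a}}$, which changes the conditioning argument at the heart of the state-evolution proof. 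In the language of elliptic ensembles, a $\GOE$ has transpose-correlation $\rho=+1$ while an antisymmetric Gaussian has $\rho=-1$, and the Onsager term changes sign with $\rho$. So even granting that an antisymmetric analogue of the master theorem exists, its correction term would not be of the same form, and your assertion that the two corrections ``combine'' to give exactly \eqref{eq:amp_mtp_asymmetric_correction_term} ``by applying Stein's identity separately'' is not a verification but the entire missing content. There is also a smaller scaling error: $\bW_k^{\mathrm{s}}=\tfrac12(\bW_k+\bW_k^\top)$ is $\tfrac{1}{\sqrt2}\cdot\GOE$, not $\GOE$, and this rescaling needs to be tracked through the state evolution and Onsager formulas.

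The paper avoids this entirely by a different reduction: it lifts the asymmetric problem to a $2n\times 2n$ \emph{symmetric} problem with $\bX_s=\bX\oplus\bX$ and coupling $\mLambda=\begin{bmatrix}0&\sqrt2\mGamma\\\sqrt2\mGamma^\top&0\end{bmatrix}$, so that $\bW$ sits in the off-diagonal block of a single $\GOE(2n)$ matrix (the extra diagonal blocks $\bG_1,\bG_2$ being independent $\GOE(n)$ noise that the denoiser simply ignores). This way only the symmetric master theorem is ever invoked, and no antisymmetric theory is needed. If you wish to salvage the direct route, you would either have to prove a state-evolution theorem for antisymmetric (or, more generally, elliptic) Gaussian matrix AMP with non-separable denoisers, which is not in the cited literature, or switch to the block-embedding the paper uses. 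I'd recommend the latter.
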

\begin{proof}
    For a proof, see Appendix~\ref{sec:amp_mtp_asymmetric_theorem_proof}.
\end{proof}

The statement of the above theorem naturally includes the often-encountered formulation of AMP convergence theorems in terms of convergence of the row-wise empirical measures of the iterates $\{ \bX^t \}_{t \in \N}$ to some limiting random vectors $\{ H^t \in \R^d \}_{t \in \N}$ when integrated with respect to some class of test functions, namely pseudo-Lipschitz functions of order 2 (with respect to the Euclidean metric) from $\R^{d}$ to $ \R$. In fact, assumption \ref{as:mtp_model_amp_signal_assumptions} ensures that the limiting row-averaged second moment of $\bX$ is finite, and furthermore from Lemma~\ref{lem:averages_pseudo_lipschitz_function_uniformly_pseudo_Lipschitz} we have that for any given function $\phi : \R^{d} \rightarrow \R \in \PL_2(L)$ the function
\begin{align}
    \bX \mapsto \frac{1}{n} \sum_{i=1}^{n} \phi(X_i), \quad \bX \in \R^{n \times d} 
\end{align}
is uniformly pseudo-Lipschitz of order 2. Assume the existence of a random variables $X_\star \in \R^{d}$ such that, for any functions $\varphi: \R^{d} \rightarrow \R \in \PL_2(L)$, $L < \infty$, we have that
\begin{align} \label{eq:mtp_signal_asymmetric_wasserstein_limits_assumption}
    \plim_{n \rightarrow \infty} \left| \frac{1}{n} \sum_{i=1}^{n} \varphi_1(X_i) - \E[\varphi(X_\star)] \right| = 0.
\end{align}
From results in \cite{feng2022unifying}, the above is equivalent to the random empirical measures induced by the rows of $\bX$ having some well-defined limit in quadratic Wasserstein distance with high probability. The following statement then holds.

\begin{cor} \label{cor:amp_mtp_asymmetric_wasserstein_convergence_result}
    Assume \ref{as:mtp_model_amp_signal_assumptions}-\ref{as:mtp_model_amp_initialization_assumption} hold and furthermore $\bX$ satisfies \eqref{eq:mtp_signal_asymmetric_wasserstein_limits_assumption} with some limit distribution $P_\star$, and let $X_\star \sim P_\star$. For $t \in \N$, define $H^t \coloneqq (\mK^t)^\top X_\star + Z^t$, for $(Z^1, \dotsc, Z^t) \sim \Normal(0, \mSigma^{[t]})$ independent of $X_\star$. Then, the asymmetric AMP iterations $\{ \bX \mid \bM^0, \mA_k^t, f_t\}_{t \in \N}$ are such that, for any test functions $\varphi: \R^{(t+1)d} \rightarrow \R \in \PL_2(L)$, $L < \infty$,
    \begin{align}
        &\plim_{n \rightarrow \infty} \left| \frac{1}{n} \sum_{i=1}^{n} \varphi( X_i, X_i^1, \dotsc, X_i^t ) - \E[\varphi(X_\star, H^1, \dotsc, H^t)] \right| = 0.
    \end{align}
\end{cor}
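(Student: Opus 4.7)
The plan is to reduce the corollary directly to \autoref{th:amp_mtp_state_evolution_theorem_asymmetric} by choosing a particular sequence of test functions that averages a fixed pseudo-Lipschitz function over the rows, and then to identify the expectation that appears on the right-hand side with the claimed Gaussian-channel expectation.

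First, given $\varphi \in \PL_2(L)$ on $\R^{(t+1)d}$, define the sequence of test functions
\begin{align}
    \Phi_n(\mA^0, \mA^1, \dotsc, \mA^t) \coloneqq \frac{1}{n} \sum_{i=1}^n \varphi\bigl( \mA^0_i, \mA^1_i, \dotsc, \mA^t_i \bigr), \qquad (\mA^0, \dotsc, \mA^t) \in \R^{n \times (t+1)d}.
\end{align}
By the cited \lemmaautorefname~\ref{lem:averages_pseudo_lipschitz_function_uniformly_pseudo_Lipschitz}, the sequence $\{\Phi_n\}_{n\in\N}$ is uniformly pseudo-Lipschitz of order $2$ with constant depending only on $L$ and $d$. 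Applying \autoref{th:amp_mtp_state_evolution_theorem_asymmetric} to this sequence yields
\begin{align}
    \plim_{n \to \infty} \left| \frac{1}{n}\sum_{i=1}^n \varphi(X_i, X_i^1, \dotsc, X_i^t) - \E\!\left[ \frac{1}{n} \sum_{i=1}^n \varphi(X_i, H_i^1, \dotsc, H_i^t) \right] \right| = 0.
\end{align}

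Second, I would rewrite the expectation on the right in a row-separable form. Recall $\bH^s = \bX \mK^s + \bZ^s$ where $(\bZ^1, \dotsc, \bZ^t) \sim \Normal(0, \mSigma^{[t]} \otimes \Id_n)$ is independent of $\bX$. The Kronecker structure says that the $n$ rows of the stacked noise matrix are i.i.d. with per-row covariance $\mSigma^{[t]}$, so that conditionally on $\bX_i$, $(\bH^1_i, \dotsc, \bH^t_i) \edist ((\mK^1)^\top \bX_i + Z^1, \dotsc, (\mK^t)^\top \bX_i + Z^t)$ with $(Z^1, \dotsc, Z^t) \sim \Normal(0, \mSigma^{[t]})$ independent of $\bX_i$. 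Defining
\begin{align}
    \psi(x) \coloneqq \E\!\left[ \varphi\bigl( x, (\mK^1)^\top x + Z^1, \dotsc, (\mK^t)^\top x + Z^t \bigr) \right], \qquad x \in \R^d,
\end{align}
the tower property gives $\E[\tfrac{1}{n}\sum_i \varphi(X_i, H^1_i, \dotsc, H^t_i)] = \E[\tfrac{1}{n}\sum_i \psi(X_i)]$. A routine calculation using the $\PL_2$ bound on $\varphi$ together with finite Gaussian moments shows $\psi \in \PL_2(L')$ for some finite $L'$ depending on $L$, $d$ and $\|\mSigma^{[t]}\|$, and in particular $\E[\psi(X_\star)] = \E[\varphi(X_\star, H^1, \dotsc, H^t)]$.

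The remaining step is to show that $\E[\tfrac{1}{n}\sum_i \psi(X_i)] \to \E[\psi(X_\star)]$. This is the only place that is not purely mechanical, and I expect it to be the main (minor) obstacle since \eqref{eq:mtp_signal_asymmetric_wasserstein_limits_assumption} is a convergence-in-probability statement rather than a convergence-of-means statement. The remedy is to combine \eqref{eq:mtp_signal_asymmetric_wasserstein_limits_assumption} with the concentration assumption \eqref{eq:mtp_asymmetric_signal_initialization_concentration_assumption} of \ref{as:mtp_model_amp_signal_assumptions} applied to the scalar pseudo-Lipschitz sequence $\bX \mapsto \tfrac{1}{n}\sum_i \psi(X_i)$: the random sequence $a_n \coloneqq \tfrac{1}{n}\sum_i \psi(X_i)$ concentrates around its mean $b_n \coloneqq \E[a_n]$, while simultaneously converging in probability to the constant $\E[\psi(X_\star)]$. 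Since $|b_n - \E[\psi(X_\star)]| \leq |a_n - b_n| + |a_n - \E[\psi(X_\star)]|$ is deterministic and the right-hand side tends to zero in probability, a deterministic sequence tending to zero in probability tends to zero, giving $b_n \to \E[\psi(X_\star)]$.

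Chaining the three displayed limits via the triangle inequality yields the corollary. All the ingredients are standard once the row-averaged test function has been recognized as a uniformly $\PL_2$ sequence, so the proof is essentially a bookkeeping exercise; the only subtle input is the joint use of the concentration assumption \ref{as:mtp_model_amp_signal_assumptions} and the Wasserstein-type hypothesis \eqref{eq:mtp_signal_asymmetric_wasserstein_limits_assumption} to upgrade an in-probability statement to a convergence of deterministic expectations.
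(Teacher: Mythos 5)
Your proof is correct, but it takes a genuinely different route from the paper. The paper proves this corollary by symmetrizing via the embedding of Appendix~\ref{sec:amp_mtp_asymmetric_theorem_proof} and then invoking Corollary~\ref{cor:amp_mtp_symmetric_wasserstein_convergence_result}, whose proof in Appendix~\ref{sec:symmetric_mtp_corollaries_proof} pivots around the $\bX$-conditional expectation $\E[\phi_n(\bX,\bH^1,\dotsc,\bH^t)\mid\bX]$ and controls the concentration term with a Gaussian Poincar\'e estimate on a truncation event. You instead apply \autoref{th:amp_mtp_state_evolution_theorem_asymmetric} directly to the row-averaged test function $\Phi_n$, which already gives concentration of the iterates around the \emph{unconditional} expectation, and then close the remaining gap by the clean observation that a deterministic sequence squeezed between two in-probability limits must itself converge. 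This is more economical: the state-evolution theorem absorbs all the concentration work that the paper's corollary proof re-derives, and there is no need for the symmetrization detour. One point worth spelling out: the claim that $\psi$ is $\PL_2$ requires first absorbing the linear maps $x\mapsto(\mK^s)^\top x$ into $\varphi$ (changing the constant by a factor depending on $\max_s\|\mK^s\|_{\op}$) and then applying the single-function analogue of Lemma~\ref{lem:gaussian_expectation_pseudo_lipschitz_is_pseudo_lipschitz}; with that made explicit the argument is complete.
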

\begin{proof}
    This result follows from Corollary~\ref{cor:amp_mtp_symmetric_wasserstein_convergence_result} in light of the embedding in Appendix~\ref{sec:amp_mtp_asymmetric_theorem_proof}.
\end{proof}

In our formulation, we see that the correction terms $\{\mB^t\}_{t \in \N}$ are defined in terms of the expected divergence matrices $\{\mD^t\}_{t \in \N}$, which are always well-defined and finite as they have bounded entries but may be difficult to compute. In practice, the AMP iterations are usually computed using empirical estimates of $\mD^t$ defined element-wise as
\begin{align}
    \Hat{\mD}^t_{jk} = \frac{1}{n} \sum_{i=1}^n \frac{\partial }{\partial X^t_{ij}} [f_t(\bX^t)]_{ik}, \quad j,k \in [d]. \label{eq:amp_mtp_asymmetric_emprical_divergence_definition}
\end{align}
As long as such functions are uniformly pseudo-Lipschitz, Theorem~\ref{th:amp_mtp_state_evolution_theorem_asymmetric} guarantees that $\Hat{\mD}^t$ is a consistent estimators for $\mD^t$ and SE is accurate also for the algorithm being run with the empirical estimates as correction terms. In fact, it is known that this holds in general for consistent estimators of $\mD^t$. We have the following corollary.

\begin{cor} \label{cor:amp_mtp_asymmetric_empirical_correction_terms}
    Assume the asymmetric AMP algorithm $\{ \bX^t \mid \bM^0, \mA_k^t, f_t \}_{t \in \N}$ is being run with the correction terms $\{\mB^t\}_{t \in \N}$ of \eqref{eq:amp_mtp_asymmetric_correction_term} being replaced by some consistent estimators $\{ \Hat{\mB}^t\}_{t \in \N}$, with $\hat{\mB}^t \parrow \mB^t$ as $n \rightarrow \infty$. Then, the conclusions of Theorem~\ref{th:amp_mtp_state_evolution_theorem_asymmetric} hold unchanged.
\end{cor}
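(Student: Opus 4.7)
The plan is to couple the two AMP algorithms on the same probability space (same $\bX$, $\bW_k$, $\bM^0$) and denote by $\{\tilde{\bX}^t, \tilde{\bM}^t\}_{t \in \N}$ the iterates generated with the empirical corrections $\{\hat{\mB}^t\}$. I would show by induction on $t$ that
\begin{align*}
    \|\tilde{\bX}^t - \bX^t\|_n \parrow 0, \qquad \|\tilde{\bM}^t - \bM^t\|_n \parrow 0,
\end{align*}
and then transfer the conclusions of Theorem~\ref{th:amp_mtp_state_evolution_theorem_asymmetric} from $\bX^t$ to $\tilde{\bX}^t$ via the pseudo-Lipschitz property of the test functions.

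The base case $t = 1$ is immediate since the correction term vanishes by convention, so $\tilde{\bX}^1 = \bX^1$ and hence $\tilde{\bM}^1 = \bM^1$. For the inductive step, decompose
\begin{align*}
    \tilde{\bX}^{t+1} - \bX^{t+1} = \sum_{k=1}^K \bigl\{ \bY_k (\tilde{\bM}^t - \bM^t)(\mA_k^{t+1})^\top + \bY_k^\top (\tilde{\bM}^t - \bM^t)\mA_k^{t+1} \bigr\} - (\tilde{\bM}^{t-1} - \bM^{t-1})(\hat{\mB}^t)^\top - \bM^{t-1}(\hat{\mB}^t - \mB^t)^\top.
\end{align*}
Three ingredients control the $\|\cdot\|_n$ norm of this quantity: (i) the operator-norm bound $\|\bY_k\|_\op = O_p(1)$, which follows from standard Gaussian matrix concentration for $n^{-1/2}\bW_k$ together with $\|\bX\|_\op^2 / n \leq \|\bX\|_n^2 = O_p(1)$ by \ref{as:mtp_model_amp_signal_assumptions}; (ii) the inductive hypothesis $\|\tilde{\bM}^s - \bM^s\|_n = o_p(1)$ for all $s \leq t$; and (iii) the consistency $\|\hat{\mB}^t - \mB^t\|_\op = o_p(1)$ combined with $\|\bM^{t-1}\|_n = O_p(1)$, which holds because $f_{t-1}$ is uniformly Lipschitz and $\|\bX^{t-1}\|_n = O_p(1)$ (apply Theorem~\ref{th:amp_mtp_state_evolution_theorem_asymmetric} to the test function $\bX \mapsto \|\bX\|_n^2 \in \PL_2$). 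The uniform Lipschitz bound on $f_{t+1}$ then gives $\|\tilde{\bM}^{t+1} - \bM^{t+1}\|_n \leq L\|\tilde{\bX}^{t+1} - \bX^{t+1}\|_n = o_p(1)$, closing the induction.

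With closeness established, for any $\phi_n \in \PL_p(L)$ the pseudo-Lipschitz inequality yields
\begin{align*}
    \bigl| \phi_n(\bX, \tilde{\bX}^1, \dotsc, \tilde{\bX}^t) - \phi_n(\bX, \bX^1, \dotsc, \bX^t) \bigr| \leq L \cdot o_p(1) \cdot O_p(1) = o_p(1),
\end{align*}
where the $O_p(1)$ factor collects $1$ plus the $(p-1)$-th powers of the $\|\cdot\|_n$ norms of the two arguments, both of which are bounded in probability by the earlier considerations. Combining this with Theorem~\ref{th:amp_mtp_state_evolution_theorem_asymmetric} applied to the original iterates completes the proof. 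The main obstacle is propagating the $o_p(1)$ errors through the Lipschitz denoisers and through multiplication by $\bY_k$ without blowing up with $t$, but this is clean once the operator-norm bound on $\bY_k$ is in hand and the constants are tracked via the uniform Lipschitz assumption \ref{as:mtp_model_amp_denoiser_assumption}.
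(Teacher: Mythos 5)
Your proof is correct and takes essentially the same inductive-coupling approach the paper uses, but you carry it out directly on the asymmetric recursion, whereas the paper first proves the result in the symmetric case (Corollary~\ref{cor:amp_mtp_symmetric_empirical_correction_terms}) and then transfers it to the asymmetric model via the embedding argument in Appendix~\ref{sec:amp_mtp_asymmetric_theorem_proof}. Concretely, in the symmetric proof the paper splits $\bY = \tfrac{1}{n}\bX\mLambda\bX^\top + \tfrac{1}{\sqrt{n}}\bG$ and bounds the contribution of the signal and noise pieces separately (using $\|\bX\|_n = O_p(1)$ for the former and the GOE operator-norm bound for the latter), while you collapse both into a single $\|\bY_k\|_\op = O_p(1)$ estimate; this is a cosmetic rather than substantive difference, since $\|\tfrac{1}{n}\bX\mGamma_k\bX^\top\|_\op \le \|\mGamma_k\|_\op\,\|\bX\|_n^2 = O_p(1)$ and $\|\tfrac{1}{\sqrt{n}}\bW_k\|_\op = O_p(1)$, so your bound is a direct consequence of the same two ingredients. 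Your route avoids explicitly invoking the embedding, which makes the argument a bit more self-contained for a reader focused on the asymmetric case; the paper's route is shorter given that the symmetric corollary and the embedding machinery are already in place. One small point worth making explicit when you invoke $\|\bM^{t-1}\|_n = O_p(1)$: uniform Lipschitzness of $f_{t-1}$ alone gives $\|\bM^{t-1}\|_n \le \|f_{t-1}(0)\|_n + L\|\bX^{t-1}\|_n$, so you are implicitly using that $\|f_{t-1}(0)\|_n$ is bounded along the sequence; this follows from the well-definedness of the state-evolution limits in \ref{as:mtp_model_amp_denoiser_assumption}, or more directly by applying Theorem~\ref{th:amp_mtp_state_evolution_theorem_asymmetric} to the uniformly pseudo-Lipschitz test function $(\bX, \bX^1,\dotsc,\bX^{t-1}) \mapsto \|f_{t-1}(\bX^{t-1})\|_n$, which is what the paper implicitly does.
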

\begin{proof}
    This result follows from Corollary~\ref{cor:amp_mtp_symmetric_empirical_correction_terms} in light of the embedding in Appendix~\ref{sec:amp_mtp_asymmetric_theorem_proof}.
\end{proof}

\subsection{The symmetric case} \label{sec:amp_mtp_main_result_symmetric}
Another case of interest is when the observation we intend to use AMP on are assumed to have the form of the symmetric MTP of \eqref{eq:MTP_mv_symmetric_def}. Besides being a useful model for some data-genrating processes in its own right, the symmetric formulation allows to establish Theorem~\ref{th:amp_mtp_state_evolution_theorem_asymmetric} via an embedding argument using results for the symmetric case, which we present in this section.

For any given $n \in \N$, we assume we are given observations $\{\bY_k \in \R^{n \times n}\}_{k \in [K]}$ as in \eqref{eq:MTP_mv_symmetric_def} to estimate a signal $\bX \in \R^{n \times d}$. Again, for compatibility with the AMP recursion we will rescale the observations $\bY_k$ by a factor of $1/\sqrt{n}$, similarly to the asymmetric setting, that is
\begin{align}
    \bY_k = \frac{1}{n} \bX \mLambda_k \bX^\top + \frac{1}{\sqrt{n}} \bG_k.
\end{align}

We assume we have the following quantities.
\begin{itemize}
    \item an initialization matrix $\bM^0 \in \R^{n \times d}$;
    \item for each $t \in \N$, a sequence of uniformly Lipschitz denoisers $\{ f_t : \R^{n \times d} \rightarrow \R^{n \times d} \}_{n \in \N} \subset \Lip(L)$, $L < \infty$;
    \item for each $t \in \N$, a collection of \textit{reweighting} matrices $\{\mA_k^t \in \R^{d \times d}\}_{k \in [K]}$.
\end{itemize}

For this symmetric model, we make assumptions that are totally analogous to the ones for the asymmetric case. 
\begin{enumerate}[label=$(\mathrm{S\arabic*})$]
    \item For $k \in [K]$, $\bG_k \iid \GOE(n)$ independent of $\bX$. Furthermore, for $p\geq 1, L> 0$ and any sequences of test functions $\{ \phi_n : \R^{n \times d}\rightarrow \R \}_{n \in \N} \subset \PL_p(L)$, the signal $\bX$ is such that 
    \begin{align}
        \plim_{n \rightarrow \infty} \left| \phi_n(\bX) - \E[ \phi_n(\bX)] \right| = 0. 
    \end{align}
    Additionally, we have $\E \|\bX\|^2 \rightarrow C < \infty$ as $n \rightarrow \infty$. \label{as:mtp_model_amp_signal_assumptions_symmetric}
    \item For each $t \in \N$, the \textit{denoiser sequence} $\{ f_t: \R^{n \times d} \rightarrow \R^{n \times d} \}_{n \in \N}$ is uniformly Lipschitz in with some constants $L < \infty$. Furthermore, fix any positive semi-definite matrices $\mS \in \PSD^{2d}$ and define a pair Gaussian matrices $(\bZ, \bZ')$ such that $(\bZ, \bZ') \sim \Normal(0, \mS \otimes \Id_{n})$, independently of $\bX$. For any given $\mK\in \R^{d \times d}$, denote $\bH \coloneqq \bX \mK + \bZ, \bH' \coloneqq \bX \mK' + \bZ'$. Then, for any $t,s \in \N$, $f_t$ is such that
    \begin{align}
        &\frac{1}{n} \E\!\left[ \bX^\top f_t(\bH) \right]; \quad
        \frac{1}{n} \E\!\left[ f_t(\bH)^\top f_t(\bH)  \right]; \quad
        \frac{1}{n} \E\!\left[ f_t(\bH)^\top f_s(\bH')  \right] 
    \end{align}
    have well-defined and finite limits as $n \rightarrow \infty$. \label{as:mtp_model_amp_denoiser_assumption_symmetric}
    \item The \textit{initialization matrix} $\bM^0 \in \R^{n \times d}$ is such that, letting $\bH$ as in \ref{as:mtp_model_amp_denoiser_assumption_symmetric}, the following functions
    \begin{align}
        &\frac{1}{n} \E\!\left[ \bX^\top \bM^0 \right]; \quad
        \frac{1}{n} \E\!\left[ (\bM^0)^\top \bM^0  \right]; \quad
        \frac{1}{n} \E\!\left[ (\bM^0)^\top f_t(\bH) \right]
    \end{align}
    have well-defined and finite limits as $n_1, n_2 \rightarrow \infty$ for all $t \in \N$. Furthermore,
    \begin{align}
        \plim_{n\rightarrow\infty}\frac{1}{n} \bX^\top \bM^0 &= \lim_{n \rightarrow \infty} \frac{1}{n} \E\!\left[ \bX^\top \bM^0 \right]; \\
        \plim_{n\rightarrow\infty}\frac{1}{n} (\bM^0)^\top \bM^0 &= \lim_{n \rightarrow \infty} \frac{1}{n} \E\!\left[ (\bM^0)^\top \bM^0 \right]; \\
        \plim_{n\rightarrow\infty}\frac{1}{n} (\bM^0)^\top f_t(\bH) &= \lim_{n \rightarrow \infty} \frac{1}{n} \E\!\left[ (\bM^0)^\top f_t(\bH) \right].
    \end{align}
    \label{as:mtp_model_amp_initialization_assumption_symmetric}
\end{enumerate}

With this, the AMP algorithm in the symmetric case is defined as the recursion below. For any $n \in \N$ and $t \in \N$, we write:
    \begin{align} 
        \bX^{t} &= \sum_{k=1}^K \bY_k \bM^{t-1} (\mA_k^{t})^\top - \bM^{t-2} (\mB^{t-1})^\top; \quad \bM^{t} = f_{t}(\bX^{t}). \label{eq:amp_algorithm_mtp_symmetric} 
    \end{align}
Again, by convention we take ${\mM^{-1}}$ and $\mB^0$ to be zero matrices. Analogously to the asymmetric case, $\{ \bX^t \in \R^{n \times d} \}_{t \in \N}$ are the \textit{iterates} of the symmetric iterations. The terms $\{ \mB^t \in \R^{d \times d} \}_{t \in \N}$ are \textit{correction} terms ensuring that the distributional properties of each iterate $\bX^t$ can be characterized exactly in the large $n$ limit and are constructed similarly to the asymmetric case. Their exact form is given in \eqref{eq:amp_mtp_correction_term_symmetric_definition}. We will refer to the symmetric iterations obtained via \eqref{eq:amp_algorithm_mtp_symmetric} as $\{ \bX^t \mid \bM^0, \mA_k^t, f_t \}_{t \in \N}$.

The recursively defined SE for the symmetric AMP iterations $\{ \bX^t \mid \bM^0, \mA_k^t, f_t \}_{t \in \N}$ is as follows. At the first iteration, we write
    \begin{align}
        \mK^1 &\coloneqq \sum_{k=1}^K \left\{ \mLambda_k \!\left(\lim_{n \rightarrow \infty} \frac{1}{n} \E[\bX^\top \bM^0]\right) (\mA_k^1)^\top \right\}; \\
        \mSigma^1 &\coloneqq \sum_{k=1}^K \left\{\mA_k^1 \!\left(\lim_{n \rightarrow \infty} \frac{1}{n} \E[(\bM^0)^\top \bM^0]\right) (\mA_k^1)^\top\right\}.
    \end{align}
Then, for $s < t \in \N \cup \{0\}$, we let $\bH^t \coloneqq \bX \mK^t + \bZ^t$, $\bZ^t \sim \Normal(0, \mSigma^t \otimes \Id_n)$ and have
\begin{align}
    \mK^{t+1} &\coloneqq \sum_{k=1}^K \left\{\mLambda_k \! \left(\lim_{n \rightarrow \infty} \frac{1}{n} \E\!\left[\mX^\top f_t(\bH^t)\right]\right) (\mA_k^t)^\top \right\}; \\
    \mSigma^{t+1} &\coloneqq \sum_{k=1}^K \left\{ \mA_k^t \!\left( \lim_{n \rightarrow \infty} \frac{1}{n} \E\!\left[(f_t(\bH^t)^\top f_t(\bH^t)\right] \right) (\mA_k^t)^\top \right\}; \\
    \mSigma^{s+1,t+1} &\coloneqq  \sum_{k=1}^K \left\{ \mA_k^s \!\left( \lim_{n \rightarrow \infty} \frac{1}{n} \E\!\left[(f_s(\bH^s)^\top f_t(\bH^t)\right]\right) (\mA_k^t)^\top \right\},
\end{align}
where we adopt the convention $f_0 \equiv \bM^0$. The terms $\mSigma^{s,t}$ represents the covariance structure among the rows of the limiting Gaussian noise matrices $\bZ^s$ and $\bZ^t$.

The correction terms $\{\mB^t\}_{t \in \N}$ of the AMP recursion are defined as in the asymmetric case as function of the expected divergence matrices $\{ \mD^t \in \R^{d \times d} \}_{t \in \N}$, whose $jk$-th entry is given by
\begin{align} \label{eq:amp_symmetric_expected_mean_divergence_definition}
    [\mD^t]_{jk} = \frac{1}{n} \sum_{i=1}^n \E\!\left[ \frac{\partial}{\partial H_{ik}^t} [f_t(\bH^t)]_{ij} \right].
\end{align}
With this, we define each correction term $\mB^t$ as
\begin{align}
    \mB^t \coloneqq \sum_{k=1}^K \left\{ \mA_k^{t+1} \mD^t \mA_k^t \right\}. \label{eq:amp_mtp_correction_term_symmetric_definition}
\end{align}

This concludes the introduction of all quantities necessary to present the AMP convergence theorem for the symmetric MTP model \eqref{eq:MTP_mv_symmetric_def}.

\begin{theorem} \label{th:amp_mtp_symmetric_main_result}
    Under assumptions \ref{as:mtp_model_amp_signal_assumptions_symmetric}-\ref{as:mtp_model_amp_initialization_assumption_symmetric}, the symmetric AMP iterations $\{ \bX^t \mid \bM^0, \mA_k^t, f_t \}_{t \in \N}$ are such that, for any $t \in \N, p \geq 1, L < \infty$ and sequences of test functions $\{ \phi_n: \R^{n \times (t+1)d} \rightarrow \R \}_{n \in \N} \subset \PL_p(L)$, one has
    \begin{align}
        \plim_{n \rightarrow \infty} \left| \phi_n(\bX, \bX^1, \dotsc, \bX^t) - \E\!\left[ \phi_n(\bX, \bH^1, \dotsc, \bH^t) \right] \right| = 0.
    \end{align}
\end{theorem}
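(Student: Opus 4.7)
The plan is to reduce the symmetric MTP recursion \eqref{eq:amp_algorithm_mtp_symmetric} to an abstract non-separable symmetric AMP recursion (of the type analysed in \cite{feng2022unifying}) and then match its effective state evolution parameters to the ones defined in the statement. Substituting the observation model $\bY_k = \frac{1}{n}\bX\mLambda_k\bX^\top + \frac{1}{\sqrt{n}}\bG_k$ into \eqref{eq:amp_algorithm_mtp_symmetric} separates each iterate into a signal-bias term and a noise-driven term:
\[
\bX^t = \bX\,\hat{\mK}^t + \sum_{k=1}^K \tfrac{1}{\sqrt{n}}\bG_k \bM^{t-1} (\mA_k^t)^\top - \bM^{t-2} (\mB^{t-1})^\top, \qquad \hat{\mK}^t \coloneqq \sum_{k=1}^K \mLambda_k \big(\tfrac{1}{n}\bX^\top \bM^{t-1}\big)(\mA_k^t)^\top.
\]
The objective is to show, by induction on $t$, that $\hat{\mK}^t$ concentrates on the deterministic $\mK^t$ from the SE recursion, while the noise-driven term converges jointly across iterations to the Gaussian process $\bZ^t$ with the stipulated cross-covariances $\mSigma^{s,t}$.

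The principal technical step is to handle the multi-matrix, reweighted noise term $\sum_k \tfrac{1}{\sqrt{n}}\bG_k\bM^{t-1}(\mA_k^t)^\top$ and its joint behaviour across iterations. Standard non-separable symmetric AMP theorems are formulated for a single GOE, so the plan is to encode the $K$ independent GOE matrices together with their reweightings into a single matrix-valued non-separable recursion. Concretely, one lifts $\bM^{t-1}$ to the $K$ parallel copies $\bM^{t-1}(\mA_k^t)^\top$, applies the $K$ independent GOEs block-wise, and sums the outputs; the uniform Lipschitz assumption \ref{as:mtp_model_amp_denoiser_assumption_symmetric} together with the bounded-entry assumption on $\{\mA_k^t\}$ ensures that the lifted denoisers remain uniformly Lipschitz, as required by the underlying theorem. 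A direct algebraic computation then shows that the Onsager correction emerging from the generic recursion specialises to \eqref{eq:amp_mtp_correction_term_symmetric_definition} and that the limiting noise covariance matches $\mSigma^t$ and the cross-terms $\mSigma^{s,t}$ as defined above.

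The inductive closure combines the generic non-separable AMP output with the concentration of $\hat{\mK}^t$. Given the SE for $(\bX,\bX^1,\dots,\bX^{t-1})$, the concentration assumption \ref{as:mtp_model_amp_signal_assumptions_symmetric} together with the well-defined limiting cross-moments in \ref{as:mtp_model_amp_denoiser_assumption_symmetric} imply $\tfrac{1}{n}\bX^\top \bM^{t-1} \parrow \lim_n \tfrac{1}{n}\E[\bX^\top f_{t-1}(\bH^{t-1})]$, and hence $\hat{\mK}^t \parrow \mK^t$; the base case is supplied by \ref{as:mtp_model_amp_initialization_assumption_symmetric}. Finally, the extension from Lipschitz test functions (the natural endpoint of many non-separable AMP results) to arbitrary order-$p$ pseudo-Lipschitz test functions is handled by a standard truncation argument: the uniform Lipschitzness of $f_t$, the signal second-moment bound in \ref{as:mtp_model_amp_signal_assumptions_symmetric}, and standard Gaussian moment estimates provide uniform control of all moments of $\|\bX^t\|_n$, which combined with pseudo-Lipschitz continuity delivers the stated convergence.

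The main obstacle is the embedding step described in the second paragraph: ensuring that the lifting to a single matrix-valued non-separable AMP faithfully reproduces both the joint cross-iteration covariance structure $\mSigma^{[t]}$ and the correction $\mB^t$ requires careful bookkeeping, made non-trivial by the fact that the reweightings $\mA_k^t$ depend on the iteration index and appear simultaneously inside the linear action on $\bM^{t-1}$ and inside the Onsager correction. All other steps — the concentration of $\hat{\mK}^t$, the verification of the Lipschitz hypotheses for the lifted denoiser, and the pseudo-Lipschitz truncation — are essentially routine applications of the assumptions \ref{as:mtp_model_amp_signal_assumptions_symmetric}–\ref{as:mtp_model_amp_initialization_assumption_symmetric} once the embedding is in place.
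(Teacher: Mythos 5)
Your proposal is essentially the paper's proof, differently packaged. The paper proceeds in two stages: first it proves a single-view spiked-matrix SE theorem by the shift/oracle comparison (defining oracle denoisers $g_t(y)=f_t(\bX\mK^t+y)$ acting on pure $\GOE$ noise and showing inductively that $\|\bX^t-(\bX\mK^t+\bX_O^t)\|_n\parrow 0$), and second it reduces the multi-view model to the single-view one via a block-diagonal lift: the signal becomes $\tX=\Id_K\otimes\bX$, the coupling $\tLambda=\diag(\sqrt{K}\mLambda_k)$, the noise a single $\GOE(nK)$ whose diagonal $n\times n$ blocks reproduce the $\bG_k$, and the reweighting $\mA_k^t$ is absorbed into the lifted denoiser $F_t(\tX)=\Id_K\otimes f_t(\sum_k \tX_{(k)}(\mA_k^t)^\top)$, keeping the lifted memory blocks identical. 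You collapse these two stages into a single lift-and-shift argument, and you place the reweighting $(\mA_k^t)^\top$ in the lifted memory rather than in the denoiser's input; by associativity this is equivalent, but it makes the blocks of $\tM^{t-1}$ non-identical, so the SE quantities of the lifted recursion will not have the clean $\Id_K\otimes(\cdot)$ Kronecker form the paper exploits when matching them to $\mK^t,\mSigma^{s,t}$, and the Onsager bookkeeping becomes correspondingly heavier. Two smaller remarks: the reference you lean on, \cite{feng2022unifying}, is a survey rather than a theorem that directly delivers a matrix-valued non-separable SE; the paper instead invokes \cite{gerbelot2021graph}, whose statement already covers $\PL_p$ test functions, so the truncation step in your last paragraph is unnecessary once that result is cited. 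None of these affect correctness, but if you carry out your version you will need to spell out the oracle comparison explicitly inside the induction (it is implicit in your "noise-driven term converges to $\bZ^t$" claim, but the iterates feeding $f_{t-1}$ carry the signal bias, so applying a pure-noise theorem directly is not quite valid without the shift).
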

\begin{proof}
    Appendix~\ref{sec:mtp_amp_symmetric_proof} is dedicated to the proof of the result.
\end{proof}

For the symmetric MTP, we have direct analogues of Corollaries~\ref{cor:amp_mtp_asymmetric_wasserstein_convergence_result} and \ref{cor:amp_mtp_asymmetric_empirical_correction_terms}. For the former, we need an empirical measure convergence assumption akin to \eqref{eq:mtp_signal_asymmetric_wasserstein_limits_assumption}. Assume there exist a random variable $X_\star \in \R^d$ with bounded second moment such that, for all functions $\varphi : \R^d \rightarrow \R^d \in \PL_2(L)$, $L<\infty$, the signal sequence $\bX_\star\in\R^{n\times d}$ satisfies
\begin{align} \label{eq:mtp_signal_symmetric_wasserstein_limits_assumption}
    \plim_{n \rightarrow \infty} \left| \frac{1}{n} \sum_{i=1}^n \varphi(X_i) - \E[\varphi(X_\star)] \right| = 0.
\end{align}

\begin{cor} \label{cor:amp_mtp_symmetric_wasserstein_convergence_result}
    Assume \ref{as:mtp_model_amp_signal_assumptions_symmetric}-\ref{as:mtp_model_amp_initialization_assumption_symmetric} hold and furthermore $\bX$ satisfies \eqref{eq:mtp_signal_symmetric_wasserstein_limits_assumption}. Define $H^t \coloneqq (\mK^t)^\top X + Z^t$, for $Z^t \sim \Normal(0, \mSigma^t) \indep X$. Then, the symmetric AMP iterations $\{ \bX^t \mid \bM^0, \mA_{k}^t, f_t\}_{t \in \N}$ are such that, for any test function $\varphi : \R^{(t+1)d} \rightarrow \R \in \PL_2(L)$,
    \begin{align}
        &\plim_{n \rightarrow \infty} \left| \frac{1}{n} \sum_{i=1}^{n} \varphi(X_i, X_i^1, \dotsc, X_i^t) - \E[\varphi(X, H^1, \dotsc, H^t)] \right| = 0.
    \end{align}
\end{cor}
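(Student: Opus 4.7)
The plan is to reduce the row-wise convergence in this Corollary to the uniform pseudo-Lipschitz convergence of Theorem~\ref{th:amp_mtp_symmetric_main_result}. The bridge is Lemma~\ref{lem:averages_pseudo_lipschitz_function_uniformly_pseudo_Lipschitz}, which promotes any $\varphi\in\PL_2(L)$ on $\R^{(t+1)d}$ to a uniformly pseudo-Lipschitz sequence (with respect to $\|\cdot\|_n$) via the row average
\begin{align*}
\phi_n(\bX,\bX^1,\dotsc,\bX^t) \coloneqq \frac{1}{n}\sum_{i=1}^n \varphi(X_i,X_i^1,\dotsc,X_i^t).
\end{align*}
Plugging $\phi_n$ into Theorem~\ref{th:amp_mtp_symmetric_main_result} immediately yields
\begin{align*}
\plim_{n\to\infty}\left|\frac{1}{n}\sum_{i=1}^n\varphi(X_i,X_i^1,\dotsc,X_i^t) - \E\!\left[\frac{1}{n}\sum_{i=1}^n\varphi(X_i,H_i^1,\dotsc,H_i^t)\right]\right|=0,
\end{align*}
where $(H_i^1,\dotsc,H_i^t)$ denotes the $i$-th row of $(\bH^1,\dotsc,\bH^t)$. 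It then remains to identify the limit of the deterministic sequence on the right with $\E[\varphi(X_\star,H^1,\dotsc,H^t)]$.

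For this identification, I would condition on $\bX$. Since the Gaussian noise matrices $\bZ^1,\dotsc,\bZ^t$ have i.i.d.\ rows (jointly Gaussian across $s$ with covariance governed by $\mSigma^{[t]}$), the rows of $(\bH^1,\dotsc,\bH^t)$ are conditionally independent across $i$, and each row $(H_i^1,\dotsc,H_i^t)$ is conditionally Gaussian with mean $((\mK^1)^\top X_i,\dotsc,(\mK^t)^\top X_i)$ and covariance $\mSigma^{[t]}$. Defining
\begin{align*}
g(x) \coloneqq \E\!\left[\varphi\!\left(x,(\mK^1)^\top x + Z^1,\dotsc,(\mK^t)^\top x + Z^t\right)\right],\qquad (Z^1,\dotsc,Z^t)\sim\Normal(0,\mSigma^{[t]}),
\end{align*}
a routine Gaussian-integration argument leveraging $\varphi\in\PL_2$ and the finite moments of the $Z^s$ shows $g\in\PL_2$; the tower property then gives $\E[\frac{1}{n}\sum_i\varphi(X_i,H_i^1,\dotsc,H_i^t)] = \E[\frac{1}{n}\sum_i g(X_i)]$, with target value $\E[g(X_\star)] = \E[\varphi(X_\star,H^1,\dotsc,H^t)]$.

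The main obstacle is upgrading the probability-convergence assumption \eqref{eq:mtp_signal_symmetric_wasserstein_limits_assumption} to an expectation statement. Applied to $g\in\PL_2$, that assumption gives $\frac{1}{n}\sum_i g(X_i)\parrow \E[g(X_\star)]$. Reapplying Lemma~\ref{lem:averages_pseudo_lipschitz_function_uniformly_pseudo_Lipschitz} to $g$ shows that $\bX\mapsto\frac{1}{n}\sum_i g(X_i)$ is uniformly pseudo-Lipschitz of order $2$, so the concentration assumption \ref{as:mtp_model_amp_signal_assumptions_symmetric} gives $|\frac{1}{n}\sum_i g(X_i) - \E[\frac{1}{n}\sum_i g(X_i)]|\parrow 0$. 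Combining these two probability limits forces the deterministic sequence $\E[\frac{1}{n}\sum_i g(X_i)]$ to converge to $\E[g(X_\star)]$. A triangle inequality through the two intermediate quantities $\E[\frac{1}{n}\sum_i\varphi(X_i,H_i^1,\dotsc,H_i^t)]$ and $\E[g(X_\star)]$ then closes the argument.
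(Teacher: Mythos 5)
Your proposal is correct and uses the same ingredients as the paper's proof, but pivots the triangle inequality on a different middle term. The paper inserts the \emph{conditional} expectation $\E[\phi_n(\bX,\bH^1,\dotsc,\bH^t)\mid\bX]$; the comparison of that quantity to $\E[\varphi(X,H^1,\dotsc,H^t)]$ reduces to $\tfrac{1}{n}\sum_i g(X_i)\parrow\E[g(X_\star)]$, which follows directly from \eqref{eq:mtp_signal_symmetric_wasserstein_limits_assumption} together with Lemma~\ref{lem:gaussian_expectation_pseudo_lipschitz_is_pseudo_lipschitz}, while the remaining piece is handled by a conditional Gaussian concentration bound. You instead pivot on the \emph{unconditional} expectation $\E[\phi_n(\bX,\bH^1,\dotsc,\bH^t)]$, so your first piece is exactly the conclusion of Theorem~\ref{th:amp_mtp_symmetric_main_result} applied to the row-averaged test function from Lemma~\ref{lem:averages_pseudo_lipschitz_function_uniformly_pseudo_Lipschitz}. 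The price is that identifying the limit of the \emph{deterministic} sequence $\E[\tfrac{1}{n}\sum_i g(X_i)]$ now requires pairing \eqref{eq:mtp_signal_symmetric_wasserstein_limits_assumption} with the concentration hypothesis in \ref{as:mtp_model_amp_signal_assumptions_symmetric} applied to $\bX\mapsto\tfrac{1}{n}\sum_i g(X_i)$, and then using the elementary fact that a deterministic sequence converging in probability to a constant converges to that constant. Both decompositions are sound; yours has the merit of making the invocation of Theorem~\ref{th:amp_mtp_symmetric_main_result} explicit where the paper's handling of its first term leaves that step more implicit. One small point: cite Lemma~\ref{lem:gaussian_expectation_pseudo_lipschitz_is_pseudo_lipschitz} for the claim that $g\in\PL_2$ rather than dismissing it as a routine Gaussian-integration argument, since that is exactly what the lemma is there to supply.
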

\begin{proof}
    This result is proved in Appendix~\ref{sec:symmetric_mtp_corollaries_proof}.
\end{proof}

\begin{cor} \label{cor:amp_mtp_symmetric_empirical_correction_terms}
    Assume the symmetric AMP algorithm $\{ \bX^t \mid \bM^0, \mA_{k}^t, f_t \}_{t \in \N}$ is being run with the correction terms $\mB^t$ of \eqref{eq:amp_mtp_correction_term_symmetric_definition} being replaced by some consistent estimators $\Hat{\mB}^t$. Then, the conclusions of Theorem~\ref{th:amp_mtp_symmetric_main_result} hold as well.
\end{cor}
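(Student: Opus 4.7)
The plan is to prove the corollary by a coupling argument: run the AMP recursion \eqref{eq:amp_algorithm_mtp_symmetric} with the true correction terms $\{\mB^t\}$ and with the consistent estimators $\{\hat{\mB}^t\}$ on the same data, denote the two sets of iterates by $\{\bX^t,\bM^t\}$ and $\{\tilde{\bX}^t,\tilde{\bM}^t\}$ respectively, and show inductively that
\begin{align}
    \|\tilde{\bX}^t - \bX^t\|_n \parrow 0, \qquad \|\tilde{\bM}^t - \bM^t\|_n \parrow 0.
\end{align}
Since Theorem~\ref{th:amp_mtp_symmetric_main_result} already provides the state evolution for $\{\bX^t\}$, this closeness is exactly what is needed to transfer the conclusion to $\{\tilde{\bX}^t\}$ via the pseudo-Lipschitz property of the test functions.

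For the induction, note that at $t=1$ the Onsager term vanishes by convention, so $\tilde{\bX}^1=\bX^1$ and hence $\tilde{\bM}^1=\bM^1$. For the inductive step, subtract the two recursions and add and subtract $\bM^{t-2}(\hat{\mB}^{t-1})^\top$ to obtain the telescoping identity
\begin{align}
    \tilde{\bX}^t - \bX^t = \sum_{k=1}^K \bY_k(\tilde{\bM}^{t-1}-\bM^{t-1})(\mA_k^t)^\top - (\tilde{\bM}^{t-2}-\bM^{t-2})(\hat{\mB}^{t-1})^\top - \bM^{t-2}(\hat{\mB}^{t-1}-\mB^{t-1})^\top.
\end{align}
The last term is $o_p(1)$ in $\|\cdot\|_n$ because $\|\bM^{t-2}\|_n = O_p(1)$ by Theorem~\ref{th:amp_mtp_symmetric_main_result} and $\hat{\mB}^{t-1}-\mB^{t-1} \parrow 0$ by assumption, while the middle term is handled by the inductive hypothesis together with $\|\hat{\mB}^{t-1}\|_{\op}=O_p(1)$. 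The first term is controlled using $\|\bY_k (\tilde{\bM}^{t-1}-\bM^{t-1})(\mA_k^t)^\top\|_n \leq \|\mA_k^t\|_{\op}\|\bY_k\|_{\op}\|\tilde{\bM}^{t-1}-\bM^{t-1}\|_n$, where $\|\bY_k\|_{\op}=O_p(1)$ because $\|\bG_k\|_{\op}/\sqrt{n}$ is a.s.\ bounded for a GOE matrix and $\|\bX\|_n^2$ is bounded in expectation by \ref{as:mtp_model_amp_signal_assumptions_symmetric}, making the spike term's operator norm $O_p(1)$ as well. Once $\|\tilde{\bX}^t - \bX^t\|_n \parrow 0$ is established, the uniform Lipschitz property of $f_t$ (under the $\|\cdot\|_n$ norm) immediately gives $\|\tilde{\bM}^t - \bM^t\|_n \parrow 0$.

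To conclude, fix a sequence $\{\phi_n\}\subset \PL_p(L)$ on $\R^{n\times(t+1)d}$ (equipped with $\|\cdot\|_n$). Applying the pseudo-Lipschitz bound,
\begin{align}
    \bigl|\phi_n(\bX,\tilde{\bX}^1,\dotsc,\tilde{\bX}^t) - \phi_n(\bX,\bX^1,\dotsc,\bX^t)\bigr| \leq L\,\|(\tilde{\bX}^s-\bX^s)_{s\leq t}\|_n \bigl(1 + \|(\bX,\bX^s)\|_n^{p-1} + \|(\bX,\tilde{\bX}^s)\|_n^{p-1}\bigr).
\end{align}
The first factor is $o_p(1)$ by the coupling argument above, while the second factor is $O_p(1)$ because Theorem~\ref{th:amp_mtp_symmetric_main_result} (applied with the test function $\bX,\bH^1,\dotsc,\bH^t\mapsto\|(\bX,\bH^s)\|_n$) bounds the true iterate norms, and the perturbed iterates inherit the bound from closeness. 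Combining with Theorem~\ref{th:amp_mtp_symmetric_main_result} for $\{\bX^t\}$ yields the desired conclusion for $\{\tilde{\bX}^t\}$.

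The main obstacle is obtaining the $O_p(1)$ operator-norm bound on $\bY_k$, which is needed to propagate row-wise closeness through the linear part of the iteration. This step relies on quantitative information about $\bX$ and $\bG_k$ that is slightly stronger than the pure row-wise Wasserstein-type assumptions in \ref{as:mtp_model_amp_signal_assumptions_symmetric}; fortunately the bounded row-wise second moment together with the known operator-norm concentration for the GOE is enough to close the argument.
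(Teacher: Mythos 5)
Your proof is correct and follows essentially the same strategy as the paper's: induction on $t$, coupling the two recursions on the same data, and then transferring state evolution via the pseudo-Lipschitz property (Lemma~\ref{lem:pseudo_lipschitz_convergence_of_argument_convergence_of_function}). The only notable technical variation is that the paper first reduces to a single-view spiked model via the embedding of Appendix~\ref{sec:mtp_amp_symmetric_proof} and then splits $\bY$ into signal and noise components (bounding $\|n^{-1}\bX\mLambda\bX^\top(\cdot)\|_n$ and $\|n^{-1/2}\bG(\cdot)\|_n$ separately), whereas you keep the multi-view structure and control the whole linear step through $\|\bY_k\|_{\op}=O_p(1)$, which indeed follows from $\|\bX\|_{\op}^2/n \leq \|\bX\|_n^2=O_p(1)$ together with Theorem~\ref{th:largest_eigenvalue_goe}; both yield the same estimate.
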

\begin{proof}
    See Appendix~\ref{sec:symmetric_mtp_corollaries_proof}.
\end{proof}

\section{Bayes-optimal inference with AMP}
All AMP iterations presented in the previous section are quite flexible, in the sense that one has large freedom in choosing the denoising functions and reweighting matrices that drive the algorithm. Here, we will focus on a particular setup for the AMP iterations, namely the so-called \textit{Bayes-optimal} setting. In particular, we will instantiate our analysis with the symmetric model \eqref{eq:MTP_mv_symmetric_def}, since as was shown is Section~\ref{sec:mtp_amp_symmetric_proof} it is sufficient to also represent the asymmetric version \eqref{eq:MTP_mv_def}, at the cost of having to account for block-constraints in the signal and coupling matrices.

In practice, for $t \in \N$, the Bayes-optimal setting amounts to choosing as denoiser $\{f_t : \R^{n \times d} \rightarrow \R^{n \times d} \}_{n \in \N}$ the conditional mean estimator for $\bX$ under a linear Gaussian observation model given by $\bH^t = \bX \mK^t + \bZ^t$, for $\bZ \sim \Normal(0, \mSigma^t \otimes \Id_n) \indep \bX$. The parameters $\mK^t$ and $\mSigma^t$ are given by SE, and $\bH^t$ is the random sequence associated with iterate $\bX^t$, as in Theorem~\ref{th:amp_mtp_symmetric_main_result}, that approximates the behavior of $\bX^t$ for sufficiently large problem sizes.  For each $n \in \N$, the Bayes rule at iteration $t$ (which we will denote with $\eta_t : \R^{n \times d} \rightarrow \R^{n \times d}$) is then given by
\begin{align}
    \eta_t(y) = \E\!\left[ \bX \mid \bH^t = y \right] = \frac{ \int_{\R^{n \times d}} x \exp\!\left\{ - \frac{1}{2} \langle (y - x\mK^t)(\mSigma^t)^{-1/2},  (y - x\mK^t) \rangle \right\} \ P_{\bX}(d x) }{ \int_{\R^{n \times d}} \exp\!\left\{ - \frac{1}{2} \langle (y - x\mK^t)(\mSigma^t)^{-1/2},  (y - x\mK^t) \rangle \right\} \ P_{\bX}(d x) } .
\end{align}
Notice that $\eta_t$ is implicitly parametrized by the state evolution $\mK^t, \mSigma^t$. We will assume, going forward, that the prior sequence $P_{\bX}$ on $\bX$ is such that $\eta_t$ satisfies the uniformly Lipschitz assumption for all pairs of SE parameters $\mK^t, \mSigma^t$. The idea behind such choice is that of being able to characterize the high-dimensional limit of the squared error loss incurred in estimating $\bX$ by $\eta_t(\bX^t)$ as the minimum mean-squared error for a Gaussian linear observation $\bH^t$ of $\bX$. By Lemma~\ref{lem:inner_products_uniformly_lipschitz_functions}, in fact, function $\phi_n: \R^{n \times 2d} \rightarrow \R^{d \times d}$ defined as 
\begin{align}
    \phi_n(\bX, \bX^t) \coloneqq \frac{1}{n} (\bX - \eta_t(\bX^t))^\top (\bX - \eta_t(\bX^t))
\end{align}
is uniformly pseudo-Lipschitz of order 2, and as $n$ grows large it will be approximated with high probability the matrix MMSE
\begin{align}
    M_n( \mK^t, \mSigma^t) = \frac{1}{n} \E\!\left[ (\bX - \eta_t(\bH^t))^\top (\bX - \eta_t(\bH^t)) \right],
\end{align}
whose trace is the usual average squared error loss. As observed in \cite{Reeves:2018aa}, observations $\bH^t$ are statistically equivalent to linear observations in white Gaussian noise $\bZ^t \sim \Normal(0, \Id_d \otimes \Id_n)$ with a positive semi-definite SNR given by $\left(\mK^t(\mSigma^t)^{-1}(\mK^t)^\top\right)^{1/2}$, for $\mSigma^t \in \PD^d$. When $\mSigma^t$ is singular, however, this equivalence is in general not well-defined as some directions of no variance may correspond to exact observations that would otherwise be lost by the symmetrization. 

When the matrices $K^t$ and $\mSigma^t$ stem from state evolution, however, it is possible to simply replace the inverse $(\mSigma^t)^{-1}$ by its symmetric pseudo-inverse $(\mSigma^t)^\dagger$ as the column space of $\mK^t$ ends up aligning to the row space of $\mSigma$. Because of this, from now on we will always be parametrizing $\bH^t$ by a single \textit{effective SNR} matrix $\mS^t \in \PSD^d$ constructed as $\mS^t = \mK^t(\mSigma^t)^\dagger(\mK^t)^\top$, and denote the corresponding MMSE matrix $M_n( \mS^t)$.

\subsection{Optimal reweighting} \label{sec:bayes_optimal_reweighting}
In our specification of the AMP algorithm $\{ \bX^t \mid \bM^0, \mA_k^t, f_t \}_{t \in \N}$ for the MTP, the role of the reweighting matrices $\mA_k^t$ is that of aggregating the block-diagonal elements in the lifted recursion that appears in Appendix~\ref{sec:mtp_amp_symmetric_proof}, in order to return an iterate whose dimension is conformable with the signal of interest $\bX$. Actually, the lifted iterates $\{ \tX^t | \tM^0, F_t \}_{t \in \N}$ could in principle contain more information about the signal $\bX$ than what is contained in $\bX^t$. 

Here, we leverage the fact that under Theorem~\ref{th:amp_mtp_symmetric_main_result} the iterates $\bX^t$ are approximated by the Gaussian channel observations $\bH^t$ to argue that a carefully chosen set of reweighting matrices is sufficient to achieve the same estimation performance in the limit as the one attainable via the lifted recursion $\{ \tX^t | \tM^0, F_t \}_{t \in \N}$. Informally, we can condition on having obtained an estimate $\bM^t$ for $\bX$ from both arguments. Then, we can compare the approximating random sequence $\tH^{t+1}$ associated with the lifted algorithm to the one that is occurs after the reweighting step by $T_{t+1}$, i.e. $\bH^{t+1} = T_{t+1}(\tH^{t+1})$, for the linear operator $T_{t+1}$ constructed from $\{\mA_k^{t+1}\}_{k \in [K]}$ as in Appendix~\ref{sec:mtp_amp_symmetric_proof}. $\tH^{t+1}$ is statistically equivalent to collection $\{\bH_k^{t+1}\}_{k \in [K]}$ of observations of $\bX$ in independent additive Gaussian noise,
\begin{align}
    &\bH_k^{t+1} = \bX \mLambda_k \mF^{t+1} + \bZ_k^{t+1}, \quad \bZ_k \iid \Normal(0, \mQ^{t+1} \otimes \Id_n); \\
    &\mF^{t+1} \coloneqq \plim_{n \rightarrow \infty} \frac{1}{n} \bX^\top \bM^t ; \quad \mQ^{t+1} \coloneqq \plim_{n \rightarrow \infty} \frac{1}{n} (\bM^t)^\top \bM^t, \label{eq:def_state_evolution_no_coupling_matrices}
\end{align}
compared to the single observation 
\begin{align}
    &\bH^{t+1} = \sum_{k=1}^K \bH_k^{t+1} (\mA_k)^{t+1} = \bX \mK^{t+1} + \bZ^{t+1} , \quad \bZ^{t+1} \sim \Normal(0, \mSigma^{t+1} \otimes \Id_n); \\
    &\mK^{t+1} = \sum_{k=1}^K \mLambda_k \mF^{t+1} (\mA_k^{t+1})^\top ; \quad \mSigma^{t+1} = \sum_{k=1}^K \mA_k^{t+1} \mQ^{t+1} (\mA_k^{t+1})^\top.
\end{align}

Clearly, since $\bH^{t+1} = T_{t+1}(\tH^{t+1})$, Bayes-optimal posterior inference based on $\tH^{t+1}$ will yield an MMSE no larger than the MMSE attainable by using $\bH^{t+1}$, with equality achievable if $T_{t+1}$ produces a Bayesian sufficient statistic for $\bX$. We formalize this intuition in the following result.

\begin{prop} \label{prop:amp_mtp_optimal_reweighting_choice}
    The Bayes-optimal AMP recursion $\{ \bX^t \mid \bM^0, \mA_k^t, \eta_t \}_{t \in \N}$ is such that, for each $t\in \N$, the optimal reweighting choice (in terms of minimizing the mean-squared error) is $\mA_k^t = \mLambda_k \mK^t (\mQ^t)^\dagger$, where $\mK^t, \mSigma^t$ are defined as in \eqref{eq:def_state_evolution_no_coupling_matrices}. Furthermore, such choice achieves the same asymptotic MMSE as the the Bayes-optimal lifted recursion $\{ \tX^t \mid \tM^0, F_t \}_{t \in \N}$ under matching initialization $\Id_k \otimes \bM^k$.
\end{prop}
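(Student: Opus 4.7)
The plan is to realize the reweighted asymptotic observation $\bH^t$ as a Bayesian sufficient statistic for $\bX$ based on the lifted Gaussian channel $\tH^t$, and then to propagate this property inductively through the AMP iterations. I would proceed in three stages, with the induction hypothesis being that at iteration $t$ the reweighted and lifted recursions produce iterates with matching state evolution parameters and matching Bayes-optimal matrix MMSE.

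First, I would compute the sufficient statistic explicitly from the conditional Gaussian likelihood of $\tH^t = \{\bH_k^t\}_{k \in [K]}$ given $\bX$. Each $\bH_k^t$ has mean $\bX \mLambda_k \mF^t$ and noise with covariance $\mQ^t \otimes \Id_n$, independent across $k$, so the log-likelihood, up to $\bX$-independent terms, equals
\begin{align*}
    \tr\!\Bigl( \bX^\top \sum_{k=1}^K \bH_k^t (\mQ^t)^\dagger (\mF^t)^\top \mLambda_k \Bigr) - \tfrac{1}{2} \sum_{k=1}^K \tr\!\bigl( \mLambda_k \mF^t (\mQ^t)^\dagger (\mF^t)^\top \mLambda_k \, \bX^\top \bX \bigr),
\end{align*}
using symmetry of $\mLambda_k$ and of $\mQ^t$. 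Only the first term depends on the observations, so Fisher-Neyman factorization identifies $\sum_{k=1}^K \bH_k^t (\mQ^t)^\dagger (\mF^t)^\top \mLambda_k$ as a Bayesian sufficient statistic for $\bX$ given $\tH^t$. Setting $\mA_k^t = \mLambda_k \mF^t (\mQ^t)^\dagger$ in the reweighting formula $\bH^t = \sum_k \bH_k^t (\mA_k^t)^\top$ makes $\bH^t$ coincide with this sufficient statistic exactly, so $\E[\bX \mid \bH^t] = \E[\bX \mid \tH^t]$ and the Bayes-optimal denoiser $\eta_t(\bH^t)$ attains the same matrix MMSE as the Bayes-optimal estimator based on the full lifted observation.

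Second, I would close the induction. The next iteration's state evolution parameters $\mF^{t+1}$ and $\mQ^{t+1}$ are functionals only of the joint law of $(\bX, \bM^t)$, and under the sufficient statistic identity this joint law is the same in the reweighted and lifted recursions. Theorem~\ref{th:amp_mtp_symmetric_main_result} then transfers the agreement from the idealized Gaussian model to the empirical moments of the AMP iterates in the limit $n \to \infty$, so the per-iteration matrix MMSE coincides across the two recursions for every $t \in \N$. The base case follows from the matching initialization $\Id_K \otimes \bM^0$: the lifted recursion begins with $K$ identical copies of $\bM^0$, and the reweighting at $t=1$ already collapses them into the sufficient statistic by the same calculation applied with $\mF^1$ and $\mQ^1$ built from $\bM^0$.

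The main technical obstacle is handling the pseudo-inverse $(\mQ^t)^\dagger$ when $\mQ^t$ is rank-deficient: the likelihood computation requires that the column space of $(\mF^t)^\top \mLambda_k$ lie inside the range of $\mQ^t$ for each $k$, so that the effective noise support carries the entire signal component and no directions are lost in the symmetrization by $\mQ^t (\mQ^t)^\dagger$. This alignment should follow from the structural fact that $\bM^{t-1}$ is a Lipschitz function of a Gaussian observation of $\bX$ whose noise is itself supported on the column space of $\mSigma^{t-1}$, which the excerpt already invokes to justify replacing $(\mSigma^t)^{-1}$ by $(\mSigma^t)^\dagger$ in the definition of the effective SNR; however, verifying this range inclusion cleanly through the recursion is the delicate step. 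A secondary but essentially routine point is lifting the exact finite-dimensional sufficiency argument from the idealized state evolution channel to the actual AMP iterates, which is directly underwritten by the pseudo-Lipschitz convergence of Theorem~\ref{th:amp_mtp_symmetric_main_result} and Corollary~\ref{cor:amp_mtp_symmetric_wasserstein_convergence_result}.
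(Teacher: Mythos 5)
Your proposal matches the paper's argument: both identify the reweighted observation $\bH^t = \sum_k \bH_k^t (\mA_k^t)^\top$ with $\mA_k^t = \mLambda_k \mF^t (\mQ^t)^\dagger$ as a Bayesian sufficient statistic for $\bX$ given the lifted observation $\tH^t$ (the paper by writing out the exponential-family conditional density, you by Fisher--Neyman factorization of the likelihood — the same computation phrased differently), and both close via the induction that matching conditional-mean estimators forces $\tM^t = \Id_K \otimes \bM^t$ and hence matching state evolution parameters at $t+1$. Your flag about the range alignment needed for the pseudo-inverse step is a genuine subtlety that the paper's proof passes over without comment, so in that respect your version is slightly more careful than the original.
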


\begin{proof}
    The result can be shown by iteratively matching the conditional mean estimators under the limiting distributions. At the first step, we have that the approximating random variables for $\bX^1$ and $\tX^1$ (denoted $\bH^1$ and $\tH^1$, respectively) imply the same conditional densities for $\bX$. We have
    \begin{align}
        p(x \mid \tH^1 = y) = P_{\bX}(d x) \exp\!\left\{ - \frac{1}{2} \sum_{k=1}^K \langle x \mLambda_k \mF^1 (\mQ^1)^\dagger , x \mLambda_k \mF^1 \rangle + \langle S_1(y), x \rangle  - K(S_1(y) ) \right\}
    \end{align}
    where $S_1(y) \coloneqq \sum_{k \in [K]} y_{(k)} (\mLambda_k \mF^1 (\mQ^1)^\dagger)^\top$, $y_{(k)}$ is the $n \times d$ $k$-th diagonal block of $y \in \R^{nK \times dK}$ and
    \begin{align}
        K(S_1(y)) = \log \!\left( \int \exp\!\left\{ - \frac{1}{2} \sum_{k=1}^K \langle x \mLambda_k \mF^1 (\mQ^1)^\dagger , \mX \mLambda_k \mF^1 \rangle + \langle S_1(y), \mX \rangle \right\} P_{\bX}(dx) \right)
    \end{align}
    is the cumulant generating function for the density. In other words, $S_1(\tH)$ is a sufficient statistic (in the Bayesian sense) for $\bX$, and we notice that choosing $\mA_k^1 = \mLambda_k \mF^1 (\mQ^1)^\dagger$ one gets $\bH^1 = S_1(\tH^1)$. Thus, the limiting MMSE approximated by the lifted recursion (after appropriate normalization) must equal the one approximated by the Bayes-optimal AMP as the conditional mean estimators must be the same. Since the conditional mean estimators are matched, $\tM^1 = \Id_K \otimes \bM^1$ and it is a simple induction to show that the claim holds for all $t \in \N$. 
\end{proof}

\begin{remark}
    Under the Bayes-optimal choice of denoisers, after the first iteration (for which the parameters $\mF^1$ and $\mQ^1$ depend on the structure of the initialization $\bM^0$), the law of iterated expectation guarantees that SE simplifies and $\mF^t = \mQ^t$ for all $t \geq 2$. The optimal reweighting thus simply becomes the choice $\mA_k^t = \mLambda_k$ for all $t$. To simplify the presentation, we will assume that also $\mF^1 = \mQ^1$. This, then, gives a reduced state evolution $\{\mS^t \in \PSD^d \}_{t \in \N}$ where $\mS^t = \sum_{k \in [K]} \mLambda_k \mQ^t \mLambda_k$. Going forward, we refer to the sequences $\{\mQ^t, \mS^t\}_{t \in \N}$ as the \textit{overlaps} and the \textit{effective SNR} (in short SNR) for the AMP recursion, and we relate them via a linear mapping $\cT: \R^{d \times d} \rightarrow \R^{d \times d}$ given by $\cT(\mQ) = \sum_{k \in [K]} \mLambda_k \mQ \mLambda_k$. We will furthermore assume that in the Bayes-optimal case one is reweighting optimally as well.
\end{remark}

\subsection{Properties of the MMSE matrix}
In order to characterize the high-dimensional estimation properties of the Bayes-optimal AMP algorithm, we need to study the large-$n$ properties of the matrix MMSE function, as it directly relates to both the orbits of the SE recursion and the asymptotic estimation performance of the algorithm itself. An aspect of particular interest is understanding the continuity and smoothness properties of the limiting MMSE matrix seen as a function of the SNR level $\mS^t \in \PSD^d$
\begin{align}
    M(\mS^t) = \lim_{n \rightarrow \infty} M_n( \mS^t) = \lim_{n \rightarrow \infty} \frac{1}{n}\E\!\left[\bX^\top \bX\right] - \lim_{n \rightarrow \infty} \frac{1}{n} \E\!\left[ \E[\bX \mid \bH^t]^\top \E[\bX \mid \bH^t] \right].
\end{align}
For our purposes, establishing the first-order differential properties of $M(\mS^t)$ will be of fundamental importance. In this section, we provide sufficient conditions for $M$ to be differentiable on the relative interior of $\PSD^d$ and admit directional derivatives on the boundary of the positive semi-definite cone, corresponding to the notion of the effective observation $\bH^t$ carrying signal over a subspace of the support of $\bX$. This eventuality is particularly relevant for our setup, as the coupling matrices $\mLambda_k$ in the MTP are assumed to be unrestricted and may in principle operate on a subspace. Indeed, the embedding structure use to prove Theorem~\ref{th:amp_mtp_state_evolution_theorem_asymmetric} incorporates by construction some subspace constraints.

We begin by characterizing the fixed-$n$ MMSE matrix $M_n( \mS)$ as a partial trace over the MMSE matrix associated with the vector linear channel with signal $\gvec(\bX) \in \R^{nd}$. Letting $\bZ \sim \Normal(0, \Id_{d} \otimes \Id_n)$ independent of $\bX$ and $\bH_{\mS} = \bX\mS^{1/2} + \bZ$, the MMSE matrix $\E\!\left[ \Cov(\gvec(\bX) \mid \bH_{\mS}) \right] \in \PSD^{nd}$ associated with this vector channel can be represented as the quadruply-indexed collection of expected conditional covariances $\E\!\left[ \Cov(X_{ik}, X_{jl} \mid \bH_{\mS}) \right]$, for $i,j \in [n], k,l \in [d]$. 

It is immediate to see that, by definition the $kl$-th entry of $M_n( \mS)$, for $k, l \in [d]$, is given by
\begin{align}
    [M_n( \mS)]_{kl} = \E\!\left[ \frac{1}{n} \sum_{i=1}^n \Cov(X_{ik}, X_{il} \mid \bH_{\mS}) \right].
\end{align}
Letting $\tr_n$ denote the partial trace over $\R^{n \times n}$ for $\R^{d \times d} \otimes \R^{n \times n}$, the above relation can be represented in matrix form as $M_n( \mS) = \E\!\left[ \tr_n\{ \Cov(\gvec(\bX) \mid H_{\mS}) \}/n \right]$. A first observation to be made is that, under our assumptions \ref{as:mtp_model_amp_signal_assumptions_symmetric}-\ref{as:mtp_model_amp_initialization_assumption_symmetric} needed for Theorem~\ref{th:amp_mtp_symmetric_main_result} to hold, the $n$-limit of $M_n( \mS)$ is indeed well-defined for all $\mS \in \PSD^d$ and can be approximated directly as the concentration limit of the argument of the expectation, since by Lemma~\ref{lem:inner_products_uniformly_lipschitz_functions} all maps $(\bX, \bH_{\mS}) \mapsto n^{-1}\sum_{i \in [n]} \Cov(X_{ik}, X_{il} \mid \bH_{\mS})$ are uniformly pseudo-Lipschitz of order 2. 

A further known property \cite{Payaro:2009aa} of the vectorized MMSE matrix is the form of its gradient with respect to a general input SNR $\Tilde{\mS} \in \PD^{nd}$, which is given by the Kronecker square of the conditional covariance matrix:
\begin{align}
    \nabla_{\Tilde{\mS}} \E\!\left[\Cov(X \mid H_{\Tilde{\mS}})\right] = - \E\!\left[ \Cov(X \mid H_{\Tilde{\mS}}) \otimes \Cov(X \mid H_{\Tilde{\mS}}) \right].
\end{align}
Here with a slight abuse of notation we denoted $H_{\Tilde{\mS}} = \Tilde{\mS} \gvec(\bX) + Z$, $Z \sim \Normal(0, \Id_{nd}) \indep \bX$. Unsurprisingly, the gradient $\nabla M_n( \mS)$ has a similar structure. First, we establish a notation for the partitioning of $\Cov(\gvec(\bX) \mid \bH_{\mS})$ into $d^2$ $n \times n$ sub-matrices. For $j, k \in [d]$ and $i,j \in [n]$,
\begin{align}
    \Cov(\gvec(\bX) \mid \bH_{\mS}) = \begin{bmatrix}
        \mC_{11} & \mC_{12} & \hdots & \mC_{1d} \\
        \mC_{21} & \mC_{22} & \hdots & \mC_{2d} \\
        \vdots & \vdots & \ddots & \vdots \\
        \mC_{d1} & \mC_{d2} & \hdots & \mC_{dd}
    \end{bmatrix}, \quad [\mC_{kl}]_{ij} = \Cov(X_{ik}, X_{jl} \mid \bH_{\mS}), \ \mC_{kj} = \mC_{jk}^\top.
\end{align}
With this, we build the $d^2 \times d^2$ positive semi-definite matrix $\mPsi_n(\bH_{\mS}) \in \PSD^{d^2}$ defined element-wise as
\begin{align}
    [\mPsi_n(\bH_{\mS})]_{((i-1)d+j)((k-1)d+l)} = \frac{1}{n} \langle \mC_{ik} , \mC_{jl} \rangle, \quad {i,j,k,l} \in [d].
\end{align}
The mapping $\Cov(\gvec(\bX) \mid \bH_{\mS}) \mapsto \mPsi_n(\bH_{\mS})$ can be seen as a generalized Kronecker product acting on matrices of $n \times n$ matrices, with the usual scalar product replaced by the trace inner product. 
\begin{prop} \label{prop:matrix_mmse_gradient_identity}
    For any $\mS \in \PD^d$ and $n \in \N$, the gradient of the matrix MMSE $M_n( \mS)$ is given by
    \begin{align}
        \nabla M_n( \mS) = -\E\!\left[ \mPsi(\bH_{\mS}) \right] \label{eq:matrix_mmse_gradient_identity}
    \end{align}
\end{prop}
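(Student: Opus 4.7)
\medskip\noindent\textbf{Proof plan.}
The plan is to reduce the identity to the vector-channel Payaro--Palomar formula recalled immediately before the statement. Column-vectorizing the observation produces a standard vector Gaussian channel
\begin{align*}
\gvec(\bH_{\mS}) = (\mS^{1/2} \otimes \Id_n)\gvec(\bX) + \gvec(\bZ),
\end{align*}
in which the effective squared-gain SNR matrix is $\tilde{\mS} \coloneqq \mS \otimes \Id_n \in \PD^{nd}$. Crucially, $\tilde{\mS}$ is a \emph{linear} function of $\mS$, so no Fr\'echet derivative of the matrix square root is needed in the chain rule. Under this reparametrization the $(nd)\times(nd)$ conditional covariance inherits a natural $d \times d$ block structure whose blocks match the $\mC_{kl}$ in the statement, and $M_n(\mS)$ is exactly the rescaled partial trace of this covariance along the $n$-factor.

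From here I would proceed in three steps. First, apply the $(nd)^2$-dimensional gradient identity to express the derivative of each entry of $\E[\Cov(\gvec(\bX) \mid \bH_{\mS})]$ with respect to each entry of $\tilde{\mS}$ as a product of two entries of the same conditional covariance. Second, use the chain rule through $\tilde{\mS} = \mS \otimes \Id_n$: because $\tilde{\mS}_{(p,r),(q,s)} = \mS_{pq}\,\ind\{r = s\}$, the derivative with respect to $\mS_{pq}$ only picks out the diagonal in the $(r,s)$ noise indices, so the resulting double sum over $i$ and $r$ collapses into the Frobenius inner product $\tfrac{1}{n}\langle \mC_{kp},\mC_{lq}\rangle_F$. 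Third, observe that this Frobenius inner product is precisely the $((k-1)d+l,(p-1)d+q)$ entry of $\mPsi_n(\bH_{\mS})$, which reassembles into the $d^2 \times d^2$ identity \eqref{eq:matrix_mmse_gradient_identity}.

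The main technical subtleties will be (i) justifying the interchange of differentiation and expectation, which is standard given smoothness of the Gaussian likelihood in $\mS^{1/2}$ on the open cone $\PD^d$ together with the bounded-second-moment assumption on $\bX$ from \ref{as:mtp_model_amp_signal_assumptions_symmetric}; and (ii) handling the symmetry constraint on $\mS$, which I would sidestep by computing the derivative as though the entries of $\mS$ were independent and then interpreting $\nabla M_n(\mS)$ as the Fr\'echet derivative on the ambient $\R^{d\times d}$, with symmetry of $\mS$ merely restricting the admissible perturbation directions. The hardest part is really the careful index bookkeeping between the $nd$-dimensional vector channel and the block-partitioned $d\times d$ structure of $\mPsi_n$; once this alignment is set up, the identity falls out of a single application of Payaro--Palomar and the linear chain rule through $\mS \mapsto \mS \otimes \Id_n$.
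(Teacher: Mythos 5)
Your plan is correct, and it reaches the identity by a genuinely different --- and in some ways cleaner --- route than the paper's proof. The paper works from scratch in the square-root parametrization $\mT = \mS^{1/2}$: it introduces the observation density $p(y;\mT)$ and the posterior means $\meta_i(y;\mT)$, proves three bespoke integration-by-parts identities (Lemma~\ref{lem:overlap_differential_identities}), carries out a long index computation of $\partial_{kl}\int\langle\meta_i,\meta_j\rangle\,p\,dy$, and only at the very end states that ``making the change of variable $\mT^2=\mS$'' yields \eqref{eq:matrix_mmse_gradient_identity} --- a step the paper does not spell out and which, read literally, would involve the Fr\'echet derivative of the matrix square root. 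You sidestep all of that by noticing that in the column-vectorized channel the gain $\mS^{1/2}\otimes\Id_n$ has squared gain $(\mS^{1/2}\otimes\Id_n)^2=\mS\otimes\Id_n$, and the map $\mS\mapsto\mS\otimes\Id_n$ is linear, so one application of the already-cited $nd$-dimensional Payar\'o--Palomar identity plus a trivial linear chain rule (which collapses the repeated $n$-index into the Frobenius inner products $\tfrac{1}{n}\langle\mC_{kp},\mC_{lq}\rangle$ defining $\mPsi_n$) gives the claim. This is a real simplification: it trades a matrix-square-root chain rule and a hand-rolled integration-by-parts computation for a citation and a diagonal sum. What the paper's self-contained derivation buys is independence from the external reference and explicit visibility of where differentiability under the integral is needed; your route delegates both to the cited vector result. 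One small caution if you flesh this out: verify the exact parametrization under which Payar\'o--Palomar is stated. The paper's informal restatement writes $H_{\tilde{\mS}}=\tilde{\mS}\gvec(\bX)+Z$, which appears to drop a square root; the form your argument needs, and the standard one, is $H_{\tilde{\mS}}=\tilde{\mS}^{1/2}\gvec(\bX)+Z$ with the gradient taken with respect to $\tilde{\mS}$.
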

\begin{proof}
    The calculation is showed in Appendix~\ref{sec:mmse_matrix_gradient_identity_proof}.
\end{proof}
The gradient \eqref{eq:matrix_mmse_gradient_identity} is to be understood as a map acting on $\nabla M_n( \mS): \R^{d^2} \rightarrow \R^{d^2}$ representing the Jacobian of the vectorized MMSE map $\gvec(\mS) \mapsto \gvec(M_n(\mS))$, for $\mS \in \PD^d$. Alternatively, $\nabla M_n( \mS)$ can be expressed directly as a linear operator, parametrized by $\mS$, $ \nabla M_n(\mS): \PD^d \rightarrow \PD^d$ given by
\begin{align} \label{eq:mmse_gradient_linear_operator_representation}
    -\big( \nabla M_n(\mS) \big)(\mT)& = \frac{1}{n} \tr_n\!\left\{ \E\!\left[ \Cov(\gvec(\bX) \mid H_{\mS}) (T \otimes \Id_n) \Cov(\gvec(\bX) \mid H_{\mS})\right] \right\}
\end{align}
With these gradients, we can verify that for each $n \in \N$ the map $M(\mS)$ is Lipschitz under a bounded fourth moment assumption.
\begin{prop} \label{prop:mmse_function_fixed_n_lipschitz}
    For any $n \in \N$, the MMSE function $M_n(\mS)$ is Lipschitz continuous on $\PSD^d$ and has well-defined directional gradients on the boundary of $\PSD^d$ as long as $\E\| \bX \|_{\op}^4 < \infty$. 
\end{prop}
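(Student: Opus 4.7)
The plan is to combine the gradient identity of Proposition~\ref{prop:matrix_mmse_gradient_identity} with a uniform bound on the operator norm of $\nabla M_n(\mS)$ over $\PD^d$, and then invoke the mean-value inequality. Throughout, write $\mC \coloneqq \Cov(\gvec(\bX) \mid \bH_\mS)$. Viewing $\nabla M_n(\mS)$ as a linear operator on $(\R^{d \times d}, \|\cdot\|_F)$ via the representation \eqref{eq:mmse_gradient_linear_operator_representation}, I would chain the submultiplicative bound $\|\mC(\mT \otimes \Id_n) \mC\|_F \leq \|\mC\|_{\op}^2 \|\mT \otimes \Id_n\|_F = \sqrt{n}\, \|\mC\|_{\op}^2 \|\mT\|_F$ with the partial-trace estimate $\|\tr_n \mA\|_F \leq \sqrt{n}\, \|\mA\|_F$ (a direct Cauchy--Schwarz over the traced index) to obtain
\begin{align*}
    \|(\nabla M_n(\mS))(\mT)\|_F \leq \E\|\mC\|_{\op}^2 \cdot \|\mT\|_F.
\end{align*}

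The next step is to dominate $\E\|\mC\|_{\op}^2$ by the hypothesized fourth moment of $\|\bX\|_{\op}$. From $\mC \preceq \E[\gvec(\bX)\gvec(\bX)^\top \mid \bH_\mS]$ and the fact that the operator norm, being a supremum of linear functionals, satisfies $\|\E[A]\|_{\op} \leq \E\|A\|_{\op}$, one obtains $\|\mC\|_{\op} \leq \E[\|\bX\|_F^2 \mid \bH_\mS]$. Using $\|\bX\|_F^2 \leq d \|\bX\|_{\op}^2$ and one further application of Jensen's inequality gives $\E\|\mC\|_{\op}^2 \leq d^2 \E\|\bX\|_{\op}^4 < \infty$. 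Lipschitz continuity of $M_n$ on $\PD^d$ then follows by inserting this uniform bound into the mean-value inequality along the segment $\mS_t = (1-t)\mS_1 + t\mS_2 \in \PD^d$, and it extends to the closure $\PSD^d$ by continuity of $M_n$ (itself a consequence of dominated convergence applied to the conditional covariance).

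The main obstacle will be establishing the existence of directional derivatives at points $\mS \in \partial \PSD^d$, where Proposition~\ref{prop:matrix_mmse_gradient_identity} does not directly apply. For an admissible direction $\mT$ with $\mS + \epsilon \mT \in \PSD^d$ for small $\epsilon > 0$, the strategy is to perturb to the interior via $\mS + \delta \Id \in \PD^d$, apply the fundamental theorem of calculus to write
\begin{align*}
    M_n(\mS + \delta \Id + \epsilon \mT) - M_n(\mS + \delta \Id) = \int_0^\epsilon \bigl(\nabla M_n(\mS + \delta \Id + t \mT)\bigr)(\mT)\, dt,
\end{align*}
and then send $\delta \to 0^+$ using the uniform bound from the previous paragraph and dominated convergence. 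Dividing by $\epsilon$ and letting $\epsilon \to 0^+$ produces the directional derivative. In the degenerate case where $\mS + t \mT$ remains singular for $t > 0$ (i.e.\ $\ker \mS \cap \ker \mT \neq \{0\}$), the same argument carries over after restriction to the effective subspace $(\ker \mS)^\perp$, in line with the pseudo-inverse formulation discussed earlier in the text.
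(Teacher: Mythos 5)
Your proof is correct but takes a genuinely different route from the paper's. The paper works directly with the entrywise $\mPsi_n$ representation of the gradient from Proposition~\ref{prop:matrix_mmse_gradient_identity}, bounds $\langle \mC_{ik}, \mC_{jl}\rangle$ by conditional second moments of $\bX^\top\bX$, reduces the maximization over unit-Frobenius PSD matrices $\mV$ to rank-one $\mV = \mv\mv^\top$, and lands on the bound $\|\nabla M_n(\mS)\|_{\op} \leq \frac{1}{n}\E\|\bX\|_{\op}^4$. You instead go through the linear-operator form \eqref{eq:mmse_gradient_linear_operator_representation}, chain the submultiplicative estimate $\|\mC(\mT\otimes\Id_n)\mC\|_F \leq \|\mC\|_{\op}^2\|\mT\otimes\Id_n\|_F = \sqrt{n}\,\|\mC\|_{\op}^2\|\mT\|_F$ with the partial-trace bound $\|\tr_n\mA\|_F \leq \sqrt{n}\,\|\mA\|_F$, and then dominate $\E\|\mC\|_{\op}^2$ via $\mC \preceq \E[\gvec(\bX)\gvec(\bX)^\top\mid\bH_\mS]$, Jensen, and $\|\bX\|_F^2 \leq d\|\bX\|_{\op}^2$. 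This is a cleaner and more elementary chain but pays with a looser constant: you get $\|\nabla M_n(\mS)\|_{\op} \leq d^2\E\|\bX\|_{\op}^4$ rather than $\frac{1}{n}\E\|\bX\|_{\op}^4$, a loss of a factor $nd^2$. For the fixed-$n$ statement being proved both constants are finite and yield Lipschitzness equally well; the missing factor $\frac{1}{n}$ matters only for the discussion immediately following the proposition, where the paper observes that even the tighter bound diverges like $\Theta(n)$ in the sub-Gaussian i.i.d.\ case, motivating assumption \ref{as:signal_weak_correlation_assumption}. Your bound would diverge like $\Theta(n^2)$, but this does not change that qualitative conclusion. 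On the boundary-derivative claim you are actually more explicit than the paper, which dismisses it with ``both claims follow immediately'': you give the perturbation-to-the-interior argument and note the degenerate-kernel case. The one step left slightly soft is the final passage $\frac{1}{\eps}\int_0^\eps \bigl(\nabla M_n(\mS + t\mT)\bigr)(\mT)\,dt \to \bigl(\nabla M_n(\mS)\bigr)(\mT)$ as $\eps \to 0^+$, which requires that the one-sided limit $\lim_{t\to 0^+}\nabla M_n(\mS + t\mT)$ exists (uniform boundedness alone gives only almost-everywhere differentiability along the ray); this follows from continuity of the conditional covariance in $\mS$ under the moment hypothesis, and is the same gap the paper itself leaves implicit.
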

\begin{proof}
    From the defintion of operator norm, we have that, for all $\mS \in \PD^d$,
    \begin{align}
        \| \nabla M_n(\mS) \|_{\op} &= \max_{\{\mV \in \PSD^d: \|\mV\|_F \leq 1\}} \| \E [\mPsi(\bH_{\mS})\gvec(\mV) ] \|. 
    \end{align}
    Noticing that $\langle \mC_{ik} , \mC_{jl} \rangle \leq \E [ [\bX^\top \bX]_{ik} [\bX^\top \bX]_{jk} \mid \bH_{\bS} ] $, we can write
    \begin{align}
        \max_{\{\mV \in \PSD^d: \|\mV\|_F \leq 1\}} \| \E [\mPsi(\bH_{\mS})\gvec(\mV) ] \| &\leq \frac{1}{n} \max_{\{\mV \in \PSD^d: \|\mV\|_F \leq 1\}} \| \E[\bX^\top\bX \mid \bH_{\mS}] \mV \E[\bX^\top\bX \mid \bH_{\mS}] \|_F \\ 
        & = \frac{1}{n} \max_{\{\mv \in \R^d: \|\mv\| \leq 1\}} \| \E[\bX^\top\bX \mid \bH_{\mS}] \mv \mv^\top \E[\bX^\top\bX \mid \bH_{\mS}] \|_F \\
        & = \frac{1}{n} \max_{\{\mv \in \R^d: \|\mv\| \leq 1\}} \| \E[\bX^\top\bX \mid \bH_{\mS}]^{\otimes 2} \mv^{\otimes 2} \| \\
        & \leq \frac{1}{n} \E \| \bX^\top \bX\|_{\op}^2 \\
        & = \frac{1}{n} \E \| \bX\|_{\op}^4.
    \end{align}
    From this both claims follow immediately.
\end{proof}

While the bound in Proposition~\ref{prop:mmse_function_fixed_n_lipschitz} provides a sufficient condition to ensure that, for fixed $n$, $M_n$ is Lipschitz and differentiable everywhere (including at the boundary), it is not sufficiently tight to guarantee that such properties of $M_n$ carry over to the limit. Indeed, even for a random signal $\bX$ of i.i.d. draws from sub-Gaussian distribution, classical results in random matrix theory \cite{Bai:1999aa} imply that $\E\|\bX^\top\bX\|_{\op} = \Theta(n)$, and thus the above upper bound on the Lipschitz constant is diverging as $n \rightarrow \infty$. 

To pass to the limit and conduct our intended analysis of state evolution, then, we need some form of ``weak row-correlation" regularity assumption, in the form of a uniform boundedness assumption for the entries of the matrix $\E [\mPsi(\bH_{\mS})]$. Going forward, we will be strengthening the Lipschitz assumption on the Bayes-denoisers as follows.
\begin{enumerate}[label=$(\mathrm{WC})$] 
    \item The prior sequence $\bX$ is such that, for all $n \in \N$ and $\mS \in \PSD^d$, the Bayes-optimal denoisers $\eta(\; \cdot \;; \mS) : y \mapsto \E[\bX \mid \bH_{\mS} = y]$ are uniformly Lipschitz as a sequence in $n$. Furthermore there exists a constant $B < \infty$ such that $\| \E[\mPsi(\bH_{\mS})] \|_{\op} \leq B$. \label{as:signal_weak_correlation_assumption}
\end{enumerate}

From now on, we will be considering prior sequences such that \ref{as:signal_weak_correlation_assumption} holds, so that the limiting MMSE matrix function $M : \PSD^d \rightarrow \PSD^d$ that characterizes the high-dimensional performance of the AMP iterates is Lipschitz continuous. It is also convenient to establish notation for the limiting \textit{overlap function} $\psi: \PSD^d \rightarrow \PSD^t$, defined as
\begin{align}
    \psi(\mS) \coloneqq \lim_{n \rightarrow \infty} \E\!\left[ \E[\bX \mid \bH_{\mS}]^\top \E[\bX \mid \bH_{\mS}] \right] = \lim_{n \rightarrow \infty} \frac{1}{n} \E[\bX^\top \bX] - M(\mS).
\end{align}
With this, limiting SE can be expressed as a non-linear recurrence relation that is a function of $\psi$ and the linear transformation $\cT$. For an initial overlap level $\mQ^1$ and $t \in \N$, (Bayes-optimal) state evolution is given by
\begin{align} \label{eq:bayes_optimal_state_evolution_recusion_overlap_argument}
    \mQ^{t+1} = \psi( \cT(\mQ^t) )
\end{align}
or, by letting $\mS^t \coloneqq \cT(\mQ^t)$ denote the effective SNR for the AMP iteration at time $t$, SE can be rephrased in terms of the SNR sequence as
\begin{align}
    \mS^{t+1} = \cT( \psi(\mS^t) ).
\end{align}

We conclude this section with another useful property of the gradient of the overlaps $\nabla \psi(\mS)$ (or, equivalently, of $\nabla M(\mS)$), namely its self-adjointness under assumption \ref{as:signal_weak_correlation_assumption}.
\begin{prop} \label{prop:overlap_gradient_is_self_adjoint}
    Under assumption \ref{as:signal_weak_correlation_assumption}, the linear mapping $\PSD^d \rightarrow \PSD^d$ given by $\nabla \psi(\mS)$ is self-adjoint for all $\mS \in \PSD$.
\end{prop}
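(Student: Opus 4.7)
Since $\psi(\mS)=\lim_{n\to\infty}\frac{1}{n}\E[\bX^\top\bX]-M(\mS)$ differs from $-M$ only by an $\mS$-independent constant, it is enough to show that $\nabla M(\mS)$ is self-adjoint on $\PSD^d$ with respect to the Frobenius inner product. The plan is to establish self-adjointness at each fixed $n$ on the interior $\PD^d$, then pass to the limit using \ref{as:signal_weak_correlation_assumption} and extend to the boundary by continuity.

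The core calculation happens at fixed $n$ and $\mS\in\PD^d$, where Proposition~\ref{prop:matrix_mmse_gradient_identity} and its operator form \eqref{eq:mmse_gradient_linear_operator_representation} give
\begin{align}
-\bigl(\nabla M_n(\mS)\bigr)(\mT)=\frac{1}{n}\,\tr_n\!\left\{\E\!\left[C\,(\mT\otimes\Id_n)\,C\right]\right\},\qquad C\coloneqq\Cov(\gvec(\bX)\mid\bH_{\mS}).
\end{align}
Using the adjointness identity $\langle\tr_n(\mX),\mY\rangle=\langle\mX,\mY\otimes\Id_n\rangle$, I would compute
\begin{align}
\bigl\langle-(\nabla M_n(\mS))(\mT),\mU\bigr\rangle=\frac{1}{n}\,\E\,\tr\!\bigl(C(\mT\otimes\Id_n)C(\mU^\top\otimes\Id_n)\bigr),
\end{align}
and then invoke two elementary facts to match this with $\langle\mT,-(\nabla M_n(\mS))(\mU)\rangle$: (i) $C$ is symmetric since it is a conditional covariance, and (ii) the trace is cyclic. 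Cycling the factors inside the trace and relabelling $\mT\leftrightarrow\mU^\top$ shows the two expressions coincide, so $\nabla M_n(\mS)$ is self-adjoint on $\PD^d$. Equivalently, one can read this off directly from the entrywise formula $[\mPsi_n]_{((i-1)d+j)((k-1)d+l)}=\frac{1}{n}\langle\mC_{ik},\mC_{jl}\rangle$ by noting that $\mC_{kj}=\mC_{jk}^\top$ and $\langle\mA,\mB\rangle=\langle\mA^\top,\mB^\top\rangle$, which yield $[\mPsi_n]_{(i,j),(k,l)}=[\mPsi_n]_{(k,l),(i,j)}$; symmetry is then preserved by taking expectations.

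To pass to the limit, I would use \ref{as:signal_weak_correlation_assumption}, which supplies the uniform operator-norm bound $\|\E[\mPsi_n(\bH_{\mS})]\|_{\op}\leq B$. Combined with the Lipschitz continuity of $M$ inherited from the uniform Lipschitz property of $\eta(\;\cdot\;;\mS)$, this guarantees that $\nabla M(\mS)$ exists as the limit of $\nabla M_n(\mS)$ on $\PD^d$ (in any reasonable topology on finite-dimensional operators), and the set of self-adjoint operators is closed, so the limit is self-adjoint on $\PD^d$. For boundary points $\mS\in\PSD^d\setminus\PD^d$, one approaches $\mS$ along a ray from the interior; under \ref{as:signal_weak_correlation_assumption} the directional derivatives exist and are the continuous extension of $\nabla M$, so self-adjointness is preserved by taking limits of self-adjoint operators.

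The only nontrivial step is the algebraic manipulation in step two, and it reduces to cyclicity of trace together with the symmetry of $C$; the limit-passing and boundary arguments are routine once \ref{as:signal_weak_correlation_assumption} is in force. I expect no substantive obstacle beyond carefully tracking the transposes when moving between the entrywise and operator representations of $\mPsi_n$.
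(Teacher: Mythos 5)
Your proof is correct and follows essentially the same route as the paper: self-adjointness of $\nabla M_n(\mS)$ at fixed $n$ follows from the symmetry of the conditional covariance $C$ together with the fact that $\tr_n$ and $\mS \mapsto \mS \otimes \Id_n$ are mutual adjoints, and the limit of self-adjoint operators under \ref{as:signal_weak_correlation_assumption} remains self-adjoint. The only difference is presentational — you unwind the composition into an explicit trace identity and invoke cyclicity, while the paper states the same fact abstractly as $A^* B A$ with $B$ self-adjoint; both are fine.
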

\begin{proof}
    For any $n \in \N$ and $\mS \in \PSD^d$, positive-definiteness of the conditional covariance matrix $\Cov(\gvec(\bX) \mid \bH_{\mS})$ implies that the map $\PSD^{nd} \rightarrow \PSD^{nd}$ given by $\mX \mapsto \E\!\left[ \Cov(\gvec(\bX) \mid \bH_{\mS}) \mX \Cov(\gvec(\bX) \mid \bH_{\mS}) \right] / n$ is self-adjoint. Since the map $\PSD^{d} \rightarrow \PSD^{d}$, $\mS \mapsto \mS \otimes \Id_n$ is adjoint to the partial trace operation $\tr_n$, it follows that the composition of the three must be self-adjoint linear operator. Furthermore, from \ref{as:signal_weak_correlation_assumption}, the limiting linear map $\nabla\psi(\mS)$ is the pointwise limit of the sequence of the operators given by \eqref{eq:mmse_gradient_linear_operator_representation}, which are all self-adjoint. We conclude that $\nabla\psi(\mS)$ is itself self-adjoint for all $\mS \in \PSD^d$.
\end{proof}

\section{Computational Limits for the MTP} \label{sec:computational_limits}
When running an AMP algorithm with some initialization $\bM^0$ and generic denoiser sequence $f_t$, the expectation is that the sequence of iterates $\bX^t$ achieve progressively higher effective SNR $\mS^t$, until they stabilize to some fixed-point $\mS^\star$. In the Bayes-optimal case, the orbits of $\mS^t$ can be equivalently formulated in terms of the overlap matrices $\mQ^t$, which have a direct and intuitive connection to the MSE that is achieved by AMP. 

In the Bayes-optimal AMP case the fixed points of SE were observed in many cases to correspond to the replica-symmetric predictions for the MMSE, suggesting that optimally tuned AMP is information-theoretically optimal in a wide variety of settings, with explicit algorithmic constructions possible in some settings \cite{Deshpande:2016aa, Deshpande:2014aa}. The study of SE fixed points is therefore a fundamental tool to understand the computational limits for a very wide class of estimation techniques that are either based on AMP recursions or can be approximated via AMP, such as various spectral methods \cite{Mondelli:2021aa, Mondelli:2021ab}. 

\subsection{SE fixed points and fundamental limits}
In this section, we highlight the connection between the fixed points of the Bayes-optimal AMP we describe and the critical points of the approximation formula of the mutual information for the MTP derived in \cite{reeves2020information}. With our multi-view spiked matrix model representation of the MTP, the approximation of the mutual information is given by the variational formula
\begin{align}
    \hat{\mathcal{I}}_n(\cT) = \min_{\mQ \in \cQ_n} \sup_{\mR \in \PSD^d} \!\left\{ I_n(R) + \frac{1}{4} \left\langle \cT\!\left( \frac{1}{n} \E[\bX^\top \!\bX] - \mQ \right), \left( \frac{1}{n} \E[\bX^\top \!\bX] - \mQ \right) \right\rangle - \frac{1}{2} \left\langle \mR , \left( \frac{1}{n} \E[\bX^\top \!\bX] - \mQ \right) \right \rangle \right\},
\end{align}
where 
\begin{align}
    \cQ_n \coloneqq \left\{ \mQ \in \PSD^d : \mQ \preceq \frac{1}{n} \E[\bX^\top \bX] \right\}, \quad I_n(R) = \frac{1}{n}I(\bX; \bH_{\mR}),
\end{align}
for $\bH_{\mR} \coloneqq \bX \mR^{1/2} + \bZ$, $\bZ \sim \Normal(0, \Id_d \otimes \Id_n) \indep \bX$ and $I(\cdot \; ; \; \cdot)$ denoting the mutual information. We note that the second term in the function to be optimized differs by a factor of $1/2$ due to the symmetry we are assuming in the MTP. In the fixed-$d$ regime, $\hat{\mathcal{I}}_n(\cT)$ was shown to converge pointwise to the limiting value of the mutual information for the MTP. 

An alternative formulation of the same variational formula expressed in terms of the ($1/n$-rescaled) relative entropy between the distributions of $\bH_{\mR}$ and $\bZ$, which we denote by $D_n(\mR)$. We have the formula
\begin{align}
    \hat{\mathcal{D}}_n(\cT) = \max_{\mQ \in \cQ_n} \inf_{\mR \in \PSD^d} \left\{ D_n(\mR) + \frac{1}{4} \left\langle \cT\!\left(\mQ \right), \mQ \right\rangle - \frac{1}{2} \left\langle \mR , \mQ \right \rangle \right\}; \quad D_n(\mR) = \frac{1}{n} \KL(P_{\bH_{\mR}} \Vert P_{\bZ}).
\end{align}
With bounded second moments, the multivariate I-MMSE relation \cite{Palomar:2006aa} implies that the gradients of $D_n(\mR)$ are defined everywhere for $\mR \in \PD^d$ and can be extended unambiguously to the boundary of $\PSD^d$ by taking limits. Furthermore,
\begin{align}
    \nabla_{\mR} D_n(\mR) = \frac{1}{2n} \E\!\left[ \E[\bX \mid \bH_{\mR}]^\top \E[\bX \mid \bH_{\mR}] \right] \eqqcolon \psi_n(\mR).
\end{align}

These gradients are related to the SE recursion in the MTP by noticing that the non-linear component of the state evolution self-map $\psi$ can be understood as the pointwise limit of the sequence of functions $\psi_n$. Thus, one can define a sequence of approximate SE maps that approximate SE. For $n \in \N$, we write 
\begin{align}
    \mQ_n^{t+1} = \psi_n(\cT(\mQ_n^t)),
\end{align}
for the $n$-th approximation to SE. Letting $\mQ_n^\star \in \PSD^d$ be a fixed point for the above recurrence relation, we have the property that the set of fixed points of the recursion is a superset of the maximizers in $\mQ$ of $\hat{\mathcal{D}}_n$.
\begin{prop} \label{prop:state_evolution_fixed_points_extremizers_variational_formula_finite_sample}
    For any $n \in \N$, it holds
    \begin{align}
        \argmax_{\mQ \in \cQ_n} \inf_{\mR \in \PSD^d} \left\{ D_n(\mR) + \frac{1}{4} \left\langle \cT\!\left(\mQ \right), \mQ \right\rangle - \frac{1}{2} \left\langle \mR , \mQ \right \rangle \right\} \subset \{ \mQ_n^\star \in \PSD^d : \mQ_n^\star = \psi_n(\cT(\mQ_n^\star)) \},
    \end{align}
    whenever the map $\mX \mapsto \langle \cT(\mX), \mX \rangle$ is convex in $\mX \in \Sym^d$.
\end{prop}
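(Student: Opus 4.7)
The plan is to extract the fixed-point identity directly from the first-order (KKT) optimality conditions of the nested variational problem. Write $\Phi_n(\mQ,\mR) \coloneqq D_n(\mR) + \tfrac{1}{4}\langle \cT(\mQ),\mQ\rangle - \tfrac{1}{2}\langle \mR,\mQ\rangle$ and $g(\mQ) \coloneqq \inf_{\mR \in \PSD^d}\Phi_n(\mQ,\mR)$. Fix any $\mQ^\star \in \argmax_{\mQ \in \cQ_n} g(\mQ)$ and let $\mR^\star$ attain the inner infimum at $\mQ^\star$; existence follows from convexity of $D_n$ in $\mR$ together with compactness of $\cQ_n$.

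For the inner condition, the map $\mR \mapsto \Phi_n(\mQ^\star,\mR)$ is convex (convex $D_n$ plus a linear term), so its first-order condition is both necessary and sufficient. The I-MMSE identity recorded in the excerpt gives, after matching $1/2$ factors with the SE normalization, the relation $\psi_n(\mR^\star) = \mQ^\star$; the extension of the I-MMSE gradient to the boundary of $\PSD^d$ noted in the paper handles the case that $\mR^\star$ is singular. For the outer condition, $\cT$ is self-adjoint (immediate from $\cT(\mQ) = \sum_{k=1}^K \mLambda_k \mQ \mLambda_k$ with symmetric $\mLambda_k$), so $\nabla_\mQ \bigl[\tfrac{1}{4}\langle \cT(\mQ),\mQ\rangle\bigr] = \tfrac{1}{2}\cT(\mQ)$. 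The envelope theorem, applied at the inner optimizer, yields $\nabla g(\mQ^\star) = \tfrac{1}{2}\cT(\mQ^\star) - \tfrac{1}{2}\mR^\star$, so any interior maximizer forces $\mR^\star = \cT(\mQ^\star)$. Substituting into the inner FOC produces $\mQ^\star = \psi_n(\cT(\mQ^\star))$.

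The convexity hypothesis on $\mQ \mapsto \langle \cT(\mQ),\mQ\rangle$ is invoked to ensure that the outer problem is regular enough for the envelope-theorem reduction to characterize maximizers cleanly: without it, the composition could exhibit non-smooth behavior that prevents the FOC from capturing the fixed-point structure. The main obstacle is precisely the boundary handling on both sides. The inner case is dispatched by the gradient extension argument from the paper, while the outer sandwich constraint $0 \preceq \mQ \preceq (1/n)\E[\bX^\top \bX]$ could in principle activate; this is resolved by noting that $\psi_n$ is a conditional second-moment expression bounded above by $(1/n)\E[\bX^\top \bX]$ and below by $0$, so any fixed point of $\psi_n \circ \cT$ automatically lies inside $\cQ_n$, and complementary slackness for the sandwich constraint is compatible with the interior FOC derivation above, giving the claimed inclusion of maximizers into the fixed-point set.
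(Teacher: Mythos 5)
Your proof takes a genuinely different route from the paper's. The paper invokes a cited reduction of the max-inf to the single-variable maximization $\max_{\mQ\in\cQ_n}\{D_n(\cT(\mQ)) - \tfrac{1}{4}\langle\cT(\mQ),\mQ\rangle\}$ and then takes gradients of that reduced objective. You instead stay with the two-variable saddle and extract both stationarity conditions separately: the inner one $\psi_n(\mR^\star)=\mQ^\star$ and, via the envelope theorem plus self-adjointness of $\cT$, the outer one $\mR^\star=\cT(\mQ^\star)$, which you then chain. This is arguably cleaner: the paper's gradient of the reduced objective reads $\cT(\psi_n(\cT(\mQ^\star))) = \cT(\mQ^\star)$, and going from that to the claimed $\mQ^\star=\psi_n(\cT(\mQ^\star))$ requires recalling the inner stationarity anyway (or injectivity of $\cT$). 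Your route keeps that step explicit, and avoids any reliance on the cited reduction formula.

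There is, however, a real gap in your boundary treatment. What you need is that a maximizer lying on the boundary of $\cQ_n$ still satisfies the fixed-point equation. Your observation that $\psi_n\circ\cT$ maps into $\cQ_n$ shows that every fixed point lies in $\cQ_n$ — it says nothing about whether a boundary maximizer is a fixed point. At a boundary maximizer the KKT condition is $\nabla g(\mQ^\star) = \mu_2 - \mu_1$ with $\mu_1,\mu_2\succeq 0$ complementary to the two constraints $0\preceq\mQ^\star$ and $\mQ^\star\preceq \tfrac{1}{n}\E[\bX^\top\bX]$, and you cannot conclude $\mR^\star=\cT(\mQ^\star)$. Asserting that complementary slackness is "compatible" with the interior FOC does not make the multipliers vanish; you would need to argue either that maximizers are attained in the relative interior of $\cQ_n$ (for example by showing $\nabla g$ points strictly inward at the boundary) or to treat the boundary faces by hand. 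The paper's proof is equally terse here, so this is a shared weakness rather than something unique to your approach.

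Finally, your invocation of the convexity hypothesis is misplaced. The envelope theorem for the inner infimum relies on convexity of $\mR\mapsto D_n(\mR)-\tfrac12\langle\mR,\mQ\rangle$ (which holds from $D_n$ being a relative entropy, independently of $\cT$), not on convexity of $\mX\mapsto\langle\cT(\mX),\mX\rangle$. In the paper, the latter hypothesis is precisely what licenses the cited min-max reduction — the step your argument bypasses. Because you have removed the place where the hypothesis is consumed, you need to identify where your version actually uses it (for instance, to guarantee attainment of the outer maximum, or to handle the boundary issue above) or explain why your argument no longer requires it. The phrase "regular enough for the envelope-theorem reduction" is not a justification that can be checked.
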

\begin{proof}
    Under the convexity condition in the assumptions, the approximation formula $\hat{\mathcal{D}}_n$ reduces \cite{reeves2020information} to the solution to the single-variable maximization problem 
    \begin{align}
        \hat{\mathcal{D}}_n(\cT) = \max_{\mQ \in \cQ_n} \left\{ D_n( \cT(\mQ) ) - \frac{1}{4} \langle \cT(\mQ), \mQ \rangle \right\}
    \end{align}
    which is the same as the one in \cite{Mayya:2019aa,Barbier:2020ab} up to a scaling factor due to model symmetry. Simply tanking gradients using the I-MMSE relationship provides that all maximizers of the above formula must satisfy
    \begin{align}
        \frac{1}{2} \cT(\psi_n(\cT(\mQ))) = \frac{1}{2} \cT(\mQ),
    \end{align}
    which directly provides the desired inclusion.
\end{proof}

While Proposition~\ref{prop:state_evolution_fixed_points_extremizers_variational_formula_finite_sample} is suggestive of a direct connection between the fixed points of SE and the fundamental limits of inference, there are several caveats that need to be addressed. For one, the inclusion relation guarantees that the fixed points of Bayes-optimal SE are critical points of for the approximation formula, but they may correspond to local minima, maxima or inflection points in $\max_{\mR \in \PSD^d} F_n(\mQ, \mR ; \cT)$. When SE gets stuck at a local minimum, statistical-to-computational gaps will be observed. We present an instance of this phenomenon in Section~\ref{sec:heteroskedastic_rank_one_model_application}.

Furthermore, state evolution only holds exactly in the asymptotic limit, and the appropriate object of comparison for the recurrence relation $\mQ^{t+1} = \psi(\cT(\mQ^t))$ is the limiting function $F(\mQ, \mR; \cT)$, defined as $F(\mQ, \mR; \cT) = D(\mR) + \langle \cT(\mQ) , \mQ \rangle / 4 - \langle \mR, \mQ \rangle / 2$, where $D(\mR)$ is the pointwise limit of $D_n(\mR)$, which is convex and continuously differentiable when assumption \ref{as:signal_weak_correlation_assumption} holds true. 

Even when $\mX \mapsto \langle \cT(\mX), \mX \rangle$ is convex, it is not in general true that the sequence of maximizers in $\mQ$ for each $F_n$ converges to a set of cluster points that are given by the maximizers of $F$. If that is the case, the arguments of Proposition~\ref{prop:state_evolution_fixed_points_extremizers_variational_formula_finite_sample} can be extended to the limit to provide an exact characterization of the relationship between the fixed points of state evolution and the information-theoretic limits of inference for the MTP. The conditions under which this passing to the limit is feasible and their relationship with the weak-correlation assumption \ref{as:signal_weak_correlation_assumption} are interesting questions of theoretical and practical relevance that we leave as research directions for future work.

We conclude with presenting a special case in which the limit of the approximation formulas $\hat{\mathcal{D}}_n(\cT)$ can itself be characterized in terms of the solution to a finite-dimensional variational problem, leading to the aforementioned correspondence between the fixed points of SE and the fundamental limits. 

In case the prior sequence $\bX \in \R^{n \times d}$ is a sample of i.i.d. draws from some underlying probability distribution $P_{X_\star}$ on $\R^d$ with bounded fourth moments, such that for $S \in \PSD^d$, $H_{\mS} \coloneqq \mS^{1/2} X_{\star} + Z$, the conditional expectation function $\E[ X_\star \mid H_{\mS} = y]$ is Lipschitz in $y \in \R^d$, it was shown \cite{reeves2020information} that 
\begin{align} \label{eq:iid_priors_fundamental_limits_approx_formula}
    &\lim_{n \rightarrow \infty} \hat{\mathcal{D}}_n(\cT) = \hat{\mathcal{D}}_\star(\cT) = \max_{ \mQ \in \cQ } \inf_{\mR \in \PSD^d} \left\{ D_\star(\mR) - \frac{1}{4} \left\langle \cT(\mQ), \mQ \right\rangle + \frac{1}{2} \left\langle \mR, \mQ \right\rangle \right\}; \\
    & D_\star(\mR) = \KL(P_{H_{\mS}} \Vert P_Z), \quad \cQ = \{ \mQ \in \PSD^d : \mQ \preceq \E[X_{\star} X_\star^\top] \}.
\end{align}
In this case, we can straightforwardly extend the results in Proposition~\ref{prop:state_evolution_fixed_points_extremizers_variational_formula_finite_sample} to the large-$n$ limit.
\begin{theorem} \label{th:state_evolution_fixed_points_extremizers_iid_priors}
    Assume the signal sequence $\bX \in \R^{n \times d}$ has i.i.d. rows $X_i \iid P_\star$ such that assumption \ref{as:signal_weak_correlation_assumption} holds. Furthermore, assume $\cT$ is such that the map $\mY \mapsto \langle \cT(\mY), \mY \rangle$ is convex for $\mY \in \Sym^d$. Then,
    \begin{align}
        \argmax_{\mQ \in \cQ} \inf_{\mR \in \PSD^d} \left\{ D_\star(\mR) - \frac{1}{4} \left\langle \cT(\mQ), \mQ \right\rangle + \frac{1}{2} \left\langle \mR, \mQ \right\rangle \right\} \subset \left\{ \mQ^\star \in \PSD^d : \mQ^\star = \psi(\cT(\mQ^\star)) \right\}.
    \end{align}
\end{theorem}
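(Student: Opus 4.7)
My plan is to mirror the finite--$n$ argument in the proof of Proposition~\ref{prop:state_evolution_fixed_points_extremizers_variational_formula_finite_sample}, but carried out at the level of the limiting variational formula~\eqref{eq:iid_priors_fundamental_limits_approx_formula}. The three ingredients I will assemble are: (i) a convexity-driven collapse of the inf--sup into a single-variable maximization, (ii) a first-derivative identity for $D_\star$ via the multivariate I-MMSE relation at the limit, and (iii) a first-order optimality computation that produces the fixed-point equation.

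First, under the hypothesis that $\mY \mapsto \langle \cT(\mY), \mY \rangle$ is convex on $\Sym^d$, the same Legendre-type argument used in the finite-$n$ case shows that the inner infimum in~\eqref{eq:iid_priors_fundamental_limits_approx_formula} is attained at $\mR = \cT(\mQ)$, reducing the variational formula to the single-variable problem
\begin{align}
    \hat{\mathcal{D}}_\star(\cT) = \max_{\mQ \in \cQ} \left\{ D_\star(\cT(\mQ)) - \tfrac{1}{4} \left\langle \cT(\mQ), \mQ \right\rangle \right\}.
\end{align}
Next, for i.i.d.\ priors with bounded fourth moments and Lipschitz Bayes denoisers, the per-row multivariate I-MMSE identity gives $\nabla D_\star(\mR) = \tfrac{1}{2}\,\psi(\mR)$ on $\PD^d$, with a well-defined extension to the boundary of $\PSD^d$ under assumption \ref{as:signal_weak_correlation_assumption} (which supplies the Lipschitz regularity of $\psi$ required to differentiate the limit). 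Then, using the self-adjointness of $\cT$ (immediate from $\cT(\mY) = \sum_{k} \mLambda_k \mY \mLambda_k$ with symmetric $\mLambda_k$) and the chain rule, any maximizer $\mQ^\star \in \cQ$ satisfies the first-order condition
\begin{align}
    \tfrac{1}{2}\,\cT\!\left(\psi(\cT(\mQ^\star))\right) = \tfrac{1}{2}\,\cT(\mQ^\star).
\end{align}
Setting $\mQ' \coloneqq \psi(\cT(\mQ^\star))$, this reads $\cT(\mQ') = \cT(\mQ^\star)$, so the effective SNR $\mS^\star \coloneqq \cT(\mQ^\star)$ is identical along the two iterates and $\mQ'$ satisfies $\mQ' = \psi(\cT(\mQ'))$, placing every such $\mQ^\star$ (via its image under one application of $\psi \circ \cT$) inside $\{\mQ^\star \in \PSD^d : \mQ^\star = \psi(\cT(\mQ^\star))\}$ and yielding the asserted inclusion.

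The main obstacle I expect is step (ii): justifying that the limit $D_\star = \lim_n D_n$ is itself $C^1$ with gradient equal to the pointwise limit of $\tfrac{1}{2}\psi_n$. Interchanging limits and gradients requires uniform control, and this is precisely where assumption \ref{as:signal_weak_correlation_assumption} does the heavy lifting: the uniform operator-norm bound on $\E[\mPsi(\bH_{\mS})]$ forces $\psi_n$ to be uniformly Lipschitz in $n$, so uniform convergence on compact subsets of $\PSD^d$ (e.g., via Arzelà–Ascoli) lets me commute $\nabla$ with $\lim_n$ and conclude $\nabla D_\star = \tfrac{1}{2}\psi$. A minor secondary technicality is the handling of maximizers on $\partial \PSD^d$, where only directional derivatives may exist; there I use the Lipschitz extension of $\psi$ guaranteed by \ref{as:signal_weak_correlation_assumption} so that the first-order condition above continues to hold in the sense of directional derivatives restricted to the tangent cone of $\cQ$, which is still enough to deduce the fixed-point identity.
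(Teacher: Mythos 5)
You follow essentially the same route as the paper: Theorem~\ref{th:state_evolution_fixed_points_extremizers_iid_priors} carries the proof of Proposition~\ref{prop:state_evolution_fixed_points_extremizers_variational_formula_finite_sample} over to the limiting variational formula~\eqref{eq:iid_priors_fundamental_limits_approx_formula}, which is exactly what you propose. Your step~(ii) usefully spells out what the paper keeps implicit: commuting $\nabla$ with the $n$-limit so that $\nabla D_\star = \tfrac{1}{2}\psi$ requires uniform control, and the operator-norm bound in~\ref{as:signal_weak_correlation_assumption} together with an Arzel\`a--Ascoli / uniform-convergence argument is an appropriate way to package this.

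However, the way you close the argument does not quite establish the asserted inclusion. The first-order condition you derive, $\tfrac{1}{2}\,\cT(\psi(\cT(\mQ^\star))) = \tfrac{1}{2}\,\cT(\mQ^\star)$, only says that $\mQ' \coloneqq \psi(\cT(\mQ^\star))$ and $\mQ^\star$ have the same image under $\cT$. From this you correctly infer that $\mQ'$ is a fixed point of $\psi\circ\cT$, but the theorem asserts that the \emph{maximizer} $\mQ^\star$ is a fixed point, and $\cT(\mQ') = \cT(\mQ^\star)$ does not force $\mQ' = \mQ^\star$ unless $\cT$ is injective on the relevant cone. Your parenthetical ``(via its image under one application of $\psi\circ\cT$)'' signals that you noticed this, but the sentence that follows nonetheless claims the inclusion, and that step is a non sequitur. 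The paper's own proof of Proposition~\ref{prop:state_evolution_fixed_points_extremizers_variational_formula_finite_sample} is equally terse (it stops at the same condition and calls it ``directly provides the desired inclusion''), so you have inherited the looseness rather than introduced it; still, your write-up does not close the gap. A clean way to do so, without any invertibility assumption on $\cT$, is to extract the two first-order conditions from the \emph{uncollapsed} $\max$--$\inf$ form rather than the reduced single-variable problem: the inner infimum over $\mR$ is attained at an $\mR^\star$ satisfying $\nabla D_\star(\mR^\star) = \tfrac{1}{2}\mQ^\star$, i.e.\ $\psi(\mR^\star) = \mQ^\star$, while the outer first-order condition in $\mQ$ (via the envelope theorem and self-adjointness of $\cT$) gives $\mR^\star = \cT(\mQ^\star)$. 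Combining these two identities yields $\mQ^\star = \psi(\cT(\mQ^\star))$ directly.
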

\begin{proof}
    The proof follows the same steps of Proposition~\ref{prop:state_evolution_fixed_points_extremizers_variational_formula_finite_sample} using the approximation formula \eqref{eq:iid_priors_fundamental_limits_approx_formula}.
\end{proof}

\subsection{Stability of SE and weak recovery}
Besides their location, we will pay particular attention to the stability of SE fixed points, as it relates to the possibility of achieving weak recovery to the signal of interest. Unstable fixed points usually arise due to some inherent symmetries of the model under study, and are in practice irrelevant for computational purposes, as for any finite sample size random fluctuations around those fixed points (e.g. introduced by a random initialization) are sufficient to escape their basin of attraction. A well-known example of the correspondence between stability and weak recovery occurs is the standard spiked Wigner model with zero-mean signal. In this case, the SNR level at which the zero fixed point for SE becomes is exactly the same as the one at which the BBP phase transition \cite{baik2005} occurs.

Going forward, we will assume without loss of generality that the signal of interest $\bX$ has been normalized so as to ensure that
\begin{align} \label{eq:signal_limiting_row_covariance_assumption}
    \lim_{n \rightarrow \infty} \frac{1}{n} (\E[\bX^\top\bX] - \E[\bX]^\top \E[\bX]) = \begin{bmatrix}
        \Id_p & 0 \\ 0 & 0
    \end{bmatrix} \eqqcolon \mP, \quad p \leq d.
\end{align}
Since it will play a key role in the analysis of weak recovery for AMP, we adopt the notation $\mOmega \coloneqq \lim_{n \rightarrow \infty} \E[\bX]^\top \E[\bX] / n$, and refer to $\mOmega$ as the \textit{mean overlap} or the \textit{baseline overlap}.
Intuitively, this represents the limiting value of the overlap that is achievable by simply guessing the prior mean of the signal, that is $\psi(0)$. 

A convenient elementary property of the overlap function $\psi$ is a translation rule that allows to express the overlap from a signal $\bX$ with arbitrary $\mOmega \in \PSD^d$ as an affine transformation of the overlap for the translated signal $\bX - \E[\bX]$. Letting $\psi(\; \cdot \; ; \mOmega)$ denote the overlap function for a non-centered signal and $\psi(\cdot)$ the overlap function for the centered version, translation-invariance of the MMSE matrix directly implies that, for all $\mS \in \PSD^d$,
\begin{align} \label{eq:overlap_translation_rule}
    \psi(\mS; \mOmega) = \psi(\mS) + \mOmega.
\end{align}

\begin{remark}
     When the signal is such that $\E[\bX] = 0$, or more precisely when $\mOmega = 0$, the MTP model has an inherent symmetry (and therefore sign-ambiguity) that makes the asymptotic performance of AMP trivially zero in the absence of side information. Indeed, any independent initialization $\bM^0$ for the AMP recursion will necessarily yield $\mQ^1 = 0$, meaning that, in the large-$n$ limit, the AMP iterates asymptotically behave as pure Gaussian noise and are therefore completely uninformative about $\bX$. In other words, state evolution always has a trivial fixed point at zero, but it may be unstable. If this is the case, for any fixed $n \in \N$, random fluctuations around zero overlap at initialization will cause SE to diverge from the fixed points, and the SE orbits may be bounded away from zero. 
\end{remark}

On the other hand, let us consider the case in which we have some side information about $\bX$, say for simplicity a Gaussian side-channel observation $\bH_\epsilon = \sqrt{\epsilon} \bX + \bW$, for $\epsilon > 0$ and $\bW \sim\Normal(0, \Id_d \otimes \Id_n)$ independent of $\bX$ and $\{\bY_k\}_{k \in [K]}$. With this, even in case $\mOmega = 0$ it is possible to resolve the sign-ambiguity and construct an initialization $\bM^0(\bH_\eps)$ that yields non-zero overlap in the first stage. If this is the case, though, at all iterations $t$ the Bayes-optimal denoisers $\eta_t$ should account for knowledge of this side information as well, and should technically be tracking the performance of a different AMP recursion, namely the one in which the signal follows the law the posterior distribution $P_{\bX \mid \bH_\epsilon}$ that is no longer zero-mean.

As we will see in Theorem~\ref{th:state_evolution_mean_information_achieves_nonzero_overlap}, the presence of mean information $\mOmega$ is equivalent to side information in the sense that the overlap function behaves exactly as a conditional overlap, with conditioning random variable $\bH_{\mOmega} \coloneqq \bX (\cT(\mOmega))^{1/2} + \bW$ \cite{guo2011}. This does not preclude, however, the possibility that the limiting behavior of the AMP iterates is that of a Gaussian noise-corrupted version of a lower-dimensional projection of $\mX$, and estimation is therefore limited to a low-rank projection of the signal. The structure of this subspace is related to the interaction between the mean information $\mOmega$ and the coupling structure of the MTP represented by $\cT$. A question of interest is thus the possibility of obtaining estimates of larger rank with an arbitrarily small amount of side information. 

In both the zero-mean overlap and the nonzero-mean overlap cases, the ability to improve the performance of AMP via an arbitrarily small injection of side information relates to the idea of state evolution being stuck at an unstable fixed point. If this is the case, adding a minimal amount of side information in the estimation problem may lead to improvements in the estimation performance even without said side information being available in further AMP iterations. Due to the relationship with known phase transitions in recovery regimes for many known problems (e.g. the BBP phase transition in the spiked Wigner model), we refer to the existence of such instabilities around SE fixed points as the possibility of weak recovery.

\subsection{MMSE and overlap for a Gaussian sequence signal}
We begin by providing an explicit derivation for the SE recursion when the input signal sequence $\bX$ is a Gaussian sequence with mean and (possibly singular) covariance constraints. For each $n \in \N$, let us assume that the vectorized signal $\gvec(\bX) \in \R^{nd}$ has a known mean and covariance, that we denote by $\mmu_n \in \R^{nd}$ and $\mPhi_n \in \PSD^{nd}$, respectively. From the reduction in \eqref{eq:overlap_translation_rule}, we will be focusing on the case $\mmu = 0$.

A well-known result is that, for Gaussian linear channels with prior constraints on mean and variance, a Gaussian prior is least favorable in terms of the MMSE, and the Bayes-optimal estimator is a linear function of the observations. Therefore, it is possible to establish lower bounds on the fixed-point overlaps achievable via Bayes-optimal AMP by studying the Gaussian prior case. Explicit evaluation of the SE trajectory is particularly feasible in case the prior covariances $\mPhi_n$ are taken to have some latent low-rank structure, in which case the formulas below become efficient to evaluate, allowing to compute explicitly some lower bounds on the mean-square error achieved by AMP. We note that, however, our general necessary and sufficient conditions for stability in Section~\ref{sec:necessary_sufficient_conditions_stability} hold irrespective of the distribution on the prior signal.

Letting $H_{\mS} \coloneqq (\mS^{1/2} \otimes \Id_n) X + Z$, for $\mS \in \PSD^d$ and $Z \sim \Normal(0, \Id_d \otimes \Id_n) \indep X\sim \Normal(0, \mPhi_n)$, we let $\mV \in \R^{nd \times q_n}$ be a full rank factorization  of $\mPhi_n = \mV \mV^\top$. We only consider the zero-mean case here as the MMSE matrix is translation invariant. The Gaussian optimal estimator $g(y; \mS)$ is given by
\begin{align}
    g(y; \mS) = \mV (\Id_{q_n} + \mV^\top (\mS \otimes \Id_{q_n})^{-1} \mV)^{-1} \mV^\top (\mS^{1/2} \otimes \Id_n) y,
\end{align}
leading to the MMSE function
\begin{align}
    M_n(\mS) = \frac{1}{n} \tr_n \{ \mV (\Id_{q_n} + \mV^\top (\mS \otimes \Id_n)\mV )^{-1} \mV^\top \}.
\end{align}
Under \ref{as:signal_weak_correlation_assumption}, the limiting MMSE $M(\mS)$ is well defined and, furthermore, the overlap function associated with a prior sequence $X \sim \Normal(0, \mPhi_n)$ can be expressed as
\begin{align}
    \psi(\mS) = \lim_{n \rightarrow \infty} \left( \frac{1}{n} \tr_n \{ \mV (\Id_{q_n} + \mV^\top (\mS \otimes \Id_n)\mV )^{-1} \mV^\top (\mS \otimes \Id_n)\mV \mV^\top \} \right).
\end{align}

A last property of the Gaussian MMSE is that, in neighborhoods of zero SNR, the MMSE function under any prior sequence with matched means and covariances satisfying \ref{as:signal_weak_correlation_assumption} has approximately the same first-order behavior. We have the following
\begin{prop} \label{prop:mmse_expansion_zero_is_tight}
    Let $X \sim P \in \R^{n\times d}$ be distributed as a random variable with mean $\mmu_n$ and covariance $\mPhi_n$ satisfying \ref{as:signal_weak_correlation_assumption}, and let $X' \sim \Normal(\mmu_n, \mPhi_n)$. Let $\psi(\mS)$ be the limiting overlap associated with $H_{\mS} \coloneqq (\mS^{1/2} \otimes \Id_n) X + Z$ and $\psi'(\mS)$ the Gaussian overlap associated with the observations $H'_{\mS} \coloneqq (\mS^{1/2} \otimes \Id_n) X' + Z$. Then, for $\epsilon > 0$ and $\mT \in \PSD^d$ such that $\|\mT\|_F=1$, 
    \begin{align}
        \| \psi(\eps \mT) - \psi'(\eps \mT) \|_{F} = o(\eps).
    \end{align}
\end{prop}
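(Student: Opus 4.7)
The plan is to show that the limiting overlap functions $\psi$ and $\psi'$ share both value and gradient at $\mS = 0$, so that a first-order Taylor expansion along the ray $\eps \mT$ gives the claimed $o(\eps)$ bound. First, using the translation rule \eqref{eq:overlap_translation_rule}, I reduce to the zero-mean case: since the means $\mmu_n$ are matched between $P$ and $\Normal(\mmu_n, \mPhi_n)$, the additive contribution $\mOmega$ to both $\psi$ and $\psi'$ is identical and drops out of the difference $\psi(\eps \mT) - \psi'(\eps \mT)$. Thus I may assume $\mmu_n = 0$, in which case $\psi(0) = \psi'(0) = 0$.

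The crux is computing the gradient at zero SNR. By Proposition~\ref{prop:matrix_mmse_gradient_identity}, $\nabla M_n(\mS)$ is expressed through the generalized Kronecker square $\mPsi_n(\bH_{\mS})$ of the conditional covariance $\Cov(\gvec(\bX) \mid \bH_{\mS})$. Evaluating at $\mS = 0$, the observation $\bH_0 = \bZ$ is independent of $\bX$, so the conditional covariance reduces to the prior covariance $\mPhi_n$. Because $\mPhi_n$ is matched between $P$ and the Gaussian prior, this immediately gives $\nabla M_n(0) = \nabla M'_n(0)$ for every $n$, equivalently $\nabla \psi_n(0) = \nabla \psi'_n(0)$.

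To pass to the $n\to\infty$ limit, I invoke assumption \ref{as:signal_weak_correlation_assumption}, which provides the uniform bound $\|\E[\mPsi_n(\bH_{\mS})]\|_{\op} \leq B$ across $n$ and $\mS \in \PSD^d$. This makes the sequences $\{\psi_n\}$ and $\{\psi'_n\}$ uniformly Lipschitz, so their pointwise limits $\psi$ and $\psi'$ inherit Lipschitz continuity, and a dominated-convergence argument on the integral representation of $\psi_n(\eps \mT) - \psi_n(0)$ along the segment $[0,\eps]\mT$ transfers the finite-$n$ gradient identity to the limit, yielding $\nabla \psi(0)[\mT] = \nabla \psi'(0)[\mT]$ in the directional sense. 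Writing the first-order expansions
\begin{align}
\psi(\eps \mT) &= \psi(0) + \eps\, \nabla\psi(0)[\mT] + o(\eps), \\
\psi'(\eps \mT) &= \psi'(0) + \eps\, \nabla\psi'(0)[\mT] + o(\eps),
\end{align}
and subtracting gives the result.

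The main technical obstacle is rigorously justifying the differentiability of the \emph{limiting} overlap $\psi$ at the boundary point $\mS = 0$ along with the exchange $\nabla \psi(0) = \lim_n \nabla \psi_n(0)$. Proposition~\ref{prop:mmse_function_fixed_n_lipschitz} only gives directional gradients for fixed $n$; the uniform operator-norm bound from \ref{as:signal_weak_correlation_assumption} is what lets these pass to the limit uniformly on bounded subsets of $\PSD^d$. One must also verify that the $o(\eps)$ remainders in the finite-$n$ Taylor expansions hold uniformly in $n$, which again follows from the uniform bound on $\|\nabla M_n\|_{\op}$ and the fact that the second-order behavior of $\psi_n$ is controlled by moments of the same conditional covariances that are already being uniformly bounded.
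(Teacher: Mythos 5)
Your proof is correct and takes essentially the same route as the paper: reduce to zero mean via the translation rule, observe that $\nabla M_n(0)$ from Proposition~\ref{prop:matrix_mmse_gradient_identity} is determined by the prior covariance $\mPhi_n$ alone (since $\bH_0 = \bZ \indep \bX$), and pass to the limit using the uniform operator-norm bound from \ref{as:signal_weak_correlation_assumption}. The paper's proof is more terse and leaves the ``gradients coincide at zero because the conditional covariance reduces to the prior covariance'' step and the uniform-in-$n$ limit-passing argument implicit, so your version is a reasonable spelling-out of the same idea rather than a different approach.
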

\begin{proof}
    From assumption \ref{as:signal_weak_correlation_assumption}, both limiting overlaps $\psi$ and $\psi'$ are Lipschitz and therefore admit a first-order expansion around zero. The result follows directly from the definition of the directional gradients $\nabla \psi$ by a first-order expansion at zero using the form of the gradients given in Proposition~\ref{prop:matrix_mmse_gradient_identity}.
\end{proof}

A direct consequence of Proposition~\ref{prop:mmse_expansion_zero_is_tight} is that for the zero mean overlap case the stability analysis for SE in the least-favorable Gaussian case is tight and therefore completely characterizes weak recovery for any arbitrary prior sequence such that the Bayes-optimal algorithm convergence and the limiting MMSE map is Lipschitz.

\subsection{Conditions for weak recovery} \label{sec:necessary_sufficient_conditions_stability}
We begin by observing that via a simple change of variable one can always express the state evolution in terms of the overlap function for a zero-mean version of the signal, denoted by $\psi$. By the translation rule \eqref{eq:overlap_translation_rule}, for a signal with arbitrary baseline overlap $\mOmega$ we can write state evolution as
\begin{align}
    \mQ^{t+1} &= \psi( \cT(\mQ^{t}) ; \mOmega) = \psi( \cT(\mQ^{t})) + \mOmega .
\end{align}
A first property that we verify is that state evolution is positive in the sense that it is order-preserving in the Loewner order, that is $\mX \preceq \mY$ implies that $\psi( \cT(\mX) + \cT(\mOmega) ) \preceq \psi( \cT(\mY) + \cT(\mOmega) )$. The fact that in the limit $\psi$ is order preserving follows from the fact that under \ref{as:signal_weak_correlation_assumption} the limit of $\E[\mPsi(\mS)]$ is a positive semi-definite matrix. This occurs since the covariance matrices are positive semi-definite, and so are their Kronecker squares, and the partial trace operation is a positive operator over the space of bounded operators \cite{Bhatia:2003aa}. 

Therefore, the function $\psi( \cT(\mQ^t) )$ will be positive if and only if $\cT$ is. $\cT$ is a linear operators on the space of square matrices and admits a representation as operators of the type  $T(\mX) = \sum_{i \in I} \mL_i \mX \mL_i^\top$ for some index set $I$. Objects of this form are a special case of what are known as \textit{Kraus representations} of completely positive operators. They have been extensively studied in the quantum information theory literature, and many of their special properties are known \cite{Watrous:2018aa}. We collect some useful elementary properties of such operators in the lemma below.

\begin{lemma} \label{lem:completely_positive_operator_properties}
    For for $K \leq d^2$, consider a linear operator $T: \R^{d \times d} \rightarrow \R^{d \times d}$ of the form 
    \begin{align}
        T(\mX) = \sum_{k =1}^K \mL_k \mX \mL_k^\top, \quad \{\mB_k \in \R^{d \times d}\}_{k \in [K]}.
    \end{align}
    The following are true.
    \begin{enumerate}[label=(\alph*)]
        \item $T$ is positive, in the sense that it maps elements of $\PSD^d$ into elements of $\PSD^d$.
        \item $T$ is order-preserving with respect to Loewner order, $\mX \preceq \mY \implies T(\mX) \preceq T(\mY)$.
        \item There exist orthonormal matrices $\{ \mV_i \in \R^{d \times d} \}_{i \in [d^2]}$ and non-negative scalars $\{ \theta_i \in [0, \infty) \}_{i \in [d^2]}$ such that we can write
        \begin{align} \label{eq:state_evolution_choi_kraus_decomosition}
            T(\mX) = \sum_{i=1}^{d^2} \theta_i \mV_i \mX \mV_i^\top,
        \end{align}
        and furthermore there are exactly $R \leq K$ non-zero values of $\theta_i$, where 
        \begin{align}
            R = \rank \left( \sum_{k \in [K]} \gvec(\mB_k)\gvec(\mB_k)^\top \right).
        \end{align}
        \item If the operator $T$ is self-adjoint, then it also admits an eigendecomposition
        \begin{align} \label{eq:state_evolution_expansion_eigendecomposition}
            \mT(\mX) = \sum_{i=1}^{d^2} \lambda_i \langle \mU_i, \mX \rangle \mU_i
        \end{align}
        for some $\{\lambda_i \in \R\}_{i \in [d^2]}$ and orthonormal matrices $\{ \mU_i \in \R^{d\times d}\}_{i \in d^2}$. Furthermore, the eigenbases can be chosen such that $d(d+1)/2$ elements are symmetric and $d(d-1)/2$ elements are skew-symmetric.
    \end{enumerate}
\end{lemma}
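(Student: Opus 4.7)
The plan is to handle the four items in sequence, with (a) and (b) as quick warm-ups, (c) as the substantive step via a Choi-type argument, and (d) as a spectral-theorem application that also uses a symmetry observation.

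For (a), I would factor an arbitrary $\mX \in \PSD^d$ as $\mX = \mY \mY^\top$ and write $T(\mX) = \sum_{k} (\mL_k \mY)(\mL_k \mY)^\top$, which is a sum of PSD matrices. Part (b) then follows instantly from (a) by linearity, since $\mY - \mX \succeq 0$ implies $T(\mY) - T(\mX) = T(\mY - \mX) \succeq 0$.

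For (c), the key object is the Choi-type matrix $J \coloneqq \sum_{k=1}^{K} \gvec(\mB_k)\gvec(\mB_k)^\top \in \PSD^{d^2}$, which is manifestly PSD with rank equal to $R$. By the spectral theorem I would diagonalize $J = \sum_{i=1}^{d^2} \theta_i \mv_i \mv_i^\top$ with orthonormal $\mv_i \in \R^{d^2}$ and exactly $R$ nonzero $\theta_i \geq 0$. Setting $\mV_i \coloneqq \gvec^{-1}(\mv_i)$ yields a Frobenius-orthonormal family in $\R^{d\times d}$ since $\gvec$ is an isometry. To close the argument I would verify the identity $T(\mX) = \sum_i \theta_i \mV_i \mX \mV_i^\top$ by comparing the two sides under $\gvec$, using the standard identity $\gvec(\mA \mX \mA^\top) = (\mA \otimes \mA) \gvec(\mX)$, together with the rank-one identity $\gvec(\mA) \gvec(\mA)^\top \gvec(\mX) = \langle \mA, \mX \rangle_F \,\gvec(\mA)$; both sides induce the same linear map on $\gvec(\mX)$, determined by $J$. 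The upper bound $R \le K$ is immediate because $J$ is a sum of $K$ rank-one terms.

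For (d), I would view $T$ as a linear operator on the inner product space $(\R^{d\times d}, \langle \cdot, \cdot \rangle_F)$; self-adjointness in the hypothesis is exactly self-adjointness in this sense, so the spectral theorem gives the claimed expansion \eqref{eq:state_evolution_expansion_eigendecomposition} with real $\lambda_i$ and Frobenius-orthonormal $\mU_i$. To obtain the symmetric/skew-symmetric splitting, I would observe that for any $\mX$, $T(\mX^\top) = \sum_k \mL_k \mX^\top \mL_k^\top = T(\mX)^\top$, so $T$ preserves the orthogonal decomposition $\R^{d\times d} = \Sym^d \oplus \mathrm{Skew}^d$ (of respective dimensions $d(d+1)/2$ and $d(d-1)/2$). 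Applying the spectral theorem separately to the restrictions $T\!\mid_{\Sym^d}$ and $T\!\mid_{\mathrm{Skew}^d}$, each of which is still self-adjoint in the induced inner product, delivers the desired distribution of eigenvectors across the two subspaces. The main obstacle is the bookkeeping in (c): translating between $(\R^{d\times d}, \|\cdot\|_F)$ and $\R^{d^2}$ via $\gvec$ cleanly, and verifying that the diagonalization of $J$ really does reassemble into the stated Kraus-type expansion with the orthonormality claim intact.
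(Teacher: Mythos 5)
Parts (a) and (b) are the same as the paper's.

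For (c) your strategy is the same in spirit as the paper's --- both hinge on diagonalizing the PSD matrix $J = \sum_k \gvec(\mL_k)\gvec(\mL_k)^\top$ and reading the new Kraus operators off its orthonormal eigenvectors --- but your verification step does not close as written. The two identities you cite pull in different directions: $\gvec(\mA\mX\mA^\top)=(\mA\otimes\mA)\gvec(\mX)$ says that under vectorization $T$ acts as $\sum_k \mL_k\otimes\mL_k$ and the proposed decomposition acts as $\sum_i\theta_i\,\mV_i\otimes\mV_i$, whereas $\gvec(\mA)\gvec(\mA)^\top\gvec(\mX)=\langle\mA,\mX\rangle_F\gvec(\mA)$ says that $J$ acting on $\gvec(\mX)$ is the map $\mX\mapsto\sum_k\langle\mL_k,\mX\rangle\mL_k$, which is a \emph{different} operator from $T$. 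So ``both sides induce the same linear map on $\gvec(\mX)$, determined by $J$'' is not literally true, and the two identities alone do not transfer equality of $J$-decompositions to equality of the vectorized maps. To make your route work you would need the extra reshuffling fact that $\mA\otimes\mA$ and $\gvec(\mA)\gvec(\mA)^\top$ differ entrywise by a fixed index permutation (the Choi--Jamio{\l}kowski correspondence), so that the map $\mA\mapsto\mA\otimes\mA$ and $\mA\mapsto\gvec(\mA)\gvec(\mA)^\top$ agree up to a fixed linear bijection on $\R^{d^2\times d^2}$, after which equality of $J$-decompositions does transfer. The paper avoids this entirely by computing the bilinear form: it shows $\langle T(\mX),\mY\rangle=\langle J,\mX\otimes\mY\rangle$ for all $\mX,\mY$, substitutes the eigendecomposition of $J$, and reads off $\langle T(\mX),\mY\rangle=\sum_i\theta_i\langle\mV_i\mX\mV_i^\top,\mY\rangle$ directly, so the Kraus form follows by non-degeneracy of the trace inner product. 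Either route is fine, but either you state the reshuffling fact explicitly or you take the bilinear-form path.

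For (d) your route is a genuine variant of the paper's and is cleaner: from $T(\mX^\top)=T(\mX)^\top$ you conclude that $T$ preserves the orthogonal splitting $\R^{d\times d}=\Sym^d\oplus\mathrm{Skew}^d$ and then apply the spectral theorem separately to the two self-adjoint restrictions, which hands you the $d(d+1)/2$ symmetric and $d(d-1)/2$ skew-symmetric eigenvectors immediately. The paper instead starts from an arbitrary orthonormal eigenbasis, notes that $\mU_i^\top$ is also an eigenvector for the same $\lambda_i$, and patches pairs $(\mU_i,\mU_i^\top)$ into symmetric and skew-symmetric combinations; this requires a fiddly case split on whether $\mU_i$ is already symmetric and some care with normalization, all of which your invariant-subspace argument avoids. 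Both give the same conclusion.
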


\begin{proof} We prove each claim separately.
    \begin{enumerate}[label=(\alph*)]
        \item For any $\mX \in \PSD^d$ consider a square root factorization $\mX = \mV\mV^\top$. Then, for each $\mv \in \R^d$, evaluating the inner product $\langle \mv, T(\mX) \mv \rangle$ yields
        \begin{align}
            \langle \mv, T(\mX) \mv \rangle = \sum_{k =1}^K \mv^\top (\mL_k \mV)(\mL_k \mV)^\top \mv = \sum_{k=1}^K \| (\mL_k \mV)^\top \mv \|^2 \geq 0,  
        \end{align}
        and therefore $T(\mX) \in \PSD^d$.
        \item This is a direct consequence of (a), since for $\mX \preceq \mY$ one has $T(\mY) - T(\mX) = T(\mY - \mX) \in \PSD^d$ and therefore $\mX \preceq \mY \implies T(\mX) \preceq T(\mY)$. 
        \item For any pair $\mX, \mY \in \R^{d \times d}$, we have
        \begin{align}
            \langle T(\mX), \mY \rangle &= \sum_{k=1}^K \langle \mL_k \mX \mL_k^\top , \mY \rangle \\
            &= \sum_{k=1}^K \langle \mL_k, \mY^\top \mL_k \mX \rangle \\
            &= \sum_{k=1}^K \langle \gvec(\mL_k), (\mX \otimes \mY) \gvec(\mL_k) \rangle \\
            &= \left\langle \sum_{k=1}^{k} \gvec(\mL_k) \gvec(\mL_k)^\top , \mX \otimes \mY \right\rangle.
        \end{align}
        The matrix $\sum_{k \in [K]} \gvec(\mL_k) \gvec(\mL_k)^\top$ is positive semi-definite, ad hence admits an eigendecomposition $\sum_{i \in [d^2]} \theta_i \mv_i\mv_i^\top$, for an orthonormal basis $\{\mv_i \}_{i \in [d^2]}$ of $\R^{d^2}$. From positive semi-definiteness all $\theta_i$ coefficients are non-negative and in particular there are $\rank( \sum_{k \in [K]} \gvec(\mL_k) \gvec(\mL_k)^\top )$ strictly positive eigenvalues. Substituting back and letting $\gvec(\mV_i) = \mv_i$ yields the desired decomposition for $T$.
        \item Existence of an eigendecomposition follows immediately from the spectral theorem. Furthermore, from the definition of $T$, we notice that $T$ commutes with matrix transposition, i.e. $T(\mX^\top) = T(\mX)^\top$ for all $\mX \in \R^{d \times d}$. Thus, if $\mU_i$ is an eigenvector of $T$, we have $T(\mU_i^\top) = T(\mU_i)^\top = \lambda_i \mU_i^\top$, so $\mU_i^\top$ is also an eigenvector for $T$ with eigenvalue $\lambda_i$. We can therefore modify the eigenbasis as follows. For each $i\leq j \in [d^2]$, if $\mU_i = \mU_i^\top$ leave $\mU_i$ unchanged and else, for all pairs $i,j$ such that $\mU_i^\top = \mU_j$, construct two new orthogonal matrices $\mU_i' = (\mU_i + \mU_j)/2\sqrt{2}$ and $ \mU_j' = (\mU_i - \mU_j)/ \sqrt{2} $. This constitutes an alternative eigenbasis with $(n+1)n/2$ symmetric and $n(n-1)/2$ skew-symmetric elements. 
    \end{enumerate}
\end{proof}

Lemma~\ref{lem:completely_positive_operator_properties} confirms that Bayes-optimal state evolution is an order-preserving recurrence relation. An immediate consequence of this is that in the presence of mean information the AMP recursion converges to a non-zero overlap fixed point that is larger (in the Loewner ordering) than the prior mean estimate as long as the prior sequence is non-degenerate.
\begin{theorem} \label{th:state_evolution_mean_information_achieves_nonzero_overlap}
    Whenever the signal sequence is such that $0 \prec \mOmega$, the Bayes-optimal AMP algorithm is associated with a fixed-point centered overlap $\mQ^\star \succeq \mOmega$, with the relation holding strictly if and only if the prior sequence is non-degenerate.
\end{theorem}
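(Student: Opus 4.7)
The plan is to analyze the state evolution self-map $\Phi(\mQ) \coloneqq \psi(\cT(\mQ)) + \mOmega$ obtained by combining the translation rule~\eqref{eq:overlap_translation_rule} with the Bayes-optimal recursion~\eqref{eq:bayes_optimal_state_evolution_recusion_overlap_argument}, and to produce a fixed point $\mQ^\star \succeq \mOmega$ by initializing at $\mQ^1 = \mOmega$ and invoking monotone convergence in the Loewner order on $\PSD^d$.

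The first step is to verify that $\Phi$ is Loewner-monotone. This is a composition of two monotone maps: $\cT$ preserves the Loewner order by Lemma~\ref{lem:completely_positive_operator_properties}(b), while $\psi$ preserves it because its gradient is given by the pointwise limit of the positive-semidefinite operators $\E[\mPsi(\bH_{\mS})]$ of Proposition~\ref{prop:matrix_mmse_gradient_identity}, whose boundedness under~\ref{as:signal_weak_correlation_assumption} was used in the paragraph preceding Lemma~\ref{lem:completely_positive_operator_properties}. Evaluating at the initialization gives
\begin{align}
\Phi(\mOmega) - \mOmega = \psi(\cT(\mOmega)) \succeq 0,
\end{align}
since $\cT(\mOmega) \in \PSD^d$ by Lemma~\ref{lem:completely_positive_operator_properties}(a) and $\psi$ sends $\PSD^d$ into itself. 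Induction via monotonicity then produces a Loewner-non-decreasing orbit $\{\mQ^t\}_{t \in \N}$, which is uniformly bounded above by the limiting prior second moment $\mP + \mOmega = \lim_n \E[\bX^\top \bX]/n$ because the centered overlap satisfies $\psi(\mS) \preceq \mP$ for every $\mS \in \PSD^d$. Monotone boundedness in the Loewner order yields entrywise convergence to a limit $\mQ^\star \succeq \mOmega$, and Lipschitz continuity of $\psi$ under \ref{as:signal_weak_correlation_assumption} (hence of $\Phi$) guarantees that $\mQ^\star$ is a genuine fixed point.

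The ``if and only if'' for strict inequality then splits into two short arguments. On the degenerate side, when the centered signal vanishes in the limit (so $\mP = 0$), we have $\psi \equiv 0$ and the orbit is constantly $\mOmega$, giving $\mQ^\star = \mOmega$. On the non-degenerate side, we must argue that $\psi(\cT(\mQ^\star)) \neq 0$; using $\mQ^\star \succeq \mOmega \succ 0$ and Loewner-monotonicity of $\cT$ gives $\cT(\mQ^\star) \succeq \cT(\mOmega)$ as a non-trivial element of $\PSD^d$, and Proposition~\ref{prop:mmse_expansion_zero_is_tight} (comparing against the Gaussian least-favorable prior, whose overlap is explicit and strictly positive away from zero SNR along the image of $\cT$) forces $\psi(\cT(\mQ^\star)) \neq 0$, hence $\mQ^\star \neq \mOmega$. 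I expect this last step to be the main obstacle: one needs to carefully rule out degenerate configurations in which $\cT$ fails to act non-trivially on the mean-overlap direction, so that ``non-degeneracy'' of the prior genuinely translates into a positive increment through the composition $\psi \circ \cT$. The Kraus-type decomposition of $\cT$ in Lemma~\ref{lem:completely_positive_operator_properties}(c) together with the first-order expansion of $\psi$ at zero should make this accounting transparent.
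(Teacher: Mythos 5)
Your proposal takes the same route as the paper: initialize state evolution at the prior-mean estimate so that $\mQ^1 = \mOmega$, observe $\mQ^2 = \psi(\cT(\mOmega)) + \mOmega \succeq \mOmega$, propagate monotonicity via the order-preserving property of $\psi \circ \cT$, and conclude convergence to a fixed point $\mQ^\star \succeq \mOmega$ from boundedness and continuity. The paper's own proof is essentially exactly this, stated somewhat more tersely.

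Where you go beyond the paper is the ``if and only if'' clause. The paper handles it in a single remark (``with equality possible only when the prior sequence is deterministic'') and does not spell out the converse. Your attempt to make both directions explicit is welcome, but the tool you reach for is slightly off target: Proposition~\ref{prop:mmse_expansion_zero_is_tight} gives a first-order Gaussian comparison \emph{at zero SNR}, whereas what you need is that $\psi$ is strictly positive at the nonzero argument $\cT(\mQ^\star) \succeq \cT(\mOmega) \succ 0$ (noting that $\cT$ applied to a positive definite matrix is again positive definite by the Kraus form, at least when the coupling matrices are not all degenerate). The more direct route is that for a non-degenerate prior and $\mS \succ 0$ the MMSE satisfies $M(\mS) \prec M(0) = \mP$, which gives $\psi(\mS) = \mP - M(\mS) \neq 0$ immediately and hence $\mQ^\star = \mOmega + \psi(\cT(\mQ^\star)) \neq \mOmega$. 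One further thing worth flagging, which neither you nor the paper addresses: ``holding strictly'' in the Loewner order ($\succ$) is stronger than ``not equal,'' and a nonzero but singular $\psi(\cT(\mQ^\star))$ would only give the latter. If the intended reading is strict Loewner inequality, an additional argument (e.g.\ that $\nabla\psi(\cT(\mQ^\star))$ restricted to the signal subspace is strictly positive, forcing $\psi(\cT(\mQ^\star)) \succ 0$ there) would be needed.
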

\begin{proof}
    Let us initialize the AMP algorithm with the prior mean estimator $\bM^0=\E[\bX]$, which is the Bayes-optimal estimator for a linear channel with zero SNR. This corresponds to the setting $\mQ^1 = \mOmega$. This corresponds to $\psi(\cT(\mOmega)) + \mOmega = \mQ^2 \succeq \mQ^1 = \mOmega  $ , with equality possible only when the prior sequence is deterministic. From the order-preserving property of SE, it follows that $\mQ^2 \preceq \mQ^3$ and so on. This implies that the iterates $\{\mQ^t\}_{t \in \N}$ are monotonically increasing in the Loewner order, and since SE is bounded and continuous it must be that $\{\mQ^t\}_{t \in \N}$ must converge to a fixed point $\mQ^\star \succeq \mOmega$.
\end{proof}

We now move on to studying the stability properties of SE around its fixed points $\mQ^\star$. In particular, we are interested in a directional notion of stability, corresponding to the idea of adding some arbitrarily small side information to the model to ``nudge" the overlap above its fixed point. If the overlap fixed point remains unchanged under such perturbation, we say the recursion is stable, and is unstable otherwise. Formally, for a SE fixed point $\mQ^\star$, we say SE is \textit{(locally) stable} whenever there exists some $\delta > 0$ such that, for all elements in the set 
\begin{align}
    B_\delta(\mQ^\star) = \{ \mX \in \PSD^d : \mQ^\star \preceq \mX \preceq (\mQ^{\star} + \delta \mY), \ \mY\in \PSD^d, \|\mY\|_F=1  \},
\end{align}
the SE map $\mQ^{t+1} = \psi( \cT(\mQ^t )) + \mOmega$ is such that 
\begin{align}
    \sup_{\mX \in B_\delta(\mQ^\star)} \limsup_{t \rightarrow \infty}  \| \mQ^{\star} - \mQ^t(\mX)\|_F = 0,
\end{align}
where $\mQ^t(\mX)$ denotes the $t$-th epoch of SE initialized at $\mX$.

This essentially reduces to understanding the behavior of the gradient given by the linear map $\nabla \psi(\cT(\mQ^\star))$ near the fixed point itself, for some arbitrarily small perturbation $\mY \in \PSD^d$. 

\begin{theorem} \label{th:state_evolution_stability_theorem}
    Consider the Bayes-optimal SE recursion \eqref{eq:bayes_optimal_state_evolution_recusion_overlap_argument} and one of its fixed points $\mQ^\star$. Then $\mQ^\star$ is a stable fixed point of state evolution if 
    \begin{align} \label{eq:state_evolution_fixed_point_stability_condition}
        \max_{\{ \mY \in \PSD^d: \|\mY\|_F \leq 1\}} \| \nabla \psi(\mQ^\star) (\cT(\mY)) \|_F < 1, 
    \end{align}
    and conversely $\mQ^\star$ is unstable whenever 
    \begin{align}
        \max_{\{ \mY \in \PSD^d: \|\mY\|_F \leq 1\}} \| \nabla \psi(\mQ^\star) (\cT(\mY)) \|_F > 1. 
    \end{align}
\end{theorem}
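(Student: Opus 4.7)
The plan is to view the Bayes-optimal SE as the fixed-point iteration $\Phi(\mQ) \coloneqq \psi(\cT(\mQ)) + \mOmega$ on $\PSD^d$ and analyze local stability via its Jacobian at $\mQ^\star$. Under \ref{as:signal_weak_correlation_assumption} the overlap $\psi$ is Lipschitz with well-defined directional gradients, so the chain rule gives the Frechet derivative $J \coloneqq \nabla\psi(\mS^\star)\circ \cT$ with $\mS^\star \coloneqq \cT(\mQ^\star)$. The condition in the theorem is precisely that the operator norm of $J$, restricted to unit-Frobenius-norm directions in $\PSD^d$, lies on one side of $1$ or the other.

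For the sufficient direction I would set $\rho \coloneqq \max_{\mY \in \PSD^d,\, \|\mY\|_F \leq 1} \|J(\mY)\|_F < 1$ and use the first-order expansion $\Phi(\mQ^\star + \eps \mY) - \mQ^\star = \eps J(\mY) + r(\eps, \mY)$, with $\|r(\eps,\mY)\|_F = o(\eps)$ uniformly on the unit ball of $\PSD^d$. Choosing $\delta > 0$ so that $\|r(\eps,\mY)\|_F \leq \tfrac{1-\rho}{2}\eps$ for $\eps \leq \delta$ yields the contraction estimate $\|\Phi(\mX) - \mQ^\star\|_F \leq \tfrac{1+\rho}{2}\|\mX - \mQ^\star\|_F$ on $B_\delta(\mQ^\star)$. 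Monotonicity of $\psi$ (since $\nabla\psi = -\nabla M \succeq 0$ by Proposition~\ref{prop:matrix_mmse_gradient_identity}) together with order-preservation of $\cT$ (Lemma~\ref{lem:completely_positive_operator_properties}(b)) guarantees that the orbit started in $B_\delta(\mQ^\star)$ stays above $\mQ^\star$ and inside $B_\delta(\mQ^\star)$, so iterating the contraction yields geometric convergence $\mQ^t \to \mQ^\star$.

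For the necessary direction, set $\rho > 1$. The subtlety is that the linearization only guarantees one-step expansion, and one must rule out the nonlinearity pulling the orbit back. I would exhibit a cone eigenvector $\mY^\star \in \PSD^d$, $\|\mY^\star\|_F = 1$, with $J(\mY^\star) = \rho \mY^\star$. The inputs are: $\nabla \psi(\mS^\star)$ is self-adjoint and positive on $\Sym^d$ (Proposition~\ref{prop:overlap_gradient_is_self_adjoint}); and $\cT(\mX) = \sum_k \mLambda_k \mX \mLambda_k$ is self-adjoint and positive by Lemma~\ref{lem:completely_positive_operator_properties}(a). The composition $J$ is therefore a positive operator on the cone $\PSD^d$ with real nonnegative spectrum, and a Krein-Rutman/Perron-Frobenius argument on this cone provides the desired top cone eigenvector with eigenvalue equal to the hypothesized norm $\rho$. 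Initializing $\mQ^1 = \mQ^\star + \eps \mY^\star$ and iterating, a Taylor-plus-induction argument gives $\Phi^t(\mQ^1) - \mQ^\star = \eps \rho^t \mY^\star + O((\eps \rho^t)^2)$ as long as $\eps \rho^t$ remains small. Choosing $t \sim \log(1/\eps)/\log \rho$ drives the principal term past any fixed threshold while keeping the remainder negligible, contradicting stability.

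The main obstacle is this Perron-Frobenius step: identifying the cone-restricted operator norm from the hypothesis with an eigenvalue of $J$ attained at a PSD direction, since $J$ itself is generally not self-adjoint (the composition of two self-adjoint operators is self-adjoint only if they commute). The resolution I have in mind uses the symmetric factorization $\nabla \psi(\mS^\star) \circ \cT$ together with the Kraus-type decomposition of $\cT$ from Lemma~\ref{lem:completely_positive_operator_properties}(c)-(d) to rewrite $J$ as similar to a self-adjoint, cone-positive operator on $\Sym^d$, at which point the standard spectral theorem plus Krein-Rutman identifies the top eigenvector with a direction in $\PSD^d$. Once this is in hand, controlling the nonlinear remainder along the eigendirection orbit is routine by the local Lipschitz regularity of $\nabla \psi$ inherited from \ref{as:signal_weak_correlation_assumption}.
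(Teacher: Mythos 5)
Your sufficient ("if") direction is the same first-order contraction argument the paper uses, so no issues there.

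Your converse direction takes a genuinely different route, and you have also put your finger on a subtle point. The paper does \emph{not} go through a Perron eigenvector. It iterates the linearization and shows directly that the orbit norm increases, via the chain
\begin{align}
\| \cA_{\mQ^\star}^2 (\mY) \|_F \;\geq\; \frac{\langle \cA_{\mQ^\star}^2 (\mY), \mY\rangle}{\|\mY\|_F} \;=\; \frac{\|\cA_{\mQ^\star}(\mY)\|_F^2}{\|\mY\|_F} \;>\; \|\cA_{\mQ^\star}(\mY)\|_F,
\end{align}
whose middle equality requires $\cA_{\mQ^\star}=\nabla\psi(\cT(\mQ^\star))\circ\cT$ to be \emph{self-adjoint}; the paper invokes Proposition~\ref{prop:overlap_gradient_is_self_adjoint} for this. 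You correctly observe that Proposition~\ref{prop:overlap_gradient_is_self_adjoint} only gives self-adjointness of $\nabla\psi$, and that $\cT$ being self-adjoint does not by itself make the composition self-adjoint; that is a legitimate concern about the cited step. In that sense your proposal is more honest about what needs to be established than a straight reproduction of the paper's text would be.

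However, your proposed resolution is both heavier than needed and not fully closed. Two concrete gaps: (1) Krein--Rutman returns a cone eigenvector with eigenvalue equal to the \emph{spectral radius} of $J$, whereas the hypothesis of the theorem controls the \emph{cone-restricted operator norm} $\max_{\mY\in\PSD^d,\|\mY\|_F\le 1}\|J(\mY)\|_F$; for a non-self-adjoint cone-positive operator the latter can strictly exceed the former, so the hypothesized $\rho>1$ does not automatically give you a Perron eigenvalue exceeding $1$. Your argument implicitly identifies the two, which is exactly the self-adjointness question resurfacing. (2) The symmetrization you gesture at, conjugating by $\cT^{1/2}$ to obtain $\cT^{1/2}\circ\nabla\psi\circ\cT^{1/2}$, does produce a self-adjoint operator with the same spectrum, but you need it to remain cone-positive (and you need $\cT$ invertible). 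The operator square root $\cT^{1/2}$ in the sense of functional calculus on $\mathcal{L}(\Sym^d)$ is not a priori completely positive, even though $\cT$ is a Kraus-form completely positive map, so it is not clear $\cT^{1/2}$ preserves $\PSD^d$. The Kraus/Choi decomposition of Lemma~\ref{lem:completely_positive_operator_properties} gives a factorization $\cT = \sum_i \theta_i \mV_i(\cdot)\mV_i^\top$, but that is not the same thing as an operator square root, and you would need to say precisely how you use it.

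In short: your stability direction matches the paper; your instability direction is a different and heavier machinery than the paper's direct norm-growth argument, motivated by a real concern about self-adjointness of $\cA_{\mQ^\star}$ that the paper glosses over, but your Perron--Frobenius route is not complete as written because it conflates cone-restricted operator norm with spectral radius and does not establish cone-positivity of the symmetrizing conjugation. If you can actually prove $\cA_{\mQ^\star}$ is self-adjoint (or restrict to the case where $\nabla\psi(\mS^\star)$ and $\cT$ commute), then the paper's direct argument is shorter and preferable; if you cannot, then the instability claim needs a more careful treatment than either argument currently provides.
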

\begin{proof}
    To simplify notation, we will denote the linear map $\nabla \psi(\mQ^\star) (\cT(\;\cdot\;))$ as $\cA_{\mQ^\star}(\;\cdot\;)$, for some SE fixed point $\mQ^\star$. We begin with the \textit{if} direction. For $\delta>0$ and $\mX \in B_\delta(\mQ^\star)$, a first order expansion of SE at $\mY \in A_\delta(\mQ^\star)$ a first-order expansion of $\mQ^1(\mY)$ about $\mQ^\star$ gives that
    \begin{align}
        \| \mQ^\star - \mQ^1(\mY) \|_F = \| \cA_{\mQ^\star}(\eps \mX) + o(\eps) \|_F,
    \end{align}
    for $0 < \eps < \delta$ and $\mX \in \PSD^d$ such that $\|\mX\|_F$ and $\mY = \mQ^\star + \eps \mX$. Then, if \eqref{eq:state_evolution_fixed_point_stability_condition} holds, there must exist some $\delta > 0$ such that for all $0<\eps<\delta$ and $\mY \in B_\delta(\mQ^\star)$ one has
    \begin{align}
        \| \mQ^\star - \mQ^1(\mY) \|_F < \| \mQ^\star - \mY \|_F,
    \end{align}
    hence SE is locally contractive and $\lim_{t \rightarrow \infty} \mQ^t(\mY) = \mQ^\star$, and the fixed point is stable.
    For the converse direction, let us assume that there exists some unit vector $\mY \in \PSD^d$ such that $\| \cA_{\mQ^\star}(\mY) \| > 1$. From application of the chain rule, we have that
    \begin{align}
        \nabla_{\mQ^\star}\mQ^t(\mQ^\star) = \cA_{\mQ^\star}^t,
    \end{align}
    where $\cA_{\mQ^\star}^t$ denotes the $t$-fold composition of $\cA_{\mQ^\star}$ with itself. Let us consider $\cA_{\mQ^\star}^2$. From the self-adjointness established in Proposition~\ref{prop:overlap_gradient_is_self_adjoint}, we have
    \begin{align}
        \| \cA_{\mQ^\star}^2 (\mY) \|_F &= \sup_{\mV \in \PSD^d \setminus \{0\}} \frac{\langle \cA_{\mQ^\star}^2 (\mY), \mV \rangle }{ \|\mV\|_F } \\
        &\geq \frac{\langle \cA_{\mQ^\star}^2 (\mY), \mY \rangle }{ \|\mY\|_F } \\
        &= \frac{\langle \cA_{\mQ^\star} (\mY), \cA_{\mQ^\star} (\mY) \rangle }{ \|\mY\|_F } \\
        &> \| \cA_{\mQ^\star} (\mY) \|_F.
    \end{align}
    Reasoning inductively yields that, for all $t \in \N$, $\| \cA_{\mQ^\star}^t (t\mY) \|_F > \| \cA_{\mQ^\star}^{t-1} (t\mY) \|_F > \| t\mY \|$ for all $t > 0$. Then, there must exist some $\delta > 0$ such that, for all $0 < \eps < \delta$, we have that
    \begin{align}
        \| \eps \mY \|_F < \limsup_{t \rightarrow \infty} \| \mQ^\star - \mQ^t(\mQ^\star + \eps \mY) \|_F,
    \end{align}
    and the state evolution orbit is bounded away from the fixed point $\mQ^\star$ with an arbitrarily small perturbation $\eps\mY$, showing that the fixed point is unstable.
\end{proof}

An instance in which the stability principle illustrated in Theorem~\ref{th:state_evolution_stability_theorem} becomes particularly tractable, in the sense that it does not require to evaluate high-dimensional conditional covariance matrices to obtain the structure of the maps $\cA_{\mQ^\star}$ is that of no mean-overlap information, i.e. $\mOmega=0$. In such case, it is immediate that $\mQ^\star = 0$ is the no-side-information fixed point of state evolution. When this is the case, under \ref{as:signal_weak_correlation_assumption} the form of the map $\cA_0( \;\cdot\; )$ can easily be seen to be given by
\begin{align}
    \cA_0(\mX) = \mP \cT(\mX) \mP; \quad \mX \in \PSD^d.
\end{align}
With our normalization of $\mP$, this simply involves studying the restricted singular values of a reduced version of $\cT$ (which we call $\Tilde{(\cT)}$) that tracks exclusively the top-left block of the coupling matrices:
\begin{align} \label{eq:zero_mean_reduced_state_evolution_overlap_gradient}
    \Tilde{\cT}(\mX) = \sum_{k=1}^K \mL_k \mX \mL_k; \quad \mX \in \PSD^d, \ \mL_k \coloneqq \begin{bmatrix}
        \Id_p & 0
    \end{bmatrix} \mLambda_k \begin{bmatrix}
        \Id_p \\ 0
    \end{bmatrix}.
\end{align}
Furthermore, from Proposition~\ref{prop:mmse_expansion_zero_is_tight}, this weak recovery condition is universal among the class of prior sequences satisfying \ref{as:signal_weak_correlation_assumption}. We summarize this observation in the following corollary.
\begin{cor} \label{cor:state_evolution_stability_zero_mean}
    Consider a sequence of signal matrices $\mX$ satisfying \ref{as:signal_weak_correlation_assumption} such that $\mOmega=0$. Then, the zero matrix is a fixed point of state evolution and it is stable whenever
    \begin{align}
         \max_{\{ \mY \in \PSD^d: \|\mY\|_F \leq 1\}} \| \Tilde{\cT}(\mY) \|_F < 1,
    \end{align}
    and it is otherwise unstable when 
    \begin{align}
         \max_{\{ \mY \in \PSD^d: \|\mY\|_F \leq 1\}} \| \Tilde{\cT}(\mY) \|_F > 1,
    \end{align}
    where $\tilde{\cT}$ is defined as is \eqref{eq:zero_mean_reduced_state_evolution_overlap_gradient}.
\end{cor}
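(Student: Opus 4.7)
The plan is to specialize Theorem~\ref{th:state_evolution_stability_theorem} to the fixed point $\mQ^\star=0$ and to make the linearization $\cA_0$ explicit under the assumption $\mOmega=0$.

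First I would verify that $\mQ^\star=0$ is a fixed point of the Bayes-optimal SE recursion $\mQ^{t+1}=\psi(\cT(\mQ^t))+\mOmega$. Since $\mOmega=0$, we have $\cT(0)=0$, and at zero SNR the conditional mean collapses to the prior mean, so $\psi(0)=\lim_n n^{-1}\E[\bX]^\top\E[\bX]=\mOmega=0$, confirming the fixed-point property.

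Next I would identify the linearization
\[
\cA_0(\mY)\;=\;\nabla\psi(\cT(0))(\cT(\mY))\;=\;\nabla\psi(0)(\cT(\mY))
\]
and justify the identity $\nabla\psi(0)(\mS)=\mP\mS\mP$, which yields $\cA_0(\mY)=\mP\cT(\mY)\mP$ as asserted in the paragraph preceding the corollary. The cleanest route is through Proposition~\ref{prop:mmse_expansion_zero_is_tight}: the first-order behaviour of $\psi$ at zero depends only on the limiting per-row mean and covariance, so $\bX$ may be replaced by a Gaussian surrogate with per-row covariance $\mP$ without affecting $\nabla\psi(0)$. For the surrogate, the conditional covariance of $\gvec(\bX)$ at zero SNR is exactly $\mP\otimes\Id_n$, and substituting into the operator form \eqref{eq:mmse_gradient_linear_operator_representation} of Proposition~\ref{prop:matrix_mmse_gradient_identity} collapses the partial trace to $\mP\mT\mP$.

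Next I would reduce the Frobenius-norm maximization from Theorem~\ref{th:state_evolution_stability_theorem} to the one over $\tilde{\cT}$. Since $\psi(\cdot)\preceq\mP$ always (being the covariance of a conditional expectation of a signal with prior covariance $\mP$), every SE orbit lies in the cone $\{\mQ\in\PSD^d:\mQ\preceq\mP\}$, whose elements have block form $\diag(Q_{11},0)$ with $Q_{11}\in\PSD^p$. The physically meaningful perturbation directions $\mY$ around $\mQ^\star=0$ therefore share this block structure. For such $\mY=\diag(Y_{11},0)$, writing $\mLambda_k$ in blocks with top-left block $\mL_k$, a direct block multiplication gives $\mP\mLambda_k\mY\mLambda_k\mP=\diag(\mL_k Y_{11}\mL_k,\,0)$, so that $\cA_0(\mY)=\diag(\tilde{\cT}(Y_{11}),0)$, $\|\cA_0(\mY)\|_F=\|\tilde{\cT}(Y_{11})\|_F$, and $\|\mY\|_F=\|Y_{11}\|_F$. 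Substituting this reduction into the stability/instability dichotomy of Theorem~\ref{th:state_evolution_stability_theorem} produces the two stated conditions.

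The main obstacle will be the Gaussian reduction step. Proposition~\ref{prop:mmse_expansion_zero_is_tight} only supplies an $o(\eps)$ Frobenius bound on the difference of overlaps along each unit direction $\mT\in\PSD^d$, and one has to argue that this is enough to identify the entire directional gradient $\nabla\psi(0)$ with the Gaussian one. This hinges on \ref{as:signal_weak_correlation_assumption}: the limiting $\psi$ is Lipschitz on $\PSD^d$ and therefore admits well-defined continuous directional derivatives at the boundary, so that matching the Gaussian directional derivatives along every unit $\mT\in\PSD^d$ pins down $\nabla\psi(0)$ as the map $\mS\mapsto\mP\mS\mP$. Once this reduction is in place, the block-matrix computation and the appeal to Theorem~\ref{th:state_evolution_stability_theorem} are routine.
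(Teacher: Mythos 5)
Your approach is the same as the paper's: the paper's one-line proof simply invokes Theorem~\ref{th:state_evolution_stability_theorem} and Proposition~\ref{prop:mmse_expansion_zero_is_tight}, while the identity $\cA_0(\mX)=\mP\cT(\mX)\mP$ that makes this work is asserted (not derived) in the paragraph preceding the corollary. Your write-up essentially reproduces that logic and fills in the details: verifying $\psi(0)=0$, producing $\cA_0$, performing the block multiplication that yields $\tilde\cT$, and then plugging into Theorem~\ref{th:state_evolution_stability_theorem}. The block computation showing $\mP\cT(\diag(Y_{11},0))\mP=\diag(\tilde\cT(Y_{11}),0)$ is correct and is the useful part the paper leaves implicit.

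One step is shakier than you acknowledge, and it is the same step the paper waves away. Proposition~\ref{prop:mmse_expansion_zero_is_tight} lets you replace $\bX$ by a Gaussian surrogate $\Normal(\mmu_n,\mPhi_n)$ with the \emph{same} full covariance $\mPhi_n\in\PSD^{nd}$ --- it does not license replacement by a Gaussian with covariance $\mP\otimes\Id_n$. At zero SNR the operator form \eqref{eq:mmse_gradient_linear_operator_representation} therefore gives $-\nabla M_n(0)(\mT)=\frac{1}{n}\tr_n\{\mPhi_n(\mT\otimes\Id_n)\mPhi_n\}$, whose $(k,l)$-entry is $\frac{1}{n}\sum_{c,c'}T_{cc'}\langle \mC_{ck},\mC_{c'l}\rangle$ for the $n\times n$ blocks $\mC_{ab}$ of $\mPhi_n$. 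The normalization \eqref{eq:signal_limiting_row_covariance_assumption} only fixes $\lim_n \frac{1}{n}\tr\,\mC_{ab}=P_{ab}$; to conclude $\lim_n\frac{1}{n}\langle\mC_{ck},\mC_{c'l}\rangle=P_{ck}P_{c'l}$ one needs the off-diagonal parts of each $\mC_{ab}$ to be negligible in Frobenius norm (e.g.\ i.i.d. rows, as in the application of Section~\ref{sec:heteroskedastic_rank_one_model_application}), which is a stronger statement than the bounded-operator-norm condition in \ref{as:signal_weak_correlation_assumption}. The paper itself glosses over this in the preamble text, so you are not introducing a new error; but if you want the derivation to be airtight you should either (i) explicitly assume i.i.d.\ rows as in Theorem~\ref{th:state_evolution_fixed_points_extremizers_iid_priors}, or (ii) strengthen the weak-correlation assumption to pin down this limit. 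Likewise, the restriction of the stability maximization to block-diagonal $\mY$ (and hence to $\tilde\cT$) is justified only because SE orbits are confined to $\{\mQ\preceq\mP\}$; worth one extra sentence noting that after one SE step the orbit lands in this cone so off-block perturbations never propagate.
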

\begin{proof}
    This is a follows immediately from Theorem~\ref{th:state_evolution_stability_theorem} and Proposition~\ref{prop:mmse_expansion_zero_is_tight}.
\end{proof}

By Lemma~\ref{lem:completely_positive_operator_properties}, all linear operators $\cA_{\mQ^\star}$ (which can be verified to be completely positive) can be expressed in terms of their spectral decomposition with symmetric-skew-symmetric eigenvectors \eqref{eq:state_evolution_expansion_eigendecomposition}. Then, all symmetric matrices can be represented exclusively in terms of the symmetric eigenvectors, and a further simple sufficient condition for establishing the global stability of fixed points in SE involves checking whether the magnitude of the leading eigenvalue in the eigendecomposition of $\tilde{\cT}$ is smaller than one. 

\begin{prop} \label{prop:fixed_point_stability_eigenvalue_sufficient_condition}
    For a SE recursion satisfying \ref{as:signal_weak_correlation_assumption}, all fixed points $\mQ^\star$ are stable if, for a symmetric-skew-symmetric eigendecomposition \eqref{eq:state_evolution_expansion_eigendecomposition} of $\Tilde{\cT}$,
    \begin{align}
        \Tilde{\cT}(\mX) = \sum_{i=1}^{p^2} \lambda_i \langle \mU_i, \mX \rangle \mU_i,
    \end{align}
    one has 
    \begin{align}
        \max_{\{i : \mU_i \in \Sym^p\}} |\lambda_i| < 1.
    \end{align}
\end{prop}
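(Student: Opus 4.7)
The plan is to apply Theorem~\ref{th:state_evolution_stability_theorem} after showing that the linearization $\cA_{\mQ^\star} := \nabla\psi(\cT(\mQ^\star)) \circ \cT$ acts as a strict contraction on the effective subspace $\Sym^p$ (the signal's variance support, carved out by $\mP$ in~\eqref{eq:signal_limiting_row_covariance_assumption}). Since $\psi$'s image always lies in $\Sym^p$, any perturbation $\mY$ is mapped into $\Sym^p$ after a single iteration, so long-term behavior is governed entirely by the restriction of $\cA_{\mQ^\star}$ to this subspace.

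The first ingredient I would use is the self-adjointness of $\nabla\psi(\mS)$ on $\Sym^d$ from Proposition~\ref{prop:overlap_gradient_is_self_adjoint}, which gives rise to a PSD bilinear form $B_\mS(\mU, \mV) := \langle \mU, \nabla\psi(\mS)(\mV)\rangle$ and thereby makes Cauchy-Schwarz available. The second, and technically central, ingredient is the operator-monotonicity bound $\nabla\psi(\mS) \preceq \nabla\psi(0)$ for all $\mS \in \PSD^d$, equivalently $B_\mS(\mV, \mV) \leq B_0(\mV, \mV)$ on $\Sym^d$. I would derive this from the matrix convexity of $M(\mS)$ in the SNR variable (a standard consequence of the matrix I-MMSE relation for Gaussian channels combined with the concavity of mutual information), passing to the large-$n$ limit via assumption~\ref{as:signal_weak_correlation_assumption}. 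Combined with the identification $\nabla\psi(0)(\mV) = \mP \mV \mP$ that follows from Proposition~\ref{prop:matrix_mmse_gradient_identity} under our normalization, this yields $B_0(\mV, \mV) = \|\mP\mV\mP\|_F^2$.

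With these in hand, the proof proceeds as follows. For $\mY \in \PSD^p$ with $\|\mY\|_F \leq 1$, I represent $\|\cA_{\mQ^\star}(\mY)\|_F$ by duality and use that the image is PSD-valued (both $\cT$ and $\nabla\psi$ are positive maps, the latter with image in $\Sym^p$) to restrict the dual variable $\mZ$ to $\PSD^p$. Cauchy-Schwarz applied to $B_{\cT\mQ^\star}$, together with $B_\mS \leq B_0$, then gives
\[
\|\cA_{\mQ^\star}(\mY)\|_F \leq \sup_{\mZ \in \PSD^p,\, \|\mZ\|_F \leq 1} \sqrt{B_0(\mZ,\mZ)}\,\sqrt{B_0(\cT\mY,\cT\mY)} = \sup_{\mZ} \|\mP\mZ\mP\|_F \cdot \|\mP\cT(\mY)\mP\|_F = \|\tilde\cT(\mY)\|_F,
\]
using that $\mP\cT(\mY)\mP = \tilde\cT(\mY)$ for $\mY$ supported on the top-left block and $\|\mP\mZ\mP\|_F \leq \|\mZ\|_F \leq 1$. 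Finally, expanding $\tilde\cT(\mY)$ in the symmetric-skew eigenbasis~\eqref{eq:state_evolution_expansion_eigendecomposition}, the coefficients on skew-symmetric $\mU_i$ vanish because $\mY \in \Sym^p$, so Parseval's identity yields $\|\tilde\cT(\mY)\|_F \leq \max_{\mU_i \in \Sym^p}|\lambda_i|\cdot \|\mY\|_F$. Combining with the hypothesis produces the strict contraction $\|\cA_{\mQ^\star}(\mY)\|_F < \|\mY\|_F$, after which Theorem~\ref{th:state_evolution_stability_theorem} delivers stability of every fixed point $\mQ^\star$.

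The hard part will be rigorously justifying the operator monotonicity $\nabla\psi(\mS) \preceq \nabla\psi(0)$ in the large-$n$ limit: the fixed-$n$ statement is a matrix-convexity consequence of the I-MMSE identity, but commuting the limit with convexity requires the uniform operator bound in assumption~\ref{as:signal_weak_correlation_assumption}. A secondary subtlety is verifying the identification $\nabla\psi(0) = \mP(\cdot)\mP$ beyond the i.i.d.\ setting, which again leverages~\ref{as:signal_weak_correlation_assumption} through Proposition~\ref{prop:matrix_mmse_gradient_identity}.
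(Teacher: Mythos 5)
Your proposal takes a genuinely different route from the paper's. The paper asserts a \emph{positive-map} dominance, $\cA_{\mQ^\star}(\mX)\preceq\cA_0(\mX)$ in the Loewner order for every PSD input $\mX$ (justified by a one-line appeal to the law of total variance), after which the Frobenius bound follows because $0\preceq A\preceq B$ implies $\|A\|_F\leq\|B\|_F$. You instead use the self-adjointness of $\nabla\psi$ from Proposition~\ref{prop:overlap_gradient_is_self_adjoint} to build a PSD bilinear form, invoke an \emph{operator-PSD} dominance $\nabla\psi(\mS)\preceq\nabla\psi(0)$, and close via Cauchy--Schwarz. These are in general different conditions on linear maps (neither implies the other), though both deliver the same bound on the restricted singular value. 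Your Cauchy--Schwarz step is valid, the spectral computation at the end matches the paper's exactly, and your explicit remark that a perturbation in $\PSD^d$ is mapped into $\Sym^p$ after one iterate is slightly more careful than what the paper writes.

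The gap is in the justification of the operator-PSD dominance. You say $\nabla\psi(\mS)\preceq\nabla\psi(0)$ follows from matrix convexity of $M(\mS)$, and that the latter is ``a standard consequence of the matrix I-MMSE relation ... combined with the concavity of mutual information.'' The first reduction is actually fine: if $M$ is matrix convex then $\nabla^3 D[\mV,\mV]$ is negative semidefinite, and by full symmetry of the trilinear form $\langle\mV,(\nabla^2 D(\mS)-\nabla^2 D(0))(\mV)\rangle=\int_0^1\langle\mS,\nabla^3 D(t\mS)[\mV,\mV]\rangle\,dt\leq0$ for $\mS\in\PSD^d$. The second reduction is off by one derivative order. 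Concavity of mutual information in the SNR (equivalently, convexity of $D$, via $\nabla D\propto\psi$) is a statement about $\nabla^2 D\succeq 0$; matrix convexity of $M$ is a statement about the \emph{monotonicity} of $\nabla^2 D$, i.e., about $\nabla^3 D$. In the scalar case what is required is convexity of $\mmse(s)$, equivalently $\E[\Var(X\mid H_s)^2]\leq\Var(X)^2$, which is part of the complete-monotonicity hierarchy and is strictly stronger than the concavity of mutual information; citing the I-MMSE relation alone will not deliver it. You correctly flag this as the hard part, but the fact you plan to lean on is not the right one. (For what it is worth, the paper's own justification via the law of total variance is equally loose, since the gradient in Proposition~\ref{prop:matrix_mmse_gradient_identity} is quadratic in the conditional covariance and $A\mapsto A W A$ is not operator monotone; both arguments tacitly invoke a matrix-MMSE-convexity statement that neither establishes.)
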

\begin{proof}
    Since the expected conditional covariance of a random variable is upper bounded by its unconditional covariance, it follows that for all $\mQ^\star, \mX \in \PSD^d$ we have that $\cA_{\mQ^\star}(\mX) \preceq \cA_{0}(\mX)$. Thus, it is sufficient to consider $\cA_0$ to provide an upper bound for the restricted singular values of $\cA_{\mQ^\star}$ for any fixed point $\mQ^\star$.
    From the arguments above, $\cA_0 = \Tilde{\cT}$, and we have
    \begin{align}
        \max_{\{ \mY \in \PSD^p: \|\mY\|_F \leq 1\}} \| \tilde{\cT}(\mY) \|_F &= \max_{\{ \mY \in \PSD^p: \|\mY\|_F \leq 1\}} \left\| \sum_{i=1}^{p^2} \lambda_i \langle \mU_i, \mY \rangle \mU_i \right\|_F \\
        &= \max_{\{ \mY \in \PSD^p: \|\mY\|_F \leq 1\}} \left( \sum_{i: \mU_i \in \Sym^p} \lambda_i^2 \langle \mU_i, \mY \rangle^2 \right)^{1/2} \\
        & \leq \max_{\{ \mY \in \Sym^p: \|\mY\|_F \leq 1\}}  \left( \sum_{i: \mU_i \in \Sym^p} \lambda_i^2 \langle \mU_i, \mY \rangle^2 \right)^{1/2} \\
        &= \max_{\{i : \mU_i \in \Sym^p\}} |\lambda_i| .
    \end{align}
    Thus, by Theorem~\ref{th:state_evolution_stability_theorem}, the condition $\max_{\{i : \mU_i \in \Sym^p\}} |\lambda_i| < 1$ is sufficient to rule out instability for any fixed point $\mQ^\star$. 
\end{proof}

\section{Rank-one heteroskedastic spiked matrix model} \label{sec:heteroskedastic_rank_one_model_application}
We demonstrate an application of our AMP algorithm and our SE stability analysis by considering a special case of the MTP model, namely a rank-one matrix factorization problem observed in heteroskedastic Gaussian noise. This model constitutes a highly flexible generalization of the widely studied spiked Wigner model in Gaussian noise and is sometimes known as the multi-species model.  Work on the fundamental limits of inference for such models has appeared in the last years using a different approaches and varying levels of generality \cite{behne2022fundamental, guionnet2022low}, and an AMP algorithm specifically tailored for this model class has recently appeared \cite{pak2023optimal} for separable denoisers. We will use the model specification of \cite{behne2022fundamental}, and demonstrate an application of our Bayes-optimal AMP algorithm to study the SE vis-à-vis the fundamental limits for the model.

\subsection{Model Setup and Bayes-optimal AMP}
 For $n \in \N$ and $d \in \N$, we specify a partition $\{J_1, \dotsc, J_d\}$ of $[n]$ with cardinalities $|J_j| = n_j$, $j \in [D]$ satisfying $n / n_j \rightarrow \beta_j \in [0,1]$ as $n \rightarrow \infty$. For $j \leq \ell \in [K]$, we let $\lambda_{j\ell} \in \R$, we let $\bone_{j\ell} \in \R^{n_j \times n_\ell}$ denote a $n_j \times n_\ell$ matrix of ones and we define the SNR profile matrix
\begin{align}
    \mDelta \coloneqq \begin{bmatrix}
        \bone_{11} \lambda_{11} & \bone_{12}\lambda_{12} & \hdots & \bone_{1d} \lambda_{1d} \\
        \bone_{21} \lambda_{12} & \bone_{22}\lambda_{22} & \hdots & \bone_{2d} \lambda_{2d} \\
        \vdots & \vdots & \ddots & \vdots \\
        \bone_{d1} \lambda_{1d} & \bone_{d2}\lambda_{2d} & \hdots & \bone_{dd} \lambda_{dd}
    \end{bmatrix} \quad \in \R^{n \times n}.
\end{align}
We take a signal vector $X \in \R^n$ and define the observation model as
\begin{align} \label{eq:hetero_rank_one_spiked_model}
    \bY = \frac{1}{n} (XX^\top) \circ \mDelta + \frac{1}{\sqrt{n}}\bG \quad \in \R^{n \times n},
\end{align}
where $\bG \sim \GOE(n)$ and $\circ$ denotes the element-wise (Hadamard) product of matrices. This model thus corresponds to a version of the widely studied symmetric rank-one matrix factorization problem with a known profiled noise variance, with the exception of allowing some blocks to be unobserved when $\lambda_{j\ell} = 0$. Partitioning the spike $X$ into blocks $\{X_j \in \R^{n_j}\}_{j \in [d]}$ corresponding to the block structure in the SNR profile, \eqref{eq:hetero_rank_one_spiked_model} can be seen as a special instance of the MTP with $K=1$ and a rank-$d$ signal $\bX \in \R^{n \times d}$ with a block diagonal structure
\begin{align} \label{eq:hetero_rankone_signal_struct}
    \bX \coloneqq \diag(X_1, \dotsc, X_d) = \begin{bmatrix}
        X_1 & 0 & \hdots & 0 \\
        0 & X_2 & \hdots & 0 \\
        \vdots & \vdots & \ddots & \vdots \\
        0 & 0 & \hdots & X_d
    \end{bmatrix} \quad \in \R^{n \times d},
\end{align}
with symmetric coupling matrix $\mLambda = (\lambda_{j\ell})_{j, \ell \in [d]}$, $\lambda_{j\ell} = \lambda_{\ell j}$, yielding
\begin{align} \label{eq:hetero_rankone_diag_representation}
    \bY = \frac{1}{n} \bX \mLambda \bX^\top + \frac{1}{\sqrt{n}}\bG.
\end{align}
Special configurations of the coupling matrix $\mLambda$ recover a variety of models that have been analyzed in their own right, such as the widely known symmetric and asymmetric rank-one spiked matrix models, generalized spiked covariance models \cite{Bai:2012aa}, spatially coupled models \cite{Barbier:2018aa} and Gaussian approximations to the stochastic blockmodel \cite{Abbe:2017aa} and its contextual version \cite{DSMM2018contextual}, among others.

Henceforth, we will be assuming that the matrix $\mLambda$ as well as the limiting group sizes $\mbeta \coloneqq (\beta_1, \dotsc, \beta_d)$ are known, and we have access to prior information about the structure of the signal within each block $X_j$. Coherently with the standard assumptions on these classes of symmetric models, we will be assuming that each signal block is constituted of i.i.d. draws from some probability distribution $P_j$, $j \in [d]$, with zero mean and bounded fourth moments. Additionally, we normalize each $P_j$ to have unit second moment and assume that each $P_j$ is such that the posterior expectation of $X_j \sim P_j$ given observations $\mS^{1/2}X_j + Z$, $Z \sim \Normal(0,1) \indep X_j$ is Lipschitz.

In terms of the implications of these assumptions, we note that the i.i.d. draw assumption implies that the weak correlation condition \ref{as:signal_weak_correlation_assumption} necessary to have a Lipschitz state evolution recursion is satisfied, and furthermore the model is zero-symmetric, and hence the ensuing state evolution has a trivial fixed point at zero in absence of an informative initialization to break symmetry.

For our AMP algorithm, we will assume an initialization $\bM^0$ for which it holds
\begin{align}
    \plim_{n \rightarrow \infty} \frac{1}{n} \bX^\top \bM^0 = \plim_{n \rightarrow \infty} \frac{1}{n} (\bM^0)^\top \bM^0 = \mQ^1 \succ 0.
\end{align}
This essentially amounts being provided as side-information Bayes-optimal estimate for some independent side information, which is subsequently discarded during the AMP iterations. The choice for this type of initialization is motivated by the goal of performing a stability analysis for the Bayes-optimal state evolution given a non-trivial initialization.

From the i.i.d. assumption, it is clear that the Bayes-optimal denoiser choice for AMP is the separable conditional expectation adapted to the block structure of $\bX$. We show this for the first step of the AMP recursion, and for $t \geq 2$ a similar argument follows by using Theorem~\ref{th:amp_mtp_symmetric_main_result}. At the first AMP step, the limiting distribution for $\bX^1$ is given (before reweighting by $\mLambda$) by 
\begin{align}
    \bH^1 = \mX \mLambda \mQ^1 + \bZ^1; \quad \bZ^1 \sim \Normal(0, \mQ^1 \otimes \Id_n) \indep \bX.
\end{align}
Using the same arguments as the ones in Section~\ref{sec:bayes_optimal_reweighting}, it follows that the Bayes-optimal estimator $\eta_1 : \R^{n \times d} \rightarrow \R^{n \times d}$ given $\bH^1$ is obtained entry-wise as
\begin{align}
    [\eta_1(y)]_{ij} = \begin{cases} \E\!\left[ X^\star_j \mid \langle H_i,  \mLambda \me_j \rangle = \langle y \me_i, \mLambda \me_j \rangle \right] & \text{if } j: i \in J_j, \ \ X^\star_j \sim P_j \\ 
    0 & \text{otherwise}\end{cases}.
\end{align}
where $\me_j$ denotes the $j$-th basis vector of $\R^d$. Equivalently, we can choose to reweight the iterate $\mX^1$ by $\mLambda$ and act conditionally on the reweighted limiting random variable (which by an abuse of notation we still label by $\bH^1$)
\begin{align}
    \bH^1 = \mX \cT(\mQ^1) + \bZ^1; \quad \bZ^1 \sim \Normal(0, \cT(\mQ^1) \otimes \Id_n) \indep \bX,
\end{align}
and the Bayes-optimal denoiser can be seen as acting separably on the block-diagonal elements of the AMP iterate, i.e. $\eta_1(y)$ is defined element-wise as
\begin{align}
    [\eta_1(y)]_{ij} = \begin{cases}
        \E[ X^\star_j \mid H_{ij}^1 = y_{ij} ] & \text{if } i \in J_j, \ X^\star \sim P_j \\
        0 & \text{otherwise}
    \end{cases}.
\end{align}
Applying the same reasoning recursively, we can describe the Bayes optimal AMP recursion $\{ \bX^t \mid \bM^0, \eta_t \}$.

\subsection{State evolution analysis}
With the Bayes-optimal algorithm for the heteroskedastic rank-one model specified, we conduct a stability analysis of SE in the spirit of Section~\ref{sec:computational_limits}. Since many of the quantities simplify quite considerably in this special case, we opt for a direct derivation of the overlap maps and related gradients instead of the abstract operator formulation we used in the general case. From Corollary~\ref{cor:amp_mtp_symmetric_wasserstein_convergence_result}, the SE recursion associated with the Bayes-optimal algorithm $\{ \bX^t \mid \bM^0, \eta_t \}$ can be expressed in terms of the overlap matrices $\mQ^t$. For some overlap level $\mQ^t$, we define the SNR vector $\ms^t \coloneqq (s_1^t, \dotsc, s_d^t) \in \R_+^{d}$ as $s_j^t = [\cT(\mQ^t)]_{jj}$. With Bayes-optimal denoisers, then, the overlap $\mQ^{t+1}$, $t \in \N$, can be obtained as
\begin{align}
    \mQ^{t+1} = \begin{bmatrix}
        \beta_1 \E[ \E[X_1^\star \mid \sqrt{s_1^t} X_1^\star + Z]^2  ] & 0 & \hdots & 0 \\
        0 & \beta_2 \E[ \E[X_2^\star \mid \sqrt{s_2^t} X_2^\star + Z]^2  ] & \hdots & 0 \\
        \vdots & \vdots & \ddots & \vdots \\
        0 & 0 & \hdots & \beta_d \E[ \E[X_d^\star \mid \sqrt{s_d^t} X_d^\star + Z]^2  ]
    \end{bmatrix}.
\end{align}
We see that state evolution for the heteroskedatic rank-one model is diagonal (in absence of dependencies between the blocks), and we can reduce SE to a vector-to-vector map. We define \textit{overlap vectors} $\mq^t \coloneqq (q_1^t, \dotsc q_d^t)$ as the vector of diagonal elements in $\mQ^t$. Since $\ms^t$ is composed of the diagonal elements of $\mLambda \diag(\mq^t) \mLambda$, it holds  $ \ms^t = \mLambda^{\circ 2} \mq^t$, and we can write state evolution as 
\begin{align}
    \mq^{t+1} = \psi( \mLambda^{\circ 2} \mq^t ); \quad \psi(\ms) \coloneqq \begin{pmatrix}
        \beta_1 \E[ \E[X_1^\star \mid \sqrt{s_1} X_1^\star + Z]^2  ] \\
        \vdots \\
        \beta_d \E[ \E[X_d^\star \mid \sqrt{s_d} X_1^\star + Z]^2  ].
    \end{pmatrix}
\end{align}

From the zero-mean assumption on the priors $P_j$, we know that $\mq^\star = 0$ is a fixed point for state evolution. We use the result in Corollary~\ref{cor:state_evolution_stability_zero_mean}, then, to provide necessary and sufficient condition for recovery in the heteroskedastic model. In a right-neighborhood of zero, the gradient of the overlap function $\nabla_{\ms}\psi(0)$ is given by the diagonal matrix $\diag(\mbeta)$. Thus, letting $\mT \coloneqq \diag(\mbeta) \mLambda^{\circ 2}$ be the gradient of SE map evaluated at zero, state evolution is unstable and weak recovery is achievable whenever
\begin{align}
    \max_{\{ \mx \in \R_+^d : \|\mx\| \leq 1 \}} \| \mT \mx \| > 1,
\end{align}
and is otherwise impossible when
\begin{align}
    \max_{\{ \mx \in \R_+^d : \|\mx\| \leq 1 \}} \| \mT \mx \| < 1.
\end{align}
In this setting, a sufficient condition to ignore the positivity constraint in the optimization set is that the matrix $\mT$ has a positive leading eigenvalue. An easy-to-verify condition for this to occur is that the matrix $\mT$ be irreducible. By the Perron-Frobenius theorem \cite[Chapter 13]{Gantmakher:2000aa}, then, the leading right eigenvector of $\mT$ is positive, and we can simply study the magnitude of $\| \mT \|_{\op}$. Furthermore, it can be shown that under such conditions SE is monotonically increasing to a fixed point $\mq^\star \succ 0$. In this estimation context, irreducibility is a very natural assumption, as it corresponds to the notion that no subset of the $d$ groups in the signal is isolated from the other, as if that were the case the two models would simply decouple and could be studied individually, as long as there is no statistical dependence between the distribution the signals are drawn from.

\subsection{SE fixed points and fundamental limits}
To conclude the presentation of the properties of Bayes-optimal SE, we compare the fixed-point overlap to the critical points that appear in the characterization of the fundamental limits for the heteroskedastic rank-one model in \cite{behne2022fundamental}. Specializing the approximation formula provided in \cite{reeves2020information}, they provide an asymptotic lower bound for the block-level MMSE 
\begin{align}
    \MMSE_j(\mLambda, \mbeta) = \frac{1}{n_j} \E\!\left[ \| X_j - \E[X_j \mid \bY]\|_F^2 \right]
\end{align}
in terms of the extremizers of a variational formula, namely
\begin{align} \label{eq:heteroskedastic_rank_one_fundamental_limits_lower_bound}
    \liminf_{n \rightarrow \infty} \MMSE_j(\mLambda, \mbeta) \geq 1 - \frac{q_j^\star}{\beta_j},
\end{align}
where $q_j^\star$ is the $j$-th coordinate of the global maximizer $\mq^\star$ of
\begin{align} \label{eq:heteroskedastic_rank_one_model_mmse_variational_formula}
    \max_{\mq \in [0, \mbeta]} \inf_{\ms \succeq 0} \left\{ \langle \mbeta, D(\ms) \rangle + \frac{1}{4} \langle \mq, \mLambda^{\circ 2}\mq \rangle - \frac{1}{2} \langle \ms, \mq \rangle \right\}; \qquad D(\ms) \coloneqq \begin{pmatrix} \KL( \sqrt{s_1}X_1^\star + Z \; \| \; Z ) \\ \vdots \\  \KL( \sqrt{s_d}X_d^\star + Z \; \| \; Z )\end{pmatrix},
\end{align}
for $X_j^\star \sim P_j \indep Z \sim \Normal(0,1)$. The critical points of the variational formula and SE are related via the I-MMSE relationship \cite{Guo:2005aa}, which gives that
\begin{align}
    \frac{\partial}{\partial s_j} D(\ms) = \frac{1}{2} \beta_j^{-1} [\psi(\ms)]_j.
\end{align}
With this, we can verify that all critical points of \eqref{eq:heteroskedastic_rank_one_model_mmse_variational_formula} are exactly (up to renormalization by $\mbeta$) the fixed points of the SE recursion, providing the inclusion relation
\begin{align}
    \arg \max_{\mq \in [0,\mbeta]} \inf_{\ms \succeq 0} \left\{ \langle \mbeta, D(\ms) \rangle + \frac{1}{4} \langle \mq, \mLambda^{\circ2}\mq \rangle - \frac{1}{2} \langle \ms, \mq \rangle \right\} \subset \left\{ \mv : \mbeta \circ \mv = \psi(\mT (\mbeta \circ \mv)) \right\}.
\end{align}
Another remarkable connection is the nature of the lower bound \eqref{eq:heteroskedastic_rank_one_fundamental_limits_lower_bound} and the instability in the zero fixed point in SE. In fact, we only have a lower bound in the fundamental limits due to the presence of a phase transition at exactly zero side information. From symmetry, the MMSE of the model is trivial when side information is exactly zero, but as soon as an infinitesimal bias is introduced in the model (in the form of side information) the limiting MMSE jumps to the lower bound, just like for an arbitrarily small perturbation about zero state evolution may move to a strictly non-zero value. 

Despite these similarities, however, there exists a statistical-to-computational gap between the theoretical MMSE lower that can be achieved and the SNR regimes in which state evolution attains orbits that are bounded away from zero. Thus, while there is a strong qualitative analogy between the results of SE analysis and the fundamental limits results, there exist some prior signals for which the SNR regimes in which AMP algorithms are suboptimal in terms of mean-square error, and their performance is dominated by some other, possibly infeasible, estimation procedure. We will provide a numerical example of this type of statistical-to-computational gaps in the next section.

\subsection{Numerical Experiments}
We conclude this application with a concrete example of the above analysis, in which we study a model in which $d = 2$, $\mbeta = (0.6, 0.4)$ and the SNR (or inverse-variance) profile is parametrized by a single scalar value $s>0$ that multiplies a fixed irreducible structure $\mXi$,
\begin{align}
    \mLambda^{\circ 2} = c \cdot \mXi; \qquad \mXi = \begin{bmatrix}
        0.7 & 0.3 \\ 0.3 & 0.7
    \end{bmatrix}.
\end{align}
Entries in $X_1$ are i.i.d. samples from a $\mathsf{Unif}(\{\pm 1\})$ distribution, while entries of $X_2$ are i.i.d. samples from a Bernoulli-Gaussian distribution with sparsity parameter $\epsilon \in (0,1]$, $\mathsf{BG}(\epsilon)$ i.e.
\begin{align}
    X \sim \mathsf{BG}(\epsilon) \iff X=BN; \qquad B \sim \mathsf{Be}(\epsilon) \indep N \sim \Normal(0, \epsilon^{-1}),
\end{align}
and the case $\epsilon = 1$ recovers the i.i.d. standard Gaussian case. 

As we will see, this simple setup is sufficient to illustrate the full complexity of behaviors that we described in the previous section. We will be comparing the information-theoretical lower bound \eqref{eq:heteroskedastic_rank_one_fundamental_limits_lower_bound} on $\MMSE_l$ with the MSE implied by the perturbed SE fixed point. We will do this for $l=1,2$ as a function of the choice of scaling $c$ and for varying sparsity levels $\epsilon \in \{0.05, 0.1, 0.5, 1\}$. In particular, we track the block-MSEs as a function of $\|\mT_c \|_{\mathrm{op}} \coloneqq c \cdot \|\diag(\mbeta) \mXi\|_{\mathrm{op}}$, which we can interpret as the implied SNR for the model. The results are plotted in Figure~\ref{fig:hetero_2group_fundlimits}.

\begin{figure}
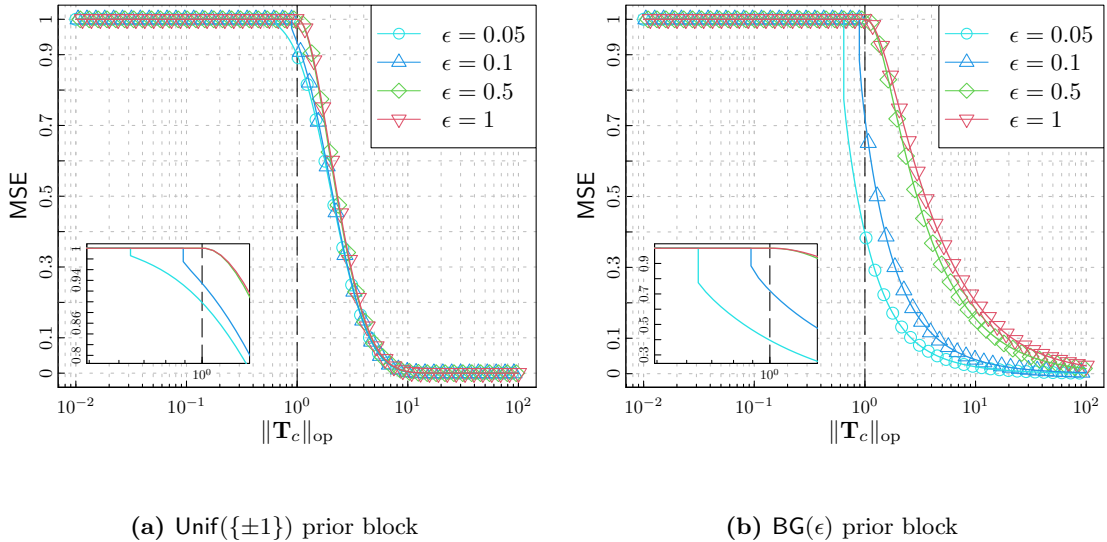

    \centering
    \begin{subfigure}{.45\textwidth}
    \centering
    \include{2group_fundlims_rad}
    \caption{$\mathsf{Unif}(\{\pm 1\})$ prior block}
    \end{subfigure}
    \begin{subfigure}{.45\textwidth}
    \centering
    \include{2group_fundlims_bg}
    \caption{$\mathsf{BG}(\epsilon)$ prior block}
    \end{subfigure}
    \caption{Mean square errors achievable with Bayes-optimal AMP (points) contrasted to the theoretical minimum mean squared errors (lines) implied by the fundamental limits. Different colors indicate the sparsity levels $\epsilon$ chosen. The figure insets zoom the lines describing the MMSE lower bounds in the region where phase transitions occur.} \label{fig:hetero_2group_fundlimits}
\end{figure}

At low sparsity levels ($\epsilon \in \{0.5, 1\}$), the MMSE lower bound curves show a smooth transition from the unestimable regime to the weak recovery regime exactly at $\|\mT_c\|_{\mathrm{op}} = 1$, and the MSE achieved by AMP matches the theoretical optimum at all values of $\|\mT_c\|_{\mathrm{op}}$. On the contrary, as sparsrity increases, the MMSE curves exhibit jump discontinuities in the region $\|\mT_c\|_{\mathrm{op}} < 1$, and the MMSE falls below 1 before the AMP phase transition occurs, indicating the presence of statistical-to-computational gaps. We also observe that the sparsity in $X_2$ also influences the MMSE transitions for $X_1$, whose prior distribution is unchanged across the various settings. Finally, we note that once $\|\mT_c\|_{\mathrm{op}} > 1$ the AMP MSE exhibits a jump discontinuity and aligns exactly with the MMSE, indicating that past the weak recovery threshold AMP achieves Bayes-optimal performance across all sparsity level considered.

\section{Conclusion and discussion}
To summarize the contributions of this paper, we have introduced an AMP algorithm adapted to the structure of the MTP model, in which the information coming from the different views of the signal is combined linearly, preserving the asymptotic characterization of the AMP iterates as additive Gaussian noise-corrupted observations of the signal. Extending this approach to cases in which the noise is non-Gaussian is surely an interesting prospect both from the theoretical and practical view, and is in line with the recent interest in extending the AMP theory to larger classes of random matrix ensembles.

We furthermore showed that, in a Bayesian estimation setting, the linear reweighting step is optimal with respect to mean-squared error when the appropriate weights are chosen, leading to a particularly simple characterization of the associated SE recursion in terms of a nonlinear overlap map $\psi$ and a linear operator $\cT$. 

We highlighted the connections between the Bayes-optimal state evolution recurrence relation and the fundamental limits of inference for the MTP by highlighting the relationship between a finite-sample approximation of SE and the critical points of an asymptotically approximation formula for the mutual information of the model. Establishing rigorously the general conditions under which this correspondence carries over to the asymptotic setting is still an open question that we deem of great interest, and will be interested in exploring in future work. 

Together with the analysis of the location of SE fixed points, we presented necessary and sufficient conditions for the fixed-point overlap achievable by AMP to be stable with respect to arbitrarily small perturbation, which has very direct implication for the possibility of weak recovery of the signal when the MTP signal is asymptotically zero-mean. While our conditions are rather general, some further refinements are in principle possible. An improvement, for example, would be to characterize the conditions under which instability of a SE fixed point implies convergence to a different fixed point given a perturbation, which we do not prove in case the fixed point is unstable.

Finally, we demonstrated the generality of our approach with an application of our results to the heteroskedastic rank-one spiked matrix model, which is recovered as a special case of the MTP model.

\bibliographystyle{plain}
\bibliography{refs}

\begin{thebibliography}{10}

\bibitem{Abbe:2017aa}
Emmanuel Abbe.
\newblock Community detection and stochastic block models: recent developments.
\newblock {\em The Journal of Machine Learning Research}, 18(1):6446--6531,
  2017.

\bibitem{Bai:1988aa}
Z.~D. Bai and Y.~Q. Yin.
\newblock Necessary and sufficient conditions for almost sure convergence of
  the largest eigenvalue of a wigner matrix.
\newblock {\em The Annals of Probability}, 16(4):1729--1741, 1988.

\bibitem{Bai:1999aa}
ZD~Bai.
\newblock Methodologies in spectral analysis of large dimensional random
  matrices, a review.
\newblock {\em Statistica Sinica}, 9:611--677, 1999.

\bibitem{Bai:2012aa}
Zhidong Bai and Jianfeng Yao.
\newblock On sample eigenvalues in a generalized spiked population model.
\newblock {\em Journal of Multivariate Analysis}, 106:167--177, 2012.

\bibitem{baik2005}
Jinho Baik, G{\'e}rard {Ben Arous}, and Sandrine P{\'e}ch{\'e}.
\newblock {Phase transition of the largest eigenvalue for nonnull complex
  sample covariance matrices}.
\newblock {\em The Annals of Probability}, 33(5):1643 -- 1697, 2005.

\bibitem{Barbier:2018aa}
Jean Barbier, Mohamad Dia, Nicolas Macris, Florent Krzakala, and Lenka
  Zdeborov{\'a}.
\newblock Rank-one matrix estimation: analysis of algorithmic and information
  theoretic limits by the spatial coupling method.
\newblock {\em arXiv preprint arXiv:1812.02537}, 2018.

\bibitem{Barbier:2020ab}
Jean Barbier and Galen Reeves.
\newblock Information-theoretic limits of a multiview low-rank symmetric spiked
  matrix model.
\newblock In {\em 2020 IEEE International Symposium on Information Theory
  (ISIT)}, pages 2771--2776, 2020.

\bibitem{Bayati:2011aa}
Mohsen Bayati and Andrea Montanari.
\newblock The dynamics of message passing on dense graphs, with applications to
  compressed sensing.
\newblock {\em IEEE Transactions on Information Theory}, 57(2):764--785, 2011.

\bibitem{behne2022fundamental}
Joshua~K Behne and Galen Reeves.
\newblock Fundamental limits for rank-one matrix estimation with groupwise
  heteroskedasticity.
\newblock In {\em International Conference on Artificial Intelligence and
  Statistics}, pages 8650--8672. PMLR, 2022.

\bibitem{Ben-Arous:2020aa}
G\'erard Ben~Arous, Alexander~S. Wein, and Ilias Zadik.
\newblock Free energy wells and overlap gap property in sparse pca.
\newblock In Jacob Abernethy and Shivani Agarwal, editors, {\em Proceedings of
  Thirty Third Conference on Learning Theory}, volume 125 of {\em Proceedings
  of Machine Learning Research}, pages 479--482. PMLR, 09--12 Jul 2020.

\bibitem{berthier2020state}
Raphael Berthier, Andrea Montanari, and Phan-Minh Nguyen.
\newblock State evolution for approximate message passing with non-separable
  functions.
\newblock {\em Information and Inference: A Journal of the IMA}, 9(1):33--79,
  2020.

\bibitem{Bhatia:2003aa}
Rajendra Bhatia.
\newblock Partial traces and entropy inequalities.
\newblock {\em Linear algebra and its applications}, 370:125--132, 2003.

\bibitem{boucheron2013concentration}
St{\'e}phane Boucheron, G{\'a}bor Lugosi, and Pascal Massart.
\newblock {\em Concentration inequalities: A nonasymptotic theory of
  independence}.
\newblock Oxford university press, 2013.

\bibitem{Deshpande:2016aa}
Yash Deshpande, Emmanuel Abbe, and Andrea Montanari.
\newblock {Asymptotic mutual information for the balanced binary stochastic
  block model}.
\newblock {\em Information and Inference: A Journal of the IMA}, 6(2):125--170,
  12 2016.

\bibitem{Deshpande:2014aa}
Yash Deshpande and Andrea Montanari.
\newblock Information-theoretically optimal sparse pca.
\newblock In {\em 2014 IEEE International Symposium on Information Theory},
  pages 2197--2201, 2014.

\bibitem{Deshpande:2018aa}
Yash Deshpande, Subhabrata Sen, Andrea Montanari, and Elchanan Mossel.
\newblock Contextual stochastic block models.
\newblock In S.~Bengio, H.~Wallach, H.~Larochelle, K.~Grauman, N.~Cesa-Bianchi,
  and R.~Garnett, editors, {\em Advances in Neural Information Processing
  Systems}, volume~31. Curran Associates, Inc., 2018.

\bibitem{DSMM2018contextual}
Yash Deshpande, Subhabrata Sen, Andrea Montanari, and Elchanan Mossel.
\newblock Contextual stochastic block models.
\newblock In S.~Bengio, H.~Wallach, H.~Larochelle, K.~Grauman, N.~Cesa-Bianchi,
  and R.~Garnett, editors, {\em Advances in Neural Information Processing
  Systems}, volume~31. Curran Associates, Inc., 2018.

\bibitem{feng2022unifying}
Oliver~Y Feng, Ramji Venkataramanan, Cynthia Rush, Richard~J Samworth, et~al.
\newblock A unifying tutorial on approximate message passing.
\newblock {\em Foundations and Trends{\textregistered} in Machine Learning},
  15(4):335--536, 2022.

\bibitem{Fletcher:2018aa}
Alyson~K Fletcher and Sundeep Rangan.
\newblock {Iterative reconstruction of rank-one matrices in noise}.
\newblock {\em Information and Inference: A Journal of the IMA}, 7(3):531--562,
  01 2018.

\bibitem{Gantmakher:2000aa}
Feliks~Ruvimovich Gantmakher.
\newblock {\em The theory of matrices}, volume 131.
\newblock American Mathematical Soc., 2000.

\bibitem{gerbelot2021graph}
C{\'e}dric Gerbelot and Rapha{\"e}l Berthier.
\newblock Graph-based approximate message passing iterations.
\newblock {\em arXiv preprint arXiv:2109.11905}, 2021.

\bibitem{guionnet2022low}
Alice Guionnet, Justin Ko, Florent Krzakala, and Lenka Zdeborov{\'a}.
\newblock Low-rank matrix estimation with inhomogeneous noise.
\newblock {\em arXiv preprint arXiv:2208.05918}, 2022.

\bibitem{Guo:2005aa}
Dongning Guo, S.~Shamai, and S.~Verdu.
\newblock Mutual information and minimum mean-square error in gaussian
  channels.
\newblock {\em IEEE Transactions on Information Theory}, 51(4):1261--1282,
  2005.

\bibitem{guo2011}
Dongning Guo, Yihong Wu, Shlomo~S. Shitz, and Sergio Verdú.
\newblock Estimation in gaussian noise: Properties of the minimum mean-square
  error.
\newblock {\em IEEE Transactions on Information Theory}, 57(4):2371--2385,
  2011.

\bibitem{johnstone2001distribution}
Iain~M Johnstone.
\newblock On the distribution of the largest eigenvalue in principal components
  analysis.
\newblock {\em The Annals of statistics}, 29(2):295--327, 2001.

\bibitem{Kabashima:2016aa}
Yoshiyuki Kabashima, Florent Krzakala, Marc M{\'e}zard, Ayaka Sakata, and Lenka
  Zdeborov{\'a}.
\newblock Phase transitions and sample complexity in bayes-optimal matrix
  factorization.
\newblock {\em IEEE Transactions on Information Theory}, 62(7):4228--4265,
  2016.

\bibitem{Lesieur:2017aa}
Thibault Lesieur, Florent Krzakala, and Lenka Zdeborov{\'a}.
\newblock Constrained low-rank matrix estimation: phase transitions,
  approximate message passing and applications.
\newblock {\em Journal of Statistical Mechanics: Theory and Experiment},
  2017(7):073403, jul 2017.

\bibitem{Mayya:2019aa}
Vaishakhi Mayya and Galen Reeves.
\newblock Mutual information in community detection with covariate information
  and correlated networks.
\newblock In {\em 2019 57th Annual Allerton Conference on Communication,
  Control, and Computing (Allerton)}, pages 602--607, 2019.

\bibitem{Mondelli:2021aa}
Marco Mondelli and Ramji Venkataramanan.
\newblock Approximate message passing with spectral initialization for
  generalized linear models.
\newblock In Arindam Banerjee and Kenji Fukumizu, editors, {\em Proceedings of
  The 24th International Conference on Artificial Intelligence and Statistics},
  volume 130 of {\em Proceedings of Machine Learning Research}, pages 397--405.
  PMLR, 13--15 Apr 2021.

\bibitem{Mondelli:2021ab}
Marco Mondelli and Ramji Venkataramanan.
\newblock Pca initialization for approximate message passing in rotationally
  invariant models.
\newblock In M.~Ranzato, A.~Beygelzimer, Y.~Dauphin, P.S. Liang, and J.~Wortman
  Vaughan, editors, {\em Advances in Neural Information Processing Systems},
  volume~34, pages 29616--29629. Curran Associates, Inc., 2021.

\bibitem{montanari2021estimation}
Andrea Montanari and Ramji Venkataramanan.
\newblock Estimation of low-rank matrices via approximate message passing.
\newblock {\em The Annals of Statistics}, 49(1):321--345, 2021.

\bibitem{pak2023optimal}
Aleksandr Pak, Justin Ko, and Florent Krzakala.
\newblock Optimal algorithms for the inhomogeneous spiked wigner model.
\newblock {\em arXiv preprint arXiv:2302.06665}, 2023.

\bibitem{Palomar:2006aa}
D.P. Palomar and S.~Verdu.
\newblock Gradient of mutual information in linear vector gaussian channels.
\newblock {\em IEEE Transactions on Information Theory}, 52(1):141--154, 2006.

\bibitem{Payaro:2009aa}
Miquel Payaro and Daniel~P. Palomar.
\newblock Hessian and concavity of mutual information, differential entropy,
  and entropy power in linear vector gaussian channels.
\newblock {\em IEEE Transactions on Information Theory}, 55(8):3613--3628,
  2009.

\bibitem{Peche:2006aa}
S.~P{\'e}ch{\'e}.
\newblock The largest eigenvalue of small rank perturbations of hermitian
  random matrices.
\newblock {\em Probability Theory and Related Fields}, 134(1):127--173, 2006.

\bibitem{reeves2020information}
Galen Reeves.
\newblock Information-theoretic limits for the matrix tensor product.
\newblock {\em IEEE Journal on Selected Areas in Information Theory},
  1(3):777--798, 2020.

\bibitem{Reeves:2018aa}
Galen Reeves, Henry~D. Pfister, and Alex Dytso.
\newblock Mutual information as a function of matrix snr for linear gaussian
  channels.
\newblock In {\em 2018 IEEE International Symposium on Information Theory
  (ISIT)}, pages 1754--1758, 2018.

\bibitem{Watrous:2018aa}
John Watrous.
\newblock {\em The Theory of Quantum Information}.
\newblock Cambridge University Press, 2018.

\end{thebibliography}

\appendix

\section{Proofs for Section~\ref{sec:amp_mtp_main_result_symmetric}} 
\subsection{SE for AMP with non-separable denoisers}
The main ingredient for our results about AMP iterations is a matrix-valued version of the AMP convergence theorem for non-separable denoisers introduced in \cite{berthier2020state}. For deterministic signals, such a matrix version was proved in \cite{gerbelot2021graph} by extending the Long AMP theory of \cite{berthier2020state} to handle matrix-valued recursions. Here, we provide a statement of the result with our notation, and refer to said sources for a proof. 

\begin{remark}
    Unlike \cite{berthier2020state, gerbelot2021graph}, our Theorem~\ref{th:amp_mtp_state_evolution_theorem_asymmetric} is stated in terms of \textit{random} signals $\bX$ and initializations $\bM^0$. Such a modification can be made as long as the assumption on the signal structure \ref{as:mtp_model_amp_signal_assumptions} allows for concentration of the type in \eqref{eq:mtp_asymmetric_signal_initialization_concentration_assumption} and the relevant expectation limits are assumed to be finite. In practice, the proof of the results are essentially unchanged, as they can be carried out conditionally on the sigma-algebra generated by $\bX$ and the conditioning can be subsequently removed via a simple union probability argument by dominated convergence.
\end{remark}

We assume a (sequence of) random matrices $\bG \sim \GOE(n) \in \R^{n \times n}$ independent of a random signal sequence $\bX \in \R^{n \times d}$ satisfying assumption \ref{as:mtp_model_amp_signal_assumptions_symmetric}. Furthermore, we have a sequence of denoisers $\{ f_t : \R^{n \times d} \rightarrow \R^{n \times d} \}_{t \in \N} \subset \Lip(L)$ satisfying \ref{as:mtp_model_amp_denoiser_assumption_symmetric} and an initialization $\bM^0 \in \R^{n \times d}$ such that \ref{as:mtp_model_amp_initialization_assumption_symmetric} holds. Note that parts of the cited assumptions are redundant as the pertain to the spiked setting, but are surely sufficient for an AMP SE theorem when the iterations are acting of pure noise. Throughout this section, for $s < t \in \N \cup \{0\}$ we recursively define the SE matrices as
\begin{align}
    \mSigma^1 &\coloneqq \lim_{n \rightarrow \infty} \frac{1}{n} \E[(\bM^0)^\top \bM^0]; \\
    \mSigma^{t+1} & \coloneqq \lim_{n \rightarrow \infty} \frac{1}{n} \E[f_t(\bZ^t)^\top f_t(\bZ^t)]; \\
    \mSigma^{s+1, t+1} &\coloneqq \lim_{n \rightarrow \infty} \frac{1}{n} \E[f_s(\bZ^s)^\top f_t(\bZ^t)],
\end{align}
where $\bZ^t \sim \Normal(0, \mSigma^t \otimes \Id_n)$ and we identified $f_0 \equiv \bM^0$. The AMP recursion for $t \in \N$ is defined as 
\begin{align} \label{eq:amp_matrix_valued_symmetric_gaussian_noise_recursion}
    \bX^t = \frac{1}{\sqrt{n}}\bG \bM^{t-1} - \bM^{t-2} (\mB^{t-1})^\top ; \quad \bM^t = f_t(\bX^t),
\end{align}
with the correction terms defined entry-wise as
\begin{align} 
    [\mB^t]_{jk} = \frac{1}{n} \sum_{i=1}^n \E\!\left[ \frac{\partial}{\partial Z_{ik}^t} [f_t(\bZ^t)]_{ij} \right].
\end{align}
We have the following theorem.
\begin{theorem}[\cite{gerbelot2021graph}, Theorem 2] \label{th:amp_matrix_valued_symmetric_gaussian_noise}
    Consider the AMP recursion \eqref{eq:amp_matrix_valued_symmetric_gaussian_noise_recursion} with all the above definitions and assumptions \ref{as:mtp_model_amp_signal_assumptions_symmetric}-\ref{as:mtp_model_amp_initialization_assumption_symmetric}. Then, for $t \in \N, p \geq 1, L < \infty$ and a sequence of test functions $\{ \phi_n: \R^{n \times (t+1)d} \rightarrow \R^{n \times (t+1)d} \}_{n \in \N} \subset \PL_p(L)$, it holds
    \begin{align}
        \plim_{n \rightarrow \infty} \left| \phi_n(\bX, \bX^1, \dotsc, \bX^t) - \E[\phi(\bX, \bZ^1, \dotsc, \bZ^t)] \right| = 0.
    \end{align}
\end{theorem}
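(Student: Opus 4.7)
The plan is to follow the conditioning technique pioneered by Bolthausen and refined by Bayati and Montanari in the separable case, extended to non-separable Lipschitz denoisers by Berthier, Montanari and Nguyen, and then lifted to matrix-valued iterates. The overall strategy is a strong induction on the iteration index $t$: assuming that for every $s \leq t$, every $p \geq 1$, $L < \infty$, and every sequence $\{\phi_n\} \subset \PL_p(L)$ one has
\begin{align*}
    \plim_{n \to \infty} \bigl| \phi_n(\bX, \bX^1, \dotsc, \bX^s) - \E[\phi_n(\bX, \bZ^1, \dotsc, \bZ^s)] \bigr| = 0,
\end{align*}
I would establish the analogous statement at time $t+1$. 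The base case reduces to verifying that $\bX^1 = n^{-1/2} \bG \bM^0$ concentrates around a matrix distributed as $\Normal(0, \mSigma^1 \otimes \Id_n)$ in the pseudo-Lipschitz sense; this follows from standard Gaussian concentration for linear and bilinear forms in $\bG$ together with assumption \ref{as:mtp_model_amp_initialization_assumption_symmetric}.

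For the inductive step, the central tool is Gaussian conditioning. Let $\mathcal{F}_t$ be the sigma-algebra generated by $\bX$, $\bM^0$, and $\bX^1, \dotsc, \bX^t$. Since $\bG$ is a GOE, its conditional distribution given its action on the vectors $\{\bM^s\}_{s \leq t-1}$ is a deterministic projection onto that span plus an independent fresh GOE acting on the orthogonal complement. Applying this representation to $\bG \bM^t$ yields a decomposition into (i) a linear combination of past iterates $\bM^0, \dotsc, \bM^{t-1}$ with coefficients given by overlaps $(\bM^s)^\top \bM^t / n$ that are controlled by the inductive hypothesis, (ii) an Onsager-type term that, by Stein's integration-by-parts identity on the Gaussian entries of $\bG$, matches exactly the subtracted correction $\bM^{t-1}(\mB^t)^\top$ involving the expected divergence matrix, and (iii) a fresh Gaussian contribution whose covariance reproduces $\mSigma^{t+1}$ once one invokes the concentration of $n^{-1} f_t(\bX^t)^\top f_t(\bX^t)$, which itself is guaranteed by applying the inductive hypothesis to the uniformly pseudo-Lipschitz map $\bX^t \mapsto n^{-1} f_t(\bX^t)^\top f_t(\bX^t)$.

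The main obstacle, and the reason the non-separable matrix-valued case requires genuinely new ideas beyond the classical Bayati--Montanari argument, is the interaction between non-separability of $f_t$ and the matrix-valued structure. Uniform Lipschitzness of $f_t : \R^{n \times d} \to \R^{n \times d}$ in assumption \ref{as:mtp_model_amp_denoiser_assumption_symmetric} must be combined with the Gaussian Poincar\'e inequality to show that quantities such as $n^{-1} f_t(\bH)^\top f_s(\bH')$ for jointly Gaussian $\bH, \bH'$ concentrate around their means uniformly in $n$. A matrix Stein identity of the form
\begin{align*}
    \E\bigl[ \bZ^\top f(\bZ) \bigr] = \mSigma \cdot \E\!\left[ \sum_{i=1}^n \nabla_{Z_i} f(\bZ)_i \right],
\end{align*}
for $\bZ \sim \Normal(0, \mSigma \otimes \Id_n)$ and $f$ uniformly Lipschitz, is what delivers the divergence matrix $\mB^t$ in the conditioning decomposition and is somewhat delicate to state correctly in the matrix-valued setting.

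Finally, to close the induction one notes that for any sequence of test functions $\{\phi_n\} \subset \PL_p(L)$ on $\R^{n \times (t+2)d}$, the conditioning decomposition allows one to write $\phi_n(\bX, \bX^1, \dotsc, \bX^t, \bX^{t+1})$ as $\phi_n$ evaluated at $(\bX, \bX^1, \dotsc, \bX^t, \bar \bX^{t+1} + \bE_n)$, where $\bar \bX^{t+1}$ has the SE Gaussian distribution $\bZ^{t+1}$ conditionally on $\mathcal{F}_t$ and $\bE_n$ is a residual matrix with $\|\bE_n\|_n = o_p(1)$. The pseudo-Lipschitz property then converts this residual bound into the desired concentration of $\phi_n$ around $\E[\phi_n(\bX, \bZ^1, \dotsc, \bZ^{t+1})]$, completing the induction. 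The main technical care needed throughout is ensuring that all Lipschitz and pseudo-Lipschitz constants appearing in intermediate quantities can be bounded uniformly in $n$ in terms of the initial data.
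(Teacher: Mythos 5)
Your proposal re-derives the theorem from scratch, but the paper does not prove this statement at all: it is imported, up to notation, from \cite{gerbelot2021graph} (Theorem 2), building on \cite{berthier2020state}, and the text explicitly refers the reader to those sources for the proof. The only original content the paper adds around this statement is the remark that the deterministic-signal results of those references extend to random $\bX$ and $\bM^0$ by arguing conditionally on the sigma-algebra generated by $\bX$ and then removing the conditioning via the concentration in \ref{as:mtp_model_amp_signal_assumptions_symmetric}. Your sketch follows essentially the same route as the cited works themselves (Bolthausen--Bayati--Montanari Gaussian conditioning, Stein's identity producing the Onsager matrix, induction in $t$, Gaussian Poincar\'e for concentration of non-separable overlaps), so as a roadmap it is sound and is not a genuinely different argument from the one the paper is leaning on --- it is a compressed reconstruction of it.

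If the sketch were intended as a substitute proof, though, several steps are stated too loosely to stand on their own. First, the conditional law of a GOE matrix given the observations $\bG\bM^s$, $s\le t-1$, is not simply ``a projection plus a fresh GOE on the orthogonal complement'': the conditional mean has a specific symmetrized form built from the observed products, and the fresh GOE enters compressed on both sides by the orthogonal projection; the exact cancellation that leaves only $\bM^{t-1}(\mB^t)^\top$ requires tracking these cross terms. Second, the induction hypothesis as you display it (concentration of $\PL_p$ test functions of the iterates) must in practice be strengthened to joint control, uniformly in $n$, of all overlaps $(\bM^r)^\top\bM^s/n$, of the empirical divergence matrices, and of the invertibility (or pseudo-inversion) of the Gram matrices used in the conditioning step; this bookkeeping is the bulk of the work in \cite{berthier2020state,gerbelot2021graph}. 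Third, your argument treats $\bX$ and $\bM^0$ as if they were deterministic and never uses \ref{as:mtp_model_amp_signal_assumptions_symmetric} to pass from conditional to unconditional statements, which is exactly the gap the paper's remark addresses. None of this is an error of approach, but reproving these points is beyond what the paper does --- its ``proof'' of this statement is the citation.
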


\subsection{AMP in the Spiked Matrix Setting} \label{sec:spiked_matrix_amp_proof}
Thus far, we have stated our AMP convergence result for algorithm \eqref{eq:amp_matrix_valued_symmetric_gaussian_noise_recursion}, which acts directly on a GOE matrix. In the MTP, however, the views $\{\bY_k\}_{k \in [K]}$ are seen as low-rank spiked matrix models, in which the GOE component is the additive Gaussian noise. In this section, we show via a standard shifting argument that under assumptions \ref{as:mtp_model_amp_signal_assumptions_symmetric}-\ref{as:mtp_model_amp_initialization_assumption_symmetric} the AMP iterates obtained from a spiked matrix model in GOE noise behave as Gaussian-noise corrupted versions of the original signal. Let us assume without loss of generality (as will be shown later) that $K=1$, so that the observation model is described by
\begin{align} \label{eq:spiked_matrix_symmetric_definition}
    \bY = \frac{1}{n} \bX \mLambda \bX + \frac{1}{\sqrt{n}}\bG,
\end{align}
where $\bG \sim \GOE(n)$ and $\mLambda \in \Sym^d$. For an initialization $\bM^0$ and a sequence of uniformly Lipschitz denoisers $\{f_t : \R^{n \times d} \rightarrow \R^{n \times d}\} \subset \PL(L)$, $L < \infty$, we write the AMP algorithm as
\begin{align} \label{eq:amp_spiked_matrix_symmetric_recursion}
    \bX^t &= \bY \bM^{t-1} - \bM^{t-2}(\mB^{t-1})^\top; \quad \bM^t = f_t(\bX^t).
\end{align}
Under the assumption that \ref{as:mtp_model_amp_signal_assumptions_symmetric}-\ref{as:mtp_model_amp_initialization_assumption_symmetric} hold, we define the SE parameters recursively as
\begin{align}
    \mK^1 &= \lim_{n \rightarrow \infty} \frac{1}{n} \mLambda\E[\bX^\top\bM^0]; \quad 
    \mSigma^1 = \lim_{n \rightarrow \infty} \frac{1}{n} \E[(\bM^0)^\top \bM^0] 
\end{align}
and then for $t \in \N \cup \{0\}$ we define $\bH^t \coloneqq \bX \mK^t + \bZ^t$, $\bZ^t \sim \Normal(0, \mSigma \otimes \Id_n)$ and have
\begin{align}
    \mK^{t+1} &= \lim_{n \rightarrow \infty} \frac{1}{n} \E\!\left[ \mLambda\bX^\top f_t( \bH^t) \right]; \quad 
    \mSigma^{t+1} = \lim_{n \rightarrow \infty} \frac{1}{n} \E\!\left[ f_t(\bH^t)^\top f_t(\bH^t) \right].
\end{align}
where $\mZ^{t-1} \sim \Normal\!\left( 0, \mSigma^{t-1} \otimes \Id_n \right)$. Finally, letting $f_0 \equiv \bM^0$, we have that for $s < t \in \N \cup \{0\}$ the matrices $\bZ^{s+1}, \bZ^{t+1}$ are jointly Gaussian and
\begin{align}
\frac{1}{n} \E\!\left[ (\bZ^{s+1})^\top \bZ^{t+1} \right] = \lim_{n \rightarrow \infty} \frac{1}{n} \E\!\left[ f_s(\bH^s)^\top f_t(\bH^t) \right] \eqqcolon \mSigma^{s+1, t+1}.
\end{align}
The correction terms $\mB^{t}$ are defined entry-wise as 
\begin{align}
    [\mB^{t}]_{jk} = \frac{1}{n} \sum_{i=1}^n \E\!\left[ \frac{\partial}{\partial H_{ik}^t} [f_t(\bH^t)]_{ij} \right].
\end{align}
The AMP SE theorem for the spiked matrix model is as follows.
\begin{theorem} \label{th:amp_spiked_matrix_symmetric_theorem}
    Consider the spiked matrix \eqref{eq:spiked_matrix_symmetric_definition} model under assumptions \ref{as:mtp_model_amp_signal_assumptions_symmetric}-\ref{as:mtp_model_amp_initialization_assumption_symmetric}, and the AMP algorithm given in \eqref{eq:amp_spiked_matrix_symmetric_recursion}. Then, for $t \in \N, p \geq 1, L < \infty$ and any sequence of test functions $\{ \phi_n : \R^{n \times (t+1)d} \rightarrow \R \}_{n \in \N} \subset \PL_p(L)$, it holds
    \begin{align}
        \plim_{n \rightarrow \infty} \left| \phi_n\!\left( \bX, \bX^1, \dotsc, \bX^t \right) - \E\!\left[\phi_n\!\left( \bX, \bH^1, \dotsc, \bH^t \right)\right] \right| = 0.
    \end{align}
\end{theorem}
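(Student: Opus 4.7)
The plan is to reduce Theorem~\ref{th:amp_spiked_matrix_symmetric_theorem} to Theorem~\ref{th:amp_matrix_valued_symmetric_gaussian_noise} via a ``conditioning'' / shift argument that is standard for rank-$d$ spiked matrix models. The idea is to introduce an auxiliary AMP recursion that acts on the pure GOE noise $\bG$ alone, with denoisers translated by the signal component predicted by state evolution, and then to show by induction that the true iterates agree (in $\|\cdot\|_n$) with a signal-shifted version of the auxiliary iterates up to an $o_p(1)$ error.

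Concretely, conditional on $\bX$, define the shifted denoisers $\tilde{f}_t(y) := f_t(y + \bX \mK^t)$, which inherit uniform Lipschitzness from $f_t$. Consider the recursion
\begin{align}
    \tilde{\bX}^{t} = \tfrac{1}{\sqrt{n}}\bG \tilde{\bM}^{t-1} - \tilde{\bM}^{t-2}(\tilde{\mB}^{t-1})^\top, \qquad \tilde{\bM}^t = \tilde{f}_t(\tilde{\bX}^t),
\end{align}
initialized with $\tilde{\bM}^0 = \bM^0$ and with correction terms $\tilde{\mB}^t$ computed from the divergences of $\tilde{f}_t$. A direct comparison of the SE recursions shows that the Gaussian SE covariances $\tilde{\mSigma}^{s,t}$ produced by Theorem~\ref{th:amp_matrix_valued_symmetric_gaussian_noise} agree exactly with the $\mSigma^{s,t}$ defined before the theorem, essentially because $\partial_y \tilde{f}_t(\tilde{\bZ}^t) = \partial_y f_t(\bX\mK^t + \tilde{\bZ}^t)$ and $\bX \mK^t + \tilde{\bZ}^t \stackrel{d}{=} \bH^t$. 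Applying Theorem~\ref{th:amp_matrix_valued_symmetric_gaussian_noise} to $\{\tilde{\bX}^t\}$ then yields asymptotic control of all pseudo-Lipschitz test functions of $(\bX, \tilde{\bX}^1,\dotsc,\tilde{\bX}^t)$ in terms of expectations over $(\bX, \tilde{\bZ}^1,\dotsc,\tilde{\bZ}^t)$.

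The heart of the proof is an induction on $t$ showing that
\begin{align}
    \| \bX^t - (\tilde{\bX}^t + \bX \mK^t) \|_n \xrightarrow{\mathrm{pr}} 0.
\end{align}
At step $t$, expanding $\bY = \frac{1}{n}\bX\mLambda\bX^\top + \frac{1}{\sqrt{n}}\bG$ gives
\begin{align}
    \bX^t - (\tilde{\bX}^t + \bX\mK^t) = \tfrac{1}{\sqrt{n}}\bG(\bM^{t-1}-\tilde{\bM}^{t-1}) - (\bM^{t-2}-\tilde{\bM}^{t-2})(\mB^{t-1})^\top + \bX\mLambda\bigl(\tfrac{1}{n}\bX^\top\bM^{t-1} - \mF^t\bigr),
\end{align}
where $\mF^t$ is defined so that $\mK^t = \mLambda\mF^t$ and where I use that $\mB^{t-1} = \tilde{\mB}^{t-1}$. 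The first two terms are controlled using the Lipschitz property of $f_{t-1}$, the inductive hypothesis, and the standard bound $\|\bG\|_{\op} = O_p(\sqrt{n})$. The third term is handled by applying the conclusion of Theorem~\ref{th:amp_matrix_valued_symmetric_gaussian_noise} to the pseudo-Lipschitz function $(\bX, \tilde{\bX}^{t-1}) \mapsto \frac{1}{n}\bX^\top f_{t-1}(\tilde{\bX}^{t-1} + \bX\mK^{t-1})$, which concentrates around $\mF^t$ by definition; combined with the Lipschitz closeness $\|\bM^{t-1}-\tilde{\bM}^{t-1}\|_n \parrow 0$, this controls $\frac{1}{n}\bX^\top \bM^{t-1} - \mF^t$ and, when multiplied by the bounded-Frobenius matrix $\bX\mLambda$, yields the required $o_p(1)$ bound in $\|\cdot\|_n$ (using $\|\bX\|_F = O_p(\sqrt{n})$ from assumption \ref{as:mtp_model_amp_signal_assumptions_symmetric}).

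Finally, once the tracking is established, any $\phi_n \in \PL_p(L)$ applied to $(\bX,\bX^1,\dotsc,\bX^t)$ differs from its value at $(\bX,\tilde{\bX}^1+\bX\mK^1,\dotsc,\tilde{\bX}^t+\bX\mK^t)$ by at most $L\,o_p(1) \cdot (1 + \text{polynomial in } O_p(1) \text{ norms})$, which is $o_p(1)$; and the latter equals $\phi_n(\bX, \bH^1,\dotsc,\bH^t)$ in distribution by the match between the SE parameters. The main obstacle is the careful bookkeeping in the inductive step, in particular verifying that the correction terms $\mB^t$ and $\tilde{\mB}^t$ coincide and that the low-rank bias $\frac{1}{n}\bX\mLambda\bX^\top\bM^{t-1}$ is close to $\bX\mK^t$ in $\|\cdot\|_n$ despite the potentially large operator norm of $\bX$; this is exactly where assumption \ref{as:mtp_model_amp_signal_assumptions_symmetric} (bounded row-averaged second moment of $\bX$) is essential.
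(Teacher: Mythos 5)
Your proposal is correct and follows essentially the same route as the paper: you introduce the signal-shifted auxiliary AMP driven by $\bG$ alone (the paper calls this the ``oracle'' recursion with denoisers $g_t(\cdot) = f_t(\bX\mK^t + \cdot)$), match the SE covariances and correction terms, apply Theorem~\ref{th:amp_matrix_valued_symmetric_gaussian_noise}, and then establish $\|\bX^t - (\tilde{\bX}^t + \bX\mK^t)\|_n \parrow 0$ by induction using the three-term decomposition, the GOE operator-norm bound, uniform Lipschitzness, and the pseudo-Lipschitz transfer at the end. The only differences are cosmetic (e.g.\ factoring $\mLambda$ out of $\hat{\mK}_n^t$ and naming the limit $\mF^t$), so the argument is the same.
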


\begin{proof}
    We prove the theorem via a comparison with an ``oracle" algorithm that acts on the noise component $\bG \sim \GOE(n)$ of \eqref{eq:spiked_matrix_symmetric_definition}, to which Theorem~\ref{th:amp_matrix_valued_symmetric_gaussian_noise} applies. In particular, for $t \in \N$ define the (random) denoisers $\{g_t : \R^{n \times d} \rightarrow \R^{n \times d}\}_{n \in \N}$ based on the original denoiser sequence $\{f_t : \R^{n \times d} \rightarrow \R^{n \times d}\}_{n \in \N}$ as
    \begin{align}
        g_t(\mY) = f_t( \bX \mK^t + \mY), 
    \end{align}
    and the oracle AMP recursion matched to the iterations in \eqref{eq:amp_spiked_matrix_symmetric_recursion} is
    \begin{align} \label{eq:spiked_model_amp_oracle}
    \bX_O^t &= \frac{1}{\sqrt{n}}\bG \bM_O^{t-1} - \bM_O^{t-2}(\mB_O^{t-1})^\top ; \quad
    \bM_O^t = g_t(\bX_O^t),
    \end{align}
    with $\bM_O^0 = \bM^0$. It is immediate to see that by construction the SE parameters associated with this second recursion are exactly $\mK^t$ and $\mSigma^t$ for all $t \in \N$, and the same holds for the cross-covariance terms $\mSigma^{s,t}$. We write the limiting Gaussian distributions associated with $\bX_O^t$ in \eqref{eq:spiked_model_amp_oracle} as $\bZ^t \sim \Normal\!\left( 0, \mSigma^t \otimes \Id_n \right)$, $\E[(\bZ^s)^\top \bZ^t]/n = \mSigma^{s,t}$, and use the correction terms defined element-wise as
    \begin{align}
        [\mB_O^{t}]_{jk} = \frac{1}{n} \sum_{i=1}^n \E\!\left[ \frac{\partial}{\partial Z_{ik}^t} [g_t(\bZ^t)]_{ij} \right],
    \end{align}
    where the expectation is taken also with respect to $\bX$. Writing $\bH^t = \bX \mK^t + \bZ^t$, it holds by definition that $\mB^{t} = \mB_O^{t}$ for all $t \in \N$. Then, Theorem~\ref{th:amp_matrix_valued_symmetric_gaussian_noise} yields that, for any sequence $\{ \phi_n : \R^{n \times (t+1)d} \rightarrow \R \}_{n \in \N} \subset \PL_p(L)$, it holds
    \begin{align}
        \left| \phi_n\!\left( \bX, \bX_O^1, \dotsc, \bX_O^t \right) - \E\!\left[ \phi_n\!\left( \bX, \bZ^1,\dotsc, \bZ^t \right) \right] \right| \parrow 0,
    \end{align}
    which in turn implies
    \begin{align} \label{eq:spiked_model_shifted_recursion_convergence}
        \left| \phi_n\!\left( \bX, \bX \mK_1 + \bX_O^1, \dotsc, \bX \mK^t + \bX_O^t \right) - \E\!\left[ \phi_n\!\left( \bX, \bH^1, \dotsc, \bH^t \right) \right] \right| \parrow 0.
    \end{align}
    We are left to verify that as $n \rightarrow \infty$ the shifted recursion approximates well the original AMP iterates $\bX_t$ in \eqref{eq:amp_spiked_matrix_symmetric_recursion} for any sequence of uniformly pseudo-Lipschitz test functions, i.e.
    \begin{align} \label{eq:spiked_model_AMP_to_shift_discrepancy}
        \left| \phi_n\!\left( \bX, \bX^1, \dotsc, \bX^t \right) - \phi_n\!\left( \bX, \bX \mK_1 + \bX_O^1, \dotsc, \bX \mK^t + \bX_O^t \right) \right| \parrow 0,
    \end{align}
    for which it suffices to show that
    \begin{align} \label{eq:spiked_model_discrepancy_norm}
        \left\| \left( \bX, \bX^1, \dotsc, \bX^t\right) - \left( \bX, \bX \mK_1 + \bX_O^1, \dotsc, \bX \mK^t + \bX_O^t \right) \right\|_n \parrow 0
    \end{align}
    because of Lemma~\ref{lem:pseudo_lipschitz_convergence_of_argument_convergence_of_function}. We show the claim \eqref{eq:spiked_model_AMP_to_shift_discrepancy} holds by induction. For the base case $t=1$, we simply note that
    \begin{align}
        \bX^1 = \frac{1}{n} \bX \mLambda \bX^\top \bM^0 + \bX_O^1.
    \end{align}
    From \ref{as:mtp_model_amp_initialization_assumption_symmetric} we have that $(\mLambda \bX^\top \bM^0)/n \parrow \mK^1$ and therefore 
    \begin{align}
        \| \bX_1 - (\bX\mK^1 + \bX_O^1) \|_n &= \| \bX [(\mLambda \bX^\top \bM^0)/n - \mK^1] \|_n \\ 
        &\leq \| \bX \|_n \| (\mLambda \bX^\top \bM^0)/n - \mK^1 \|_{\mathrm{op}} \parrow 0,
    \end{align}
    since $\| \bX \|_n$ has a finite in-probability limit by \ref{as:mtp_model_amp_signal_assumptions_symmetric}. Let us now assume by way of induction that \eqref{eq:spiked_model_discrepancy_norm}, and hence \eqref{eq:spiked_model_AMP_to_shift_discrepancy}, holds up until iteration $t-1$. Then, at iteration $t$, we use the uniformly pseudo-Lipschitz property to write
    \begin{align}
        &\left| \phi_n\!\left( \bX, \bX^1, \dotsc, \bX^t \right) - \phi_n\!\left( \bX, \bX\mK^1 + \bX_O^1, \dotsc, \bX\mK^t + \bX_O^t \right) \right| \\
        &\quad \leq L \left\| \left( \bX^1 - (\bX \mK^1 + \bX_O^1), \dotsc, \bX^t - (\bX \mK^t + \bX_O^t) \right)  \right\|_{n} \nonumber \\
        &\qquad \times \left( 1 + \|  (\bX, \bX^1, \dotsc, \bX^t) \|_{n}^{p-1} + \| (\bX, \bX\mK^1 + \bX_O^1, \dotsc, \bX\mK^t + \bX^t) \|_{n}^{p-1} \right) \\
        &\quad \leq L \max\{1, 2^{(p-3)/2}\}\left( \| \left( \bX^1 - (\bX \mK^1 + \bX_O^1), \dotsc, \bX^{t-1} - (\bX \mK^t + \bX_O^{t-1}) \right) \|_{n} + \| \bX^t - (\bX \mK^t + \bX_O^t) \|_{n} \right) \nonumber \\
        &\qquad \times \left[ 1 + \| (\bX, \bX^1, \dotsc, \bX^{t-1}) \|_{n}^{p-1} + \| (\bX, \bX\mK^1 + \bX_O^1, \dotsc, \bX\mK^t + \bX^{t-1}) \|_{n}^{p-1} \right. \nonumber \\ 
        &\qquad \quad \left. + \|\bX^t\|_{n}^{p-1} + \|\bX\mK^t + \bX_O^t\|_{n}^{p-1} \right].
    \end{align}
    From the inductive hypothesis, we have
    \begin{align}
        \left\| \left( \bX^1 - (\bX \mK^1 + \bX_O^1), \dotsc, \bX^{t-1} - (\bX \mK^t + \bX_O^{t-1}) \right) \right\|_{n} \parrow 0,
    \end{align}
    and furthermore the inductive hypothesis on \eqref{eq:spiked_model_shifted_recursion_convergence}-\eqref{eq:spiked_model_discrepancy_norm} and Theorem~\ref{th:amp_matrix_valued_symmetric_gaussian_noise} applied to $\bX_O^t$ ensure that the quantities
    \begin{align}
        \| (\bX, \bX^1, \dotsc, \bX^{t-1}) \|_{n}^{p-1}; \quad  \|(\bX, \bX\mK^1 + \bX_O^1, \dotsc, \bX\mK^t + \bX^{t-1}) \|_{n}^{p-1}; \quad \|\bX\mK^t + \bX_O^t\|_{n}^{p-1}.
    \end{align}
    have well-defined and finite in-probability limits. Therefore, proving \eqref{eq:spiked_model_AMP_to_shift_discrepancy} reduces to showing that 
    \begin{align}
        \| \bX^t - (\bX \mK^t + \bX_O^t) \|_{n} \parrow 0.
    \end{align}
    By inspecting the AMP recursions that generate both $\bX^t$ and $\bX_O^t$, we have 
    \begin{align}
        \bX^t &= \frac{1}{n} \bX \mLambda \bX^\top f_{t-1}(\bX^{t-1}) + \frac{1}{\sqrt{n}}\bG f_{t-1}(\bX^{t-1}) - f_{t-2}(\bX^{t-2})(\mB^{t-1})^\top, \\
        \bX_O^t &= \frac{1}{\sqrt{n}}\bG g_{t-1}(\bX_O^{t-1}) - g_{t-2}(\bX_O^{t-2})(\mB_O^{t-1})^\top.
    \end{align}
    Letting $\Hat{\mK}_n^t \coloneqq \left(\mLambda \bX^\top f_{t-1}(\bX^{t-1})\right) / n$ and substituting $g_{t-1}$ and $g_{t-2}$ with the respective definitions, we obtain
    \begin{align}
         \left\| \bX^t - (\bX \mK^t + \bX_O^t) \right\|_{n} &= \left\| \bX ( \Hat{\mK}_n^t - \mK^t ) + n^{-1/2} \bG\left( f_{t-1}(\bX^{t-1}) - f_{t-1}( \bX\mK^{t-1} + \bX_O^{t-1}) \right) \right. \nonumber \\
         & \left. \qquad - \left( f_{t-2}(\bX^{t-2}) - f_{t-2}(\bX\mK^{t-2} + \bX_O^{t-2}) \right) (\mB^{t-1} )^\top \right\|_{n} \\
         &\leq \left\| \bX ( \Hat{\mK}_n^t - \mK^t ) \right\|_{n} + \left\| n^{-1/2} \bG\left( f_{t-1}(\bX^{t-1}) - f_{t-1}( \bX\mK^{t-1} + \mX_O^{t-1}) \right) \right\|_{n} \nonumber\\
         & \qquad + \left\| \left( f_{t-2}(\bX^{t-2}) - f_{t-2}(\bX\mK^{t-2} + \bX_O^{t-2}) \right) (\mB^{t-1} )^\top \right\|_{n}.
    \end{align}
    From the inductive hypothesis, we have that $\plim_{n \rightarrow \infty} \Hat{\mK}_n^t = \mK^t$ and $K^t$ is a finite-dimensional object, so
    \begin{align}
        \left\| \bX ( \Hat{\mK}_n^t - \mK^t ) \right\|_{n} \leq \| \bX \|_{n} \| \Hat{\mK}_n^t - \mK^t\|_{\mathrm{op}} \parrow 0
    \end{align}
    since $\| \mX \|_{n}$ has a finite in-probability limit by \ref{as:mtp_model_amp_signal_assumptions_symmetric}. For the second term, we use Theorem~\ref{th:largest_eigenvalue_goe} and the uniform Lipschitz assumption on the denoisers $f_{t-1}$:
    \begin{align}
        \left\| n^{-1/2} \bG \left( f_{t-1}(\bX^{t-1}) - f_{t-1}( \bX\mK^{t-1} + \bX_O^{t-1}) \right) \right\|_{n} &\leq L \|n^{-1/2} \bG\|_{\mathrm{op}} \| \bX^{t-1} - ( \bX\mK^{t-1} + \bX_O^{t-1}) \|_n \\
        &\leq L (B + o_p(1)) \| \bX^{t-1} - ( \bX\mK^{t-1} + \bX_O^{t-1}) \|_n \parrow 0,
    \end{align}
    for $B<\infty$, with convergence following from the inductive hypothesis.
    Finally, since $\mB^{t-1}$ is a finite-dimensional matrix with bounded entries (as an expected average of Jacobians of Lipschitz functions), the uniformly Lipschitz assumption on $f_{t-2}$ and the inductive hypothesis imply that
    \begin{align}
        \left\| \left( f_{t-2}(\bX^{t-2}) - f_{t-2}(\bX\mK^{t-2} + \bX_O^{t-2}) \right) (\mB^{t-1} )^\top \right\|_{n} \leq L \|\mB^{t-1}\|_\mathrm{op} \left\| \bX^{t-2} - \bX\mK^{t-2} + \bX_O^{t-1} \right\|_{n} \parrow 0.
    \end{align}
    Thus, we conclude that 
    \begin{align}
        \| \bX^t - (\bX \mK^t + \bX_O^t) \|_{n} \parrow 0
    \end{align}
    and the induction step is complete.
\end{proof}

\subsection{Proof of Theorem~\ref{th:amp_mtp_symmetric_main_result}} \label{sec:mtp_amp_symmetric_proof}
Finally, we provide a proof of Theorem~\ref{th:amp_mtp_symmetric_main_result}. We prove our main result by embedding the symmetric MTP model \eqref{eq:MTP_mv_symmetric_def} into a single-view, block-diagonal spiked matrix model of the same form as \eqref{eq:spiked_matrix_symmetric_definition}. We run our AMP algorithm \eqref{eq:amp_spiked_matrix_symmetric_recursion} on it with a special choice of denoiser constructed from the intended denoiser sequence for the MTP, which as usual we denote $\{f_t: \R^{n \times d} \rightarrow \R^{n \times d} \}_{t \in \N} \subset \Lip(L)$, for some $L< \infty$. We restate the model \eqref{eq:MTP_mv_symmetric_def} here for the reader's convenience. For $k \in [K]$, we observe
\begin{align}
    \bY_k = \frac{1}{n} \bX \mLambda_k \bX^\top + \frac{1}{\sqrt{n}} \bG_k, 
\end{align}
for $\bG_k \iid \GOE(n)$ and $\mLambda_k \in \Sym^d$. Furthermore, we assume the signal $\bX$, the initialization $\bM^0$ and the denoiser sequences $\{f_t\}_{t \in \N}$ all satisfy \ref{as:mtp_model_amp_signal_assumptions_symmetric}-\ref{as:mtp_model_amp_initialization_assumption_symmetric}. We also specify some rewighting matrices $\{ \mA_k^t \in \R^{d \times d} \}_{k \in [K], t \in \N}$.

The symmetric AMP iterations $\{ \bX^t \mid \bM^0, \mA_k^t, f_t \}_{t \in \N}$ are constructed for $t\in \N$ as
\begin{align} 
    \bX^{t} &= \sum_{k=1}^K \bY_k \bM^{t-1} (\mA_k^{t})^\top - \bM^{t-2} (\mB^{t-1})^\top;  \quad
    \bM^{t} = f_{t}(\bX^{t}),
\end{align}
and the correction terms are defined, together with the SE matrices, as in Section~\ref{sec:amp_mtp_main_result_symmetric}. 

We now begin with the embedding. From now on, we will write $N \coloneqq nK$ and $D \coloneqq dK$. Furthermore, we make the following definitions:
\begin{alignat}{2}
    \tX &\coloneqq \Id_K \otimes \mX && \quad \in \R^{N \times D}; \\
    \tLambda &\coloneqq \begin{bmatrix}
        \sqrt{K}\mLambda_1 & & & \\
        & \sqrt{K}\mLambda_2 & & \\
        & & \ddots & \\
        & & & \sqrt{K}\mLambda_K
    \end{bmatrix} && \quad \in \R^{D \times D},
\end{alignat}
and define the lifted observation model as
\begin{align}
    \tY = \frac{1}{N} \tX \tLambda \tX^\top + \frac{1}{\sqrt{N}}\tG \quad \in \R^{N \times N},
\end{align}
with $\tG \sim \GOE(N)$. For convenience, we establish come notation to keep track of the block structure in this lifted model. For $\tY \in \R^{N \times N}$ and $k \in [K]$, we define the $k$-th block $\tY_{(k)} \in \R^{n \times d}$ of $\tY$ as
\begin{align}
    \tY_{(k)} = (\tY_{ij}), \quad i \in \{(n(k-1) + 1, nk\}, \ j \in \{ n(k-1) + 1, nk \},
\end{align}
and similarly for matrices $\tX \in \R^{N \times D}$ we write $\tX_{(k)} \in \R^{n \times d}$ to denote the respective block-diagonal element of $\tX$:
\begin{align}
    \tX_{(k)} = (\tX_{ij}), \quad i \in \{(n(k-1) + 1, nk\}, \ j \in \{ d(k-1) + 1, dk \}.
\end{align}
As a remark, we have that $\sqrt{K}\tY_{(k)} = \bY_k$, and no information about $\bX$ is contained in off-diagonal blocks of $\tY$, so that the collection $\{\bY_k \in \R^{n \times n}\}_{k \in [K]}$ and $\tY$ are observationally equivalent. Let us define the initialization $\tM^0 \coloneqq \Id_K \otimes \bM^0$ and, for $t \in \N$, we  construct the denoisers $\{F_t : \R^{N \times D} \rightarrow \R^{N \times D}\}_{N \in \N}$ as
\begin{align}
    F_t(\tX) \coloneqq \Id_K \otimes f_t\!\left( \sum_{k=1}^K \tX_{(k)} (\mA_k^t)^\top \right),
\end{align}
and for convenience we write the linear transformation
\begin{align}
    \tX \mapsto \sum_{k=1}^K \tX_{(k)}(\mA_k^t)^\top
\end{align}
as $T_t(\tX)$ and denote $\bX^t = T_t(\tX^t)$. 

It is easily verified that if each $f_t \subset \Lip(L)$ then necessarily the corresponding $F_t \subset \Lip(L')$ as well for some $L' < \infty$, and we can describe an AMP algorithm acting onto $\tY$ as a valid single-view spiked matrix model as
\begin{align} \label{eq:spiked_model_embed_amp}
    \tX^t &= \tY \tM^{t-1} - \tM^{t-2}(\tB^{t-1})^\top; \quad
    \tM^t = F_t(\tX^t).
\end{align}

Theorem~\ref{th:amp_spiked_matrix_symmetric_theorem} then provides the SE characterization for the above algorithm. We have 
\begin{align}
    \tK^{t+1} &= \lim_{N \rightarrow \infty} \frac{1}{N} \E\!\left[ \tLambda\tX^\top F_{t}( \tH^t ) \right]; \\
    \tSigma^{t+1} &= \lim_{N \rightarrow \infty} \frac{1}{N} \E\!\left[ F_{t}( \tH^t )^\top F_{t}( \tH^t ) \right]; \\
    \tSigma^{s+1,t+1} &= \lim_{N \rightarrow \infty} \frac{1}{N} \E\!\left[ F_{s}( \tH^s )^\top F_{t}( \tH^t ) \right],
\end{align}
for $\tH^t \coloneqq \tX \tK^t + \tZ^t$, with $\tZ^t \sim \Normal(0, \tSigma^t \otimes \Id_N)$, with $\E[(\tZ^s)^\top \tZ^t]/n = \tSigma^{s,t}$. Again we identified $F_0 \equiv \tM^0$. 

We note that each $\tK^t, \tSigma^t, \tSigma^{s,t}$ inherit the Kronecker structure of the denoiser and signal, and are thus block-diagonal matrices. We observe that each block $\tZ^t_{(k)}$ is independent of other blocks in the same matrix $\tZ^t$. Furthermore, $\mSigma^{s,t}$ being block-diagonal for all $s<t$ implies that $\E[(\tZ^s_{(k)})^\top \tZ^s_{(k')}] / n = \tSigma_{(k)}^{s,t}$ if $k = k'$ and is zero otherwise, meaning that the rows of the $k$-th block in the limiting noise matrix for the iterates only correlates with entries in the same block at other iterations.

We now look at the structure in the correction terms $\tB^{t}$. Since both input and output of $F_{t}$ are block-diagonal, we can decompose $\tB^t$ into blocks as well, i.e. $\tB^t = \sum_{k \in [K]} \me_k \me_k^\top \otimes \tB_{(k)}^t$, where $\me_k$ is the $k$-th standard basis vector of $\R^K$ and $\tB_{(k)}^t$ is defined element-wise for $j,l \in [d]$ as
\begin{align}
    [\tB_{(k)}^t]_{jl} = \frac{1}{N} \sum_{i=1}^N \E\!\left[ \frac{\partial}{\partial [\tH_{(k)}^t]_{il}} [f_t(T_t(\tH^t))]_{ij} \right].
\end{align}
Using the chain rule and the defintion of $T_t$, we can rewrite
\begin{align}
    \frac{\partial}{\partial [\tH_{(k)}^t]_{il}} [f_t(T_t(\tH^t))]_{ij} &= \left\langle \frac{\partial [f_t(T_t(\tH^t))]_{ij}}{\partial [T_t(\tH^t)]_{i}}, \frac{\partial [T_t(\tH^t)]_{i}}{\partial [\tH_{(k)}^t]_{il} } \right\rangle \\
    & = \left\langle \frac{\partial [f_t(\bH^t)]_{ij}}{\partial [\bZ^t]_{i}}, [\mA_k^t]_{\bullet l} \right\rangle,
\end{align}
where we defined $\bZ^t \coloneqq T_t(\tZ^t), \bH^t \coloneqq T_t(\tH^t)$ and $[\mA_k]_{\bullet l}$ is the $l$-th column vector of $\bA_k^t$. Putting the above in matrix form and recalling the definition of the average expected divergence from \eqref{eq:amp_symmetric_expected_mean_divergence_definition}, we have
\begin{align}
    \tB^t = \frac{1}{K} \sum_{k=1}^K \left\{ \me_k\me_k^\top \otimes \mD^t \mA_k^t \right\},
\end{align}
and applying $T_{t+1}$ to the transpose of the diagonal blocks $(\tB^t_{(k)})^\top$ in $\tB^t$ recovers $(\mB^t)^\top$ of \eqref{eq:amp_mtp_correction_term_symmetric_definition} up to a scaling factor of $K^{-1}$.

With this, we are done relating the lifted recursion $\{ \tX^t \mid \tM^0, F_t\}_{t \in \N}$ to our intended recursion of Theorem~\ref{th:amp_mtp_symmetric_main_result}. Indeed, from the relation $\bX^t = T_t(\tX^t)$ we apply $T_t$ on both sides of \eqref{eq:spiked_model_embed_amp} and the AMP algorithm can then be expressed as
    \begin{align}
    \bX^t &= \frac{1}{\sqrt{K}}\sum_{k=1}^K \bY_k f_{t-1} (\bX^{t-1}) (\mA_k^t)^\top - \frac{1}{K} f_{t-2}(\bX^{t-2}) (\mB^{t-1})^\top, 
\end{align}
where we used the fact that $\sqrt{K}\tY_{(k)} = \bY_k$. Relabeling $\sqrt{K}\mA_k^t$ to $\mA_k^t$, the iterations $\{ \bX^t \mid \bM^0, \mA_k^t, f_t \}_{t \in \N}$ are then verified to be an equivalent representation of $\{ \tX^t \mid \tM^0, F_t\}_{t \in \N}$. Finally, we notice that for any $t \in \N, p \geq 1, L < \infty$ any sequence of test functions $\{ \phi : \R^{n \times (t+1)d} \rightarrow \R \}_{n \in \N} \subset \PL_p(L)$ acting onto $\{\bX^t \mid \bM^0, \mA_k^t, f_t\}_{t \in \N}$ can be expressed as an equivalent sequence of test functions $\{ \Phi_N : \R^{N \times (t+1)d} \rightarrow \R \}_{N \in \N} \subset \PL_p(L')$ for some $L' < \infty$ such that
\begin{align}
    \Phi_N(\tX, \tX^1, \dotsc, \tX^t) = \phi_n( \bX, T_1(\tX^1), \dotsc, T_t(\tX^t) ).
\end{align}
This proves all the desired forms of convergence for $\{\bX^t \mid \bM^0, \mA_k^t, f_t\}_{t \in \N}$ via Theorem~\ref{th:amp_spiked_matrix_symmetric_theorem} and shows that the SE quantities associated with $\{\bX^t \mid \bM^0, \mA_k^t, f_t\}_{t \in \N}$ in Theorem~\ref{th:amp_mtp_symmetric_main_result} are correct, thus proving the theorem.

\subsection{Proof of Corollaries~\ref{cor:amp_mtp_symmetric_wasserstein_convergence_result} and \ref{cor:amp_mtp_symmetric_empirical_correction_terms}} \label{sec:symmetric_mtp_corollaries_proof}

\begin{proof}[Proof of Corollary~\ref{cor:amp_mtp_symmetric_wasserstein_convergence_result}]
    Fix any $\epsilon > 0$. For $\phi: \R^{(t+1)d} \rightarrow \R$, we can equivalently write $\phi(X, H^1, \dotsc, H^t)$ as $\varphi(X, Z^1, \dotsc, Z^t)$, for $X \indep (Z^1,\dotsc,Z^t)$ a centered Gaussian vector whose covariance is given by SE. Let $\{\phi_n : \R^{n \times (t+1)d} \rightarrow \infty\}_{n \in \N}$ be given by $\phi_n(\bX) = n^{-1} \sum_{i \in [n]} \phi(\bX_i)$, which is uniformly pseudo-Lipschitz of order 2 by Lemma~\ref{lem:averages_pseudo_lipschitz_function_uniformly_pseudo_Lipschitz}. We have
    \begin{align}
        &\P\!\left[ \left| \phi_n(\bX, \bX^1, \dotsc, \bX^t) - \E\!\left[\phi(X, H^1, \dotsc, H^t)\right] \right| > \epsilon \right] \\
        &\quad \leq \P\!\left[ \left| \phi_n(\bX, \bX^1, \dotsc, \bX^t) - \E\!\left[\phi_n(\bX, \bH^1, \dotsc, \bH^t)\mid \bX \right] \right| \right. \nonumber \\
        &\qquad \left. + \left| \E\!\left[\phi_n(\bX, \bH^1, \dotsc, \bH^t)\mid \bX \right] - \E\!\left[\phi(X, H^1, \dotsc, H^t)\right] \right| > \epsilon \right] \\
        &\quad \leq \P\!\left[ \left| \phi_n(\bX, \bX^1, \dotsc, \bX^t) - \E\!\left[\phi_n(\bX, \bH^1, \dotsc, \bH^t)\mid \bX \right] \right| \geq \epsilon/2 \right] \nonumber \\
        &\qquad + \P\!\left[\left| \E\!\left[\phi_n(\bX, \bH^1, \dotsc, \bH^t)\mid \bX \right] - \E\!\left[\phi(X, H^1, \dotsc, H^t)\right] \right| > \epsilon/2 \right]. 
    \end{align}
    We begin with the second term in the summation. Conditionally on $\bX$, let $Z_i^{[t]} \in \R^{td}$ be a collection of i.i.d. Gaussian vectors independent of $\bX$ whose distribution is the same as $(Z^1,\dotsc,Z^t)$.
    \begin{align}
        \E\!\left[ \phi_n(\bX, \bH^1, \dotsc, \bH^t)\mid \bX \right] = \frac{1}{n} \sum_{i=1}^n \E\!\left[ \varphi(X_i, Z_i^{[t]}) \mid \bX \right].
    \end{align}
    The conditional expectations $\zeta(X_i) \coloneqq \E[ \varphi(X_i, Z_i^{[t]}) \mid \bX ]$ are pseudo-Lipschitz of order 2 by Lemma~\ref{lem:gaussian_expectation_pseudo_lipschitz_is_pseudo_lipschitz}. Then, from the assumption \eqref{eq:mtp_signal_symmetric_wasserstein_limits_assumption} we have that for the second term in the sum vanishes as $n \rightarrow \infty$. For the first term, let $C_n$ denote the event $| \|\bX\|_n - \E\|\bX\|_n | \leq \epsilon/4$, and write
    \begin{align}
        &\P\!\left[ \left| \phi_n(\bX, \bX^1, \dotsc, \bX^t) - \E\!\left[\phi_n(\bX, \bH^1, \dotsc, \bH^t)\mid \bX \right] \right| \geq \epsilon/2 \right] \nonumber \\
        & \quad = \E\!\left[\P\!\left[ \left| \phi_n(\bX, \bX^1, \dotsc, \bX^t) - \E\!\left[\phi_n(\bX, \bH^1, \dotsc, \bH^t)\mid \bX \right] \right| \geq \epsilon/2 \mid \bX \right] \right] \\
        & \leq \E\!\left[\P\!\left[ \left| \phi_n(\bX, \bX^1, \dotsc, \bX^t) - \E\!\left[\phi_n(\bX, \bH^1, \dotsc, \bH^t)\mid \bX \right]  \right| \ind_{C_n} \geq \epsilon/4 \mid \bX \right] \right] \nonumber \\
        &\qquad + \E\!\left[\P\!\left[ \left| \phi_n(\bX, \bX^1, \dotsc, \bX^t) - \E\!\left[\phi_n(\bX, \bH^1, \dotsc, \bH^t)\mid \bX \right]  \right| \ind_{C_n^c} \geq \epsilon/4 \mid \bX \right] \right] \\
        &\leq \E\!\left[ \frac{Q}{n\epsilon^2} \E\left[ \left(1 + \|\bZ^1,\dotsc, \bZ^t\|_n + \E\| \bX \|_n + \epsilon/4 \right)^2\right] \right] + \E\!\left[ \ind_{C_n^c} \right],
    \end{align}
    where the last step follows from Lemma~\ref{lem:gaussian_expectation_pseudo_lipschitz_is_pseudo_lipschitz}. Since by assumption $\E\!\left[ \ind_{C_n^c} \right] \rightarrow 0$ as $n \rightarrow \infty$ and 
    \begin{align}
        \E\left[ \left(1 + \|\bZ^1,\dotsc, \bZ^t\|_n + \E\| \bX \|_n + \epsilon/4 \right)^2\right] = O(1)
    \end{align} 
    by assumption \ref{as:mtp_model_amp_signal_assumptions_symmetric}, the whole term becomes negligible as $n \rightarrow \infty$. Putting everything together, we conclude that
    \begin{align}
        \left| \frac{1}{n}\sum_{i=1}^n \phi(X_i, X_i^1, \dotsc, X_i^t) - \E\!\left[\phi(X, H^1, \dotsc, H^t)\right] \right| \parrow 0.
    \end{align}
\end{proof}

\begin{proof}[Proof of Corollary~\ref{cor:amp_mtp_symmetric_empirical_correction_terms}]
    This corollary is proved via an induction argument. Furthermore, due to the arguments in the previous sections, we assume without loss of generality that we are running AMP on a single-view symmetric model
    \begin{align}
        \bY = \frac{1}{n} \bX \mLambda \bX^\top + \frac{1}{\sqrt{n}} \bG.
    \end{align}
    Let $\{ \bX^t \}_{t \in \N}$ and $\{\hat{\bX}^t\}_{t \in \N}$ denote the AMP iterations obtained with the exact correction terms $\mB^t$ and the consistent estimators $\hat{\mB}^t$, respectively. The two recursions are assumed to be initialized with the same matrix $\bM^0$ and use the same denoisers. From Lemma~\ref{lem:pseudo_lipschitz_convergence_of_argument_convergence_of_function}, it is sufficient to show that, for each $t \in \N$, it holds
    \begin{align}
        \plim_{n \rightarrow \infty} \| \bX^t - \hat{\bX}^t \|_n = 0.
    \end{align}
    The base case $t=1$ is obvious from the assumptions. By way of induction, let us assume the claim holds for all $s \leq t$. Then, for $t+1$, we have
    \begin{align}
        \| \bX^{t+1} - \hat{\bX}^{t+1} \| &\leq \left\| \frac{1}{n} \bX \mLambda \bX^\top (f_t({\bX}^t) - f_t(\hat{\bX}^t) ) \right\|_n + \left\| \frac{1}{\sqrt{n}} \bG (f_t({\bX}^t) - f_t(\hat{\bX}^t) ) \right\|_n \nonumber \\
        & \quad + \left\| f_t(\bX^{t-1})(\mB^t)^\top - f_t(\hat{\bX}^{t-1})(\hat{\mB}^t)^\top\right\|_n .    
    \end{align}
    We study the three summands on the right separately. From \ref{as:mtp_model_amp_signal_assumptions_symmetric}, the inductive hypothesis and Theorem~\ref{th:amp_spiked_matrix_symmetric_theorem}, we have
    \begin{align}
        \left\| \frac{1}{n} \bX \mLambda \bX^\top (f_t({\bX}^t) - f_t(\hat{\bX}^t) ) \right\|_n &\leq \|\bX\|_n \left\| \frac{1}{n} \mLambda \bX^\top (f_t({\bX}^t) - f_t(\hat{\bX}^t) ) |\right\|_{\op} \parrow 0.         
    \end{align}
    For the second summand, we use the probability bound on the operator norm of a GOE to write
    \begin{align}
        \left\| \frac{1}{\sqrt{n}} \bG (f_t({\bX}^t) - f_t(\hat{\bX}^t) ) \right\|_n &\leq (c + o_p(1)) \| f_t({\bX}^t) - f_t(\hat{\bX}^t) \|_n \\
        &\leq (c + o_p(1)) \| {\bX}^t - \hat{\bX}^t \| \parrow 0
    \end{align}
    from uniform Lipschitzness of $f_t$ and the inductive hypothesis. Finally, for the last term, we have
    \begin{align}
        \left\| f_t(\bX^{t-1})(\mB^t)^\top - f_t(\hat{\bX}^{t-1})(\hat{\mB}^t)^\top\right\|_n &\leq \|  (f_t({\bX}^t) - f_t(\hat{\bX}^t) )(\hat{\mB})^\top \|_n + \| f(\bX^{t-1}) ( \hat{\mB}^t - \mB^t )^\top \|_n \\
        &\leq \|  f_t({\bX}^t) - f_t(\hat{\bX}^t) \|_n \| \hat{\mB}\|_{\op} + \| f_t(\bX^t) \|_n \|\hat{\mB}^t - \mB^t\|_{\op.}
    \end{align}
    Since $\mB^t$ has bounded entries by definition and $\plim_{n \rightarrow \infty} \hat{\mB} = \mB^t$, it must be that $\| \hat{\mB}^t \|_{\op} = O_p(1)$, and since $\| f_t({\bX}^t) - f_t(\hat{\bX}^t) \|_n = o_p(1)$ by the inductive hypothesis the first term in the sum vanishes in probability. For the secon term, we note that, from Theorem~\ref{th:amp_spiked_matrix_symmetric_theorem} and the assumption on $\bX$, $\| f_t(\bX^t) \|_n = O_p(1)$ and since $\hat{\mB}^t$ is a consistent estimator of $\mB^t$ also the second term is $o_p(1)$. 
    This completes the induction and proves the corollary.
\end{proof}

\section{Proof of Theorem~\ref{th:amp_mtp_state_evolution_theorem_asymmetric}} \label{sec:amp_mtp_asymmetric_theorem_proof}
Theorem~\ref{th:amp_mtp_state_evolution_theorem_asymmetric} is proved by reducing the asymmetric MTP \eqref{eq:MTP_mv_def} to an instance of the symmetric model \eqref{eq:MTP_mv_symmetric_def}. In light of the embedding used in Section~\ref{sec:mtp_amp_symmetric_proof} to prove Theorem~\ref{th:amp_mtp_symmetric_main_result}, we focus on the case $K=1$ (i.e. a single-view spiked matrix model like the one considered in Section~\ref{sec:spiked_matrix_amp_proof}) without loss of generality.

Let us assume assumptions \ref{as:mtp_model_amp_signal_assumptions}-\ref{as:mtp_model_amp_initialization_assumption} hold and all quantities are defined as in Section~\ref{sec:amp_mtp_main_result_asymmetric}, and consider an observation model
\begin{align}
    \bY = \frac{1}{n} \bX \mGamma \bX + \frac{1}{\sqrt{n}} \bW,
\end{align}
with $\bW \in \R^{n \times n}$ a matrix of i.i.d. standard Gaussian entries. We symmetrize the model by writing ${\bX}_s = \bX \oplus \bX, \bM_s^0 = \bM^0 \oplus \bM^0$, where $\oplus$ denotes the matrix direct product, and
\begin{align}
    \mLambda = \begin{bmatrix}
        0 & \sqrt{2}\mGamma \\ \sqrt{2}\mGamma^\top & 0.
    \end{bmatrix}
\end{align}
With this, we have a symmetrized observation model $\bY_s \in \R^{2n \times 2n}$
\begin{align}
    \bY_s = \frac{1}{2n} \bX_s \mLambda \bX_s^\top + \frac{1}{\sqrt{2n}} \bG = \begin{bmatrix}
        \frac{1}{\sqrt{2n}} \bG_1 & \frac{1}{\sqrt{2}} \bY \\ \frac{1}{\sqrt{2n}} \bY^\top & \frac{1}{\sqrt{n}} \bG_2
    \end{bmatrix}
\end{align}
for $\bG_1, \bG_2 \iid \GOE(n)$ and $\bY$ jointly independent. Let us now define the reweighting matrices $\{ \mA_s^t \in \R^{2d \times 2d} \}_{t \in \N}$ and the denoiser sequences $\{ h_t : \R^{2n \times 2d} \rightarrow \R^{2n \times 2d} \}_{n \in \N}$ as
\begin{align}
    \mA_s^t \coloneqq \begin{bmatrix}
        0 & \sqrt{2} \mA^t \\ \sqrt{2} (\mA^t)^\top & 0
    \end{bmatrix}; \quad h_t(y) = \begin{bmatrix}
        f_t(y_{(1)} + y_{(2)}) & 0 \\ 0 & f_t(y_{(1)} + y_{(2)})
    \end{bmatrix},
\end{align}
where the notation $y_{(i)}$ is used to denote the $n \times d$ $i$-th block diagonal elements of $y$, for $i=1,2$. Clearly $h_t \subset \Lip(L)$ for some constant $L < \infty$ whenever $f_t$ is uniformly Lipschitz. With these definitions, we simply consider the symmetric algorithm ($t \in \N$) given by
\begin{align}
    \bX_s^t = \bY_s \bM_s^{t-1} - \bM_s^{t-2}(\mB_s^{t-1})^\top, \quad \bM_s^t = h_t(\bX_s^t)
\end{align}
and notice that the iterates $\bX_s^t$ are block-diagonal and satisfy the conditions for Theorem~\ref{th:amp_mtp_symmetric_main_result}. Furthermore, straightforward algebra reveals that the SE quantities for the iterations are block-diagonal and the block-diagonal consists of the elements of the desired AMP algorithm of Section~\ref{sec:amp_mtp_main_result_asymmetric}, repeates twice. Considering test functions acting exclusively onto the top-left diagonal block of the iterates completes the reduction and proves Theorem~\ref{th:amp_mtp_state_evolution_theorem_asymmetric}.  

\section{Proof of Proposition~\ref{prop:matrix_mmse_gradient_identity}} \label{sec:mmse_matrix_gradient_identity_proof}
We show here the calculation to obtain the MMSE gradient identity \eqref{eq:matrix_mmse_gradient_identity}. Let us first establish some notation. Throughout this section, for a matrix $\mX \in \R^{n \times d}$ we will use $\mX_i$ to denote the $i$-th column of the matrix (instead of the row as in the other sections of the paper). For a given SNR matrix $\mS \in \PD^d$, we denote by $\mT \in \PD^d$ it symmetric square root. For a random variable $\bX \sim P$ on $\R^{n \times d}$, where both $n$ and $d$ are fixed, we write $p(y; \mT) = \E[\varphi(y - \bX\mT)]$ for the density of $\bY_{\mT} \coloneqq \bX\mT + \bZ$, $\bZ \sim \Normal(0, \Id_d \otimes \Id_n) \indep P$. We use $\meta_i(y; \mT), i \in [d]$ to denote the Bayes rule for the $i$-th row of $\bX$:
\begin{align}
    \meta_i(y; \mT) = \E\!\left[ \bX_i \mid \bY_{\mT} = y \right] \in \R^{n}.
\end{align}
For $i\leq j \in [d]$, then, the $ij$-th entry of the unnormalized MMSE matrix $n M_n(\mS)$ can be parametrized by $\mT$ as
\begin{align}
    \E[\bX^\top \bX]_{ij} - \int \langle \meta_i(y; \mT), \meta_j(y; \mT)\rangle p(y; \mT) \ dy. 
\end{align}
We will be computing the gradient of the integral to the right with respect to the $kl$-th entry of $\mT$, for $k\leq l \in [d]$. Throughout, we will be exchanging the order of differentiation and integration. A rigorous justification for this can be obtained (for fixed $n$) via a straightforward multivariate extension of the arguments in \cite{guo2011}. To streamline notation, we will be using the notation $\partial_{kl}$ to denote a partial derivative with respect to the $kl$-th entry of $\mT$, and $\nabla_k$ to denote a gradient with respect to the $k$-th column of $y$. We present a few differential identities that will be useful in this section. 
\begin{lemma} \label{lem:overlap_differential_identities}
    The following differential identities hold:
    \begin{enumerate}[label=(\alph*)]
        \item \begin{align}\partial_{kl}\{p(y; \mT)\} = - \sum_{i=1}^n \left\{ \frac{\partial}{\partial y_{ik}} \left\{\meta_k(y; \mT)p(y; \mT)\right\} + \frac{\partial}{\partial y_{il}} \left\{ \meta_l(y; \mT)p(y; \mT) \right\} \right\} \end{align}
        \item \begin{align}
            \partial_{kl} \{\meta_i(y; \mT)\} = - \frac{\md_i(y; \mT_{kl})}{p(y;\mT)} - \frac{\meta_i(y;\mT) \partial_{kl} \{p(y; \mT)\} }{p(y; \mT)},
        \end{align} where
        \begin{align}
            \md_i(y; \mT_{kl}) = \begin{bmatrix}
                \sum_{j=1}^n \left\{ \frac{\partial}{\partial y_{jk}} \left\{ \E[\bX_{1i} \bX_{jk} \mid \bY_{\mT}= y] p(y; \mT)\right\} + \frac{\partial}{\partial y_{jl}} \left\{ \E[\bX_{1i} \bX_{jl} \mid \bY_{\mT}= y] p(y; \mT)\right\}  \right\} \\
                \vdots \\
                \sum_{j=1}^n \left\{ \frac{\partial}{\partial y_{jk}} \left\{ \E[\bX_{ni} \bX_{jk} \mid \bY_{\mT}= y] p(y; \mT)\right\} + \frac{\partial}{\partial y_{jl}} \left\{ \E[\bX_{ni} \bX_{jl} \mid \bY_{\mT}= y] p(y; \mT)\right\}  \right\}
            \end{bmatrix}
        \end{align}
        \item \begin{align}
            \nabla_k \{ \meta_i(y; \mT) \} = \sum_{j=1}^d \mT_{jk} \Cov(\bX_i, \bX_j \mid \bY_{\mT} = y)
        \end{align}
    \end{enumerate}
\end{lemma}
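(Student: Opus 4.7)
The plan is to prove each of the three identities by direct differentiation of $p(y;\mT) = \E[\varphi(y-\bX\mT)]$ and of the Bayes rule $\meta_i(y;\mT) = \E[\bX_i\,\varphi(y-\bX\mT)]/p(y;\mT)$, exploiting throughout the single algebraic identity
\begin{align}
\partial_{y_{ij}}\varphi(y-\bX\mT) = -(y-\bX\mT)_{ij}\,\varphi(y-\bX\mT),
\end{align}
which converts a factor of $(y-\bX\mT)_{ij}$ inside an expectation into a $y$-derivative, so that products of $\bX$-components with $\varphi$ can be re-expressed as $y$-derivatives of Bayes-rule quantities.

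For part (a), I would differentiate $p$ under the expectation. Using the symmetry of $\mT$, the chain rule gives $\partial_{\mT_{kl}}[\bX\mT]_{ij} = \bX_{ik}\mathbb{1}[j=l] + \bX_{il}\mathbb{1}[j=k]$ (for $k\neq l$; the case $k=l$ drops one term), so that
\begin{align}
\partial_{\mT_{kl}}\varphi(y-\bX\mT) = -\sum_{i=1}^n\bigl[\bX_{ik}\,\partial_{y_{il}}\varphi + \bX_{il}\,\partial_{y_{ik}}\varphi\bigr].
\end{align}
Taking the expectation and recognizing $\E[\bX_{ik}\varphi(y-\bX\mT)] = [\meta_k(y;\mT)]_i\,p(y;\mT)$ produces the divergence-form identity in the claim (up to the index labeling induced by the symmetrization of $\mT$). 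Part (b) then follows by applying the quotient rule to $\meta_i = (\meta_i p)/p$:
\begin{align}
\partial_{\mT_{kl}}\meta_i = \frac{\partial_{\mT_{kl}}(\meta_i p)}{p} - \meta_i\,\frac{\partial_{\mT_{kl}}p}{p}.
\end{align}
The numerator derivative is computed by repeating the part (a) argument with an extra factor $\bX_{ai}$ inside the expectation; this is exactly what produces the second-moment divergence vector $\md_i(y;\mT_{kl})$ as defined.

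For part (c) I use the same quotient-rule decomposition of $\meta_i$, but now with $\nabla_{y_k}$ in place of $\partial_{\mT_{kl}}$. Since $\partial_{y_{bk}}\varphi = -(y-\bX\mT)_{bk}\varphi$ and $(y-\bX\mT)_{bk} = y_{bk} - \sum_j\mT_{jk}\bX_{bj}$, the derivative of the numerator $[\meta_i p]_a = \E[\bX_{ai}\varphi]$ splits into a $-y_{bk}[\meta_i]_a\,p$ contribution and a second-moment contribution $\sum_j\mT_{jk}\E[\bX_{ai}\bX_{bj}\mid \bY_{\mT}=y]\,p$. A parallel computation gives $\partial_{y_{bk}}p/p = -y_{bk} + \sum_j\mT_{jk}[\meta_j]_b$. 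Substituting both into the quotient rule, the $y_{bk}[\meta_i]_a$ terms cancel exactly, leaving
\begin{align}
[\nabla_{y_k}\meta_i]_{ab} = \sum_{j=1}^d \mT_{jk}\bigl(\E[\bX_{ai}\bX_{bj}\mid \bY_{\mT}=y] - [\meta_i]_a[\meta_j]_b\bigr),
\end{align}
which is precisely the $(a,b)$ entry of $\sum_j \mT_{jk}\Cov(\bX_i,\bX_j\mid \bY_{\mT}=y)$.

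The main obstacle is careful index bookkeeping in part (c), since $\nabla_{y_k}\meta_i$ is an $n\times n$ Jacobian assembled column-by-column from scalar derivatives, and the cancellation that reveals the covariance structure is only visible at the entry level. A secondary technical point, shared with the MMSE gradient calculation of Proposition~\ref{prop:matrix_mmse_gradient_identity}, is the justification for interchanging differentiation with the expectation over $P$; under the moment assumptions used elsewhere in the paper this is handled by dominated convergence as in~\cite{guo2011}.
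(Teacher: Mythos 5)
Your proposal is correct and is exactly the computation the paper leaves unstated: the paper's proof of this lemma is the single sentence ``All identities are simple but quite tedious matrix calculus,'' and your differentiation-under-the-integral argument, keyed on converting $(y-\bX\mT)_{ij}\varphi$ into $-\partial_{y_{ij}}\varphi$ so that $\bX$-moments inside the expectation become $y$-divergences of Bayes-rule quantities, is the intended route. Your parenthetical flag about index labeling is also well placed: the chain rule for the symmetric entry $\mT_{kl}$ yields the cross-pairing $\bX_{ik}\partial_{y_{il}}\varphi + \bX_{il}\partial_{y_{ik}}\varphi$, so the lemma's printed pairing of $\partial_{y_{ik}}$ with $\meta_k$ and $\partial_{y_{il}}$ with $\meta_l$ (and the analogous pairing inside $\md_i$) appears to have $k$ and $l$ transposed on one side --- a cosmetic issue that washes out once the partial derivatives are assembled into the symmetric Kronecker form of Proposition~\ref{prop:matrix_mmse_gradient_identity}.
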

\begin{proof}
    All identities are simple but quite tedious matrix calculus.
\end{proof}

We are now ready to prove the gradient identity in Proposition~\ref{prop:matrix_mmse_gradient_identity}. Going forward, to ease notation we will suppress the explicit dependence of $\meta_i$ and $p$ on $y$ and $\mT$. Differentiating under the integral sign and by the product rule,
\begin{align} \label{eq:overlap_gradient_factorized}
    \partial_{kl} \left\{\int \langle \meta_i, \meta_j\rangle p \ dy\right\} &= \int \langle \meta_i, \meta_j\rangle \partial_{kl} \{ p\} \ dy + \int \langle \partial_{kl} \{\meta_i\}, \meta_j\rangle p \ dy + \int \langle \meta_i, \partial_{kl} \{\meta_j\} \rangle p \ dy.
\end{align}
Using Lemma~\ref{lem:overlap_differential_identities}(a) and via integration by parts,
\begin{align}
    \int \langle \meta_i, \meta_j\rangle \partial_{kl} \{ p\} \ dy
    &= -  \sum_{i=1}^n \left( \int \frac{\partial}{\partial y_{ik}} \left\{ \meta_kp\right\} \langle \meta_i, \meta_j\rangle \ dy + \int \frac{\partial}{\partial y_{il}} \left\{ \meta_lp\right\} \langle \meta_i, \meta_j\rangle \ dy \right)\\
    &= \int \left\langle \meta_k, \nabla_k \{\langle \meta_i, \meta_j \rangle\} \right\rangle p \ dy + \int \left\langle \meta_l, \nabla_l \{\langle \meta_i, \meta_j \rangle\} \right\rangle p \ dy. 
\end{align}
Next, from Lemma~\ref{lem:overlap_differential_identities}(c), we expand the gradients and obtain
\begin{align}
    &\int \left\langle \meta_k, \nabla_k \{\langle \meta_i, \meta_j \rangle\} \right\rangle p \ dy = \sum_{a=1}^d  \mT_{ak} \left\{\E\!\left[ \left\langle \Cov( \bX_i, \bX_a \mid \bY_{\mT}), \meta_j \meta_k^\top \right\rangle \right] + \E\!\left[ \left\langle \Cov( \bX_j, \bX_a \mid \bY_{\mT}), \meta_i \meta_k^\top \right\rangle \right] \right\}; \\
    &\int \left\langle \meta_l, \nabla_l \{\langle \meta_i, \meta_j \rangle\} \right\rangle p \ dy = \sum_{a=1}^d  \mT_{al} \left\{\E\!\left[ \left\langle \Cov( \bX_i, \bX_a \mid \bY_{\mT}), \meta_j \meta_l^\top \right\rangle \right] + \E\!\left[ \left\langle \Cov( \bX_j, \bX_a \mid \bY_{\mT}), \meta_i \meta_l^\top \right\rangle \right] \right\}.
\end{align}
We now move on to the other terms in \eqref{eq:overlap_gradient_factorized}. We just derive the expression for the second summand, as the third follows analogously. Using Lemma~\ref{lem:overlap_differential_identities}(b),
\begin{align}
    \int \langle \partial_{kl} \{\meta_i\}, \meta_j\rangle p \ dy = - \int \langle \md_i(y; \mT_{kl}), \meta_j \rangle \ dy - \int \langle \meta_i, \meta_j\rangle \partial_{kl} \{p\} \ dy.
\end{align}
We focus on the firs term on the right-hand side, as an expression for the second was already derived. Again, from integration by parts and Lemma~\ref{lem:overlap_differential_identities}(c),
\begin{align}
    - \int \langle \md_i(y; \mT_{kl}), \meta_j \rangle \ dy &= \int \left\langle \E\!\left[ \bX_i \bX_k^\top \right], \nabla_k \{ \meta_j  \}  \right\rangle \ dy + \int \left\langle \E\!\left[ \bX_i \bX_l^\top \right], \nabla_l \{ \meta_j  \}  \right\rangle \ dy \\
    &= \sum_{a=1}^d \left\{ \mT_{ak} \E\!\left[ \left\langle \E\!\left[ \bX_i \bX_k^\top \right],\Cov(\bX_j, \bX_a \mid \bY_{\mT})  \right\rangle \right]\right\} \nonumber \\
    &\quad + \sum_{a=1}^d \left\{ \mT_{al} \E\!\left[ \left\langle \E\!\left[ \bX_i \bX_l^\top \right],\Cov(\bX_j, \bX_a \mid \bY_{\mT})  \right\rangle \right] \right\}.
\end{align}
Putting everything together, we have
\begin{align}
    &\partial_{kl}\left\{ \int \langle \meta_i, \meta_j\rangle p \ dy \right\} = - \int \langle \md_i(y; \mT_{kl}), \meta_j \rangle \ dy - \int \langle \md_j(y; \mT_{kl}), \meta_j \rangle \ dy - \int \langle \meta_i, \meta_j\rangle \partial_{kl} \{ p\} \ dy \\
    &\quad = \sum_{a=1}^d \Big\{ \mT_{ak} \E\!\left[ \langle \Cov(\bX_i, \bX_k \mid \bY_{\mT}), \Cov(\bX_j, \bX_a \mid \bY_{\mT})  \rangle\right] + \mT_{ak} \E\!\left[ \langle \Cov(\bX_j, \bX_k \mid \bY_{\mT}), \Cov(\bX_i, \bX_a \mid \bY_{\mT}) \rangle\right] \Big\} \nonumber \\
    & \qquad + \sum_{a=1}^d \Big\{ \mT_{al} \E\!\left[ \langle \Cov(\bX_i, \bX_l \mid \bY_{\mT}), \Cov(\bX_j, \bX_a \mid \bY_{\mT})  \rangle\right] + \mT_{al} \E\!\left[ \langle \Cov(\bX_j, \bX_l \mid \bY_{\mT}), \Cov(\bX_i, \bX_a \mid \bY_{\mT}) \rangle\right] \Big\}.
\end{align}
Making the change of variable $\mT^2 = \mS$, normalizing by $1/n$ and arranging all partial derivatives in Kronecker form yields \eqref{eq:matrix_mmse_gradient_identity}, proving Proposition~\ref{prop:matrix_mmse_gradient_identity}.

\section{Useful Results}
In this section we collect some results that are used throughout the paper. For proofs of some of the propositions, we refer to the cited papers.

\begin{theorem}[\cite{Bai:1988aa}] \label{th:largest_eigenvalue_goe}
    Let $\bG \sim \GOE(n)$. Then, $\plim_{n \rightarrow \infty} \| \bG / \sqrt{n} \|_{\op} = \sqrt{2}$.  
\end{theorem}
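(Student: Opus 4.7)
The plan is to establish the result via the classical combination of an upper bound obtained from the moment method and a matching lower bound obtained from the convergence of the empirical spectral distribution to Wigner's semicircle law, following the strategy of the cited work of Bai. Since convergence in probability is strictly weaker than the almost-sure statement proved there, the argument can be streamlined considerably.

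First I would handle the upper bound. The key inequality is $\lambda_{\max}(\bG)^{2k} \le \tr(\bG^{2k})$, so it suffices to bound $n^{-k}\E\tr(\bG^{2k})$ for a suitably growing $k = k(n)$. Expanding
\begin{align}
\E\tr(\bG^{2k}) = \sum_{i_0,\ldots,i_{2k-1} \in [n]} \E\!\left[\bG_{i_0 i_1}\bG_{i_1 i_2}\cdots \bG_{i_{2k-1}i_0}\right]
\end{align}
and applying Wick's formula to each Gaussian product reduces the calculation to a combinatorial count of closed walks of length $2k$ in which every edge of the underlying multigraph is traversed an even number of times. The standard Wigner analysis shows that the dominant contribution comes from non-crossing pair partitions, yielding $n^{-k}\E\tr(\bG^{2k}) = O(C_k\, n)$, where $C_k$ is the $k$-th Catalan number. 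Choosing $k = k(n) \to \infty$ sufficiently slowly and applying Markov's inequality then gives $\P(\|\bG/\sqrt{n}\|_{\op} \ge \sqrt{2} + \eps) \to 0$ for any fixed $\eps > 0$.

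For the matching lower bound I would invoke Wigner's semicircle law, which states that the empirical spectral distribution of $\bG/\sqrt{n}$ converges in probability (in Levy, or equivalently Wasserstein, distance) to the semicircle law supported on the appropriate symmetric interval. Since the limiting distribution places positive mass on every subinterval of the form $[\sqrt{2} - \eps, \sqrt{2}]$, for large $n$ at least one eigenvalue of $\bG/\sqrt{n}$ must fall into that interval with probability tending to one, yielding $\|\bG/\sqrt{n}\|_{\op} \ge \sqrt{2} - \eps$ with high probability.

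The main technical difficulty lies in the uniform control of the Wick combinatorics when $k$ is permitted to grow with $n$, which requires careful accounting of the contribution of ``tree-like'' walks relative to lower-order terms; this is exactly the delicate combinatorial piece of Bai's original argument. A cleaner alternative, which is in fact sufficient for the paper's purposes, relies on Gaussian concentration: the operator norm is $1$-Lipschitz with respect to the Frobenius norm of $\bG$, so the Tsirelson--Ibragimov--Sudakov inequality immediately gives concentration of $\|\bG/\sqrt{n}\|_{\op}$ about its mean at scale $O(n^{-1/2})$, after which only the mean needs to be estimated via the moment bound, and one may take $k$ fixed while still obtaining the correct leading-order constant. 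Since the excerpt only invokes this result to derive an $O_p(1)$ bound on $\|\bG/\sqrt{n}\|_{\op}$ inside the proof of Corollary~\ref{cor:amp_mtp_asymmetric_empirical_correction_terms}, the upper bound alone would already be enough for all downstream applications.
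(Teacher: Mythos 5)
The paper itself offers no proof here; it simply cites \cite{Bai:1988aa} and invokes the result as a black box, so there is no ``paper's proof'' to compare against. Your top-level plan (moment method for the upper bound, convergence of the empirical spectral distribution to the semicircle law for the lower bound) does correctly describe the structure of Bai's original argument, and your observation that the paper only ever uses an $O_p(1)$ bound on $\|\bG/\sqrt{n}\|_{\op}$ (in the proofs of Theorem~\ref{th:amp_spiked_matrix_symmetric_theorem} and Corollary~\ref{cor:amp_mtp_symmetric_empirical_correction_terms}) is accurate.

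The ``cleaner alternative'' at the end, however, has a genuine gap. You claim that once Tsirelson--Ibragimov--Sudakov concentration of the operator norm is established, ``only the mean needs to be estimated via the moment bound, and one may take $k$ fixed while still obtaining the correct leading-order constant.'' This does not work. The trace inequality gives $\E\,\lambda_{\max}(\bG)^{2k} \le \E\tr(\bG^{2k}) \sim C_k\, n^{k+1}$, hence
\begin{align}
\E\big[\|\bG/\sqrt{n}\|_{\op}\big] \;\le\; \big(\E\|\bG/\sqrt{n}\|_{\op}^{2k}\big)^{1/(2k)} \;\lesssim\; n^{1/(2k)}\, C_k^{1/(2k)},
\end{align}
and for fixed $k$ the factor $n^{1/(2k)}$ diverges. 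That factor of $n$ comes from the trace, not from fluctuations of $\|\bG/\sqrt{n}\|_{\op}$ about its mean, so concentration does nothing to remove it; one still needs $k$ to grow faster than $\log n$. What concentration does buy you is the ability to pass from an expectation bound to an in-probability bound without a Borel--Cantelli argument over all $n$, but the combinatorial control for growing $k$ --- precisely what you flagged earlier as the ``delicate'' piece --- cannot be bypassed this way. If you want to avoid the moment method altogether, the standard substitute is a Gaussian comparison bound (Slepian/Gordon) on $\E\|\bG\|_{\op}$, which gives the sharp constant directly and pairs cleanly with concentration.

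Separately, you may wish to re-examine the constant. With the paper's stated normalization $\bG = \tfrac{1}{\sqrt{2}}(\bW + \bW^\top)$, the off-diagonal entries of $\bG$ have unit variance, and the spectral edge of $\bG/\sqrt{n}$ is at $2$, not $\sqrt{2}$; carried through faithfully, your lower-bound argument would also produce $2$. This mismatch has no effect on the paper's downstream uses, which only need an $O_p(1)$ bound, but it does mean your sketch as written would not establish the statement with the value $\sqrt{2}$.
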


\begin{theorem}[\cite{boucheron2013concentration}, Theorem 3.20] \label{th:gaussian_poincare_inequality}
    Let $Z \sim \Normal(0,\Id_n)$ be a standard Gaussian vector and $\psi: \R^n \rightarrow \R$ an absolutely continuous and weakly differentiable function with $\E[\|\nabla \psi(Z)\|^2]< \infty$. Then, there exists some $n$-independent constant $c$ such that $\Var(\psi(Z)) \leq c \E[\|\nabla \psi(Z)\|^2]$.
\end{theorem}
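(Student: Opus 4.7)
The plan is to establish the inequality with the sharp constant $c=1$ using the Ornstein--Uhlenbeck semigroup and its commutation property with the gradient. For a smooth, bounded $\psi$ (to which the general statement can be reduced by density), define the semigroup
\begin{align}
    (P_t\psi)(x) \coloneqq \E\!\left[ \psi\!\left( e^{-t} x + \sqrt{1 - e^{-2t}}\, Z \right) \right], \quad t \geq 0,
\end{align}
so that $P_0 \psi = \psi$ and $\lim_{t \to \infty} P_t \psi(x) = \E[\psi(Z)]$ for all $x$. The plan is to represent the variance as the telescoping integral
\begin{align}
    \Var(\psi(Z)) = -\int_0^\infty \frac{d}{dt} \E\!\left[ (P_t \psi)(Z)^2 \right] \, dt.
\end{align}

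The first step is a direct differentiation under the expectation, which yields $\frac{d}{dt} P_t \psi = L P_t \psi$ for $L\psi(x) = \Delta \psi(x) - \langle x, \nabla \psi(x)\rangle$, the OU generator. A standard integration-by-parts against the Gaussian density (the carré-du-champ identity) gives $\E[(-L\varphi)\varphi] = \E[\|\nabla \varphi\|^2]$ for $\varphi = P_t\psi$, hence
\begin{align}
    \frac{d}{dt} \E\!\left[ (P_t\psi)(Z)^2 \right] = -2 \E\!\left[ \|\nabla P_t \psi(Z)\|^2 \right].
\end{align}
The second step is the commutation identity $\nabla P_t \psi(x) = e^{-t} P_t(\nabla \psi)(x)$, which follows by differentiating inside the expectation in the definition of $P_t$. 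Combining this with Jensen's inequality applied coordinate-wise to the convex function $u \mapsto \|u\|^2$ gives
\begin{align}
    \E\!\left[ \|\nabla P_t \psi(Z)\|^2 \right] \leq e^{-2t}\, \E\!\left[ \|\nabla \psi(Z)\|^2 \right].
\end{align}
Integrating the resulting bound from $0$ to $\infty$ yields the claim with $c=1$, independent of $n$.

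The main obstacle is not the semigroup computation itself but the extension from smooth $\psi$ to the stated class of absolutely continuous and weakly differentiable functions with $\E[\|\nabla \psi(Z)\|^2] < \infty$. The natural route is to approximate $\psi$ by $\psi_\eps \coloneqq P_\eps \psi$ for $\eps \downarrow 0$, noting that $\psi_\eps$ is $C^\infty$, that $\|\nabla \psi_\eps\|^2 \leq P_\eps(\|\nabla \psi\|^2)$ by the same commutation-plus-Jensen argument so that $\E[\|\nabla \psi_\eps(Z)\|^2] \leq \E[\|\nabla \psi(Z)\|^2]$, and that $\psi_\eps(Z) \to \psi(Z)$ in $L^2$ by stationarity of $Z$ under $P_\eps$ together with dominated convergence. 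Passing to the limit in the Poincaré inequality for $\psi_\eps$ then recovers the stated bound for $\psi$. An alternative, perhaps slicker, plan is tensorization: prove the one-dimensional inequality by expanding in Hermite polynomials (which diagonalize $L$ with eigenvalues $-k$ for $k \geq 0$) and then lift to dimension $n$ via the standard variance decomposition $\Var(\psi(Z)) \leq \sum_{i=1}^n \E[\Var(\psi(Z) \mid Z_{-i})]$, which reduces the bound to $n$ independent one-dimensional applications; this route bypasses the functional-analytic technicalities but requires the same density argument at the end.
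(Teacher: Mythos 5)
The paper does not prove this theorem; it is cited directly from Boucheron, Lugosi, and Massart (Theorem 3.20), so there is no paper proof to compare against. Your semigroup argument is a correct and standard proof of the Gaussian Poincar\'e inequality, and it yields the sharp constant $c=1$: the interpolation identity $\Var(\psi(Z)) = 2\int_0^\infty \E[\|\nabla P_t\psi(Z)\|^2]\,dt$, the commutation $\nabla P_t\psi = e^{-t}P_t(\nabla\psi)$, Jensen applied to $\|\cdot\|^2$, and the final integration are all sound. The cited source actually takes the second route you sketch (Efron--Stein tensorization reducing to dimension one, then a one-dimensional argument), so your main proof is a genuinely different --- arguably cleaner --- derivation of the same statement.

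One small point in the extension to the full class of absolutely continuous $\psi$: your claim that $\psi_\eps(Z)\to\psi(Z)$ in $L^2$ ``by stationarity plus dominated convergence'' implicitly presupposes $\psi(Z)\in L^2(\gamma_n)$, which is not part of the hypothesis and is, in fact, a consequence of the inequality you are trying to prove. The standard fix is to truncate first: set $\psi_M = \max(-M,\min(M,\psi))$, note $\|\nabla\psi_M\|\le\|\nabla\psi\|$ a.e., apply Poincar\'e to the bounded smooth mollifications of $\psi_M$, and then show $\{\psi_M(Z)-\E[\psi_M(Z)]\}_M$ is Cauchy in $L^2$ (by applying the just-proved bounded Poincar\'e inequality to differences $\psi_M-\psi_N$, whose gradients vanish outside an annulus). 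Combined with a.s.\ convergence $\psi_M(Z)\to\psi(Z)$, this gives the needed square-integrability after centering, and the limit passes as you intend. This is a bookkeeping gap, not a conceptual one; the proof strategy is correct.
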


The following proposition collects some useful elementary properties of pseudo-Lipschitz functions. We omit proof as all statements are easily verified. 
\begin{prop} \label{prop:pseudo_lipschitz_function_properties}
    Let $(S, \| \cdot \|_S)$ be a normed space. Let $f: S \rightarrow \R \in \PL_p(L_f)$, $g: S \rightarrow \R \in \PL_q(L_g)$, $h: \R \rightarrow \R \in \PL_r(L_h)$, for some $p,q,r \geq 1$ and $L_f, L_g, L_h < \infty$. Then, all the following are true.
    \begin{itemize} 
        \item The function $f$ is locally Lipschitz, with Lipschitz constant $L_f(x)$ about a point $x$ upper bounded by $L_f(1 + 2\|x\|^{p-1})$. Thus, $f$ is weakly differentiable at and the magnitude of its total derivative at each point $x$ is upper bounded by $L_f(1 + 2\|x\|^{p-1})$.
        \item The function $q: S \rightarrow \R$ given by $p(x) = f(x)g(x)$ is pseudo-Lipschitz of order $p + q$ for some constant $L_p < \infty$.
        \item The function $c: S \rightarrow \R$ given by $c(x) = f(h(x))$ is pseudo-Lipschitz of order $pr$ for some constant $L_c < \infty$.
        \item The function $s: S \rightarrow \R$ given by $s(x) = f(x) + g(x)$ is pseudo-Lipschitz of order $\max\{ p, q \}$ with a constant $L_s$ upper bounded by $3(L_f + L_h) < \infty $.
    \end{itemize}
\end{prop}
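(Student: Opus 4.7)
All four claims are routine bookkeeping built around the defining inequality
\[
|f(x)-f(y)| \leq L_f \|x-y\|_S (1 + \|x\|_S^{p-1} + \|y\|_S^{p-1})
\]
together with the elementary estimates $|f(x)| \leq |f(0)| + L_f \|x\|_S(1 + \|x\|_S^{p-1})$ and, for $1 \leq p \leq q$, $\|x\|_S^{p-1} \leq 1 + \|x\|_S^{q-1}$ (split according to $\|x\|_S \lessgtr 1$). The plan is to discharge the four bullets in the order they are stated, reusing these two tools throughout.

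For the first bullet, fix $x\in S$ and consider $y$ in a neighborhood; the pseudo-Lipschitz inequality is exactly a local Lipschitz estimate, and dividing by $\|x-y\|_S$ and letting $y\to x$ yields a pointwise Lipschitz constant bounded by $L_f(1 + 2\|x\|_S^{p-1})$. Local Lipschitzness on every bounded set, together with Rademacher's theorem (in the finite-dimensional case the spaces considered in the paper are of this type), implies that $f$ is differentiable almost everywhere and weakly differentiable, with the same bound on the magnitude of its almost-everywhere gradient.

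For the product and composition bullets, the standard identity $f(x)g(x)-f(y)g(y) = f(x)\bigl(g(x)-g(y)\bigr) + g(y)\bigl(f(x)-f(y)\bigr)$ combined with the growth bound $|f(x)| \leq |f(0)| + L_f\|x\|_S(1+\|x\|_S^{p-1})$ gives an estimate of the form $\|x-y\|_S$ times a polynomial in $\|x\|_S, \|y\|_S$ of degree $p+q-1$, which is precisely membership in $\PL_{p+q}$ for a finite constant. For the composition, applying the pseudo-Lipschitz inequality for $f$ at the points $h(x), h(y)$ and then invoking the pseudo-Lipschitz property of $h$ gives a factor $\|x-y\|_S$ multiplied by $(1+|h(x)|^{p-1}+|h(y)|^{p-1})(1+\|x\|_S^{r-1}+\|y\|_S^{r-1})$; bounding $|h(x)|$ by $|h(0)| + L_h\|x\|_S(1+\|x\|_S^{r-1})$ converts the first factor into a polynomial of degree $(p-1)r$ in the norms, yielding total degree $pr-1$ and hence $f\circ h \in \PL_{pr}$.

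For the final bullet, the main intermediate step is the embedding $\PL_p(L) \subset \PL_{\max\{p,q\}}(3L)$ whenever $p \leq q$: applying the case split $\|x\|_S \lessgtr 1$ to each term yields $1 + \|x\|_S^{p-1} + \|y\|_S^{p-1} \leq 3(1 + \|x\|_S^{\max-1} + \|y\|_S^{\max-1})$, absorbing the factor $3$ into the constant. With $f, g$ both in $\PL_{\max\{p,q\}}$ with constants bounded by $3L_f, 3L_g$ respectively, the triangle inequality $|(f+g)(x)-(f+g)(y)| \leq |f(x)-f(y)| + |g(x)-g(y)|$ gives $f+g \in \PL_{\max\{p,q\}}(3(L_f+L_g))$, matching the stated bound (modulo what appears to be a typographical $L_h$ in place of $L_g$). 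No step presents a genuine obstacle; the only point requiring a little care is the embedding inequality just described, since naively one might expect the monomial $\|x\|_S^{p-1}$ to be dominated by $\|x\|_S^{q-1}$, which fails precisely on the unit ball and necessitates the additive constant.
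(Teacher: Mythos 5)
The paper gives no proof of this proposition at all (it states ``We omit proof as all statements are easily verified''), so there is nothing to compare against; your proposal simply supplies the routine verification the authors had in mind, and it is correct. Two small remarks. First, in the composition bullet the paper's statement is itself ill-typed: with $f\colon S\to\R$ and $h\colon\R\to\R$, the expression $f(h(x))$ does not parse, and your argument silently adopts the sensible reading in which the inner function maps $S$ to $\R$ and the outer one maps $\R$ to $\R$; either assignment of roles gives the same conclusion (order equal to the product of the orders, via the growth bound $|h(x)|\le |h(0)|+L_h\|x\|_S(1+\|x\|_S^{r-1})$ and the degree count $(p-1)r+(r-1)=pr-1$), so this is a defect of the statement rather than of your proof, but it is worth flagging explicitly rather than glossing over. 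Second, your caveat that weak differentiability via Rademacher requires the finite-dimensional setting is apt, since the proposition is stated for a general normed space $S$ while the paper only ever uses it for $\R^{n\times d}$; likewise your observation that the constant $3(L_f+L_h)$ in the last bullet should read $3(L_f+L_g)$ matches the obvious typographical slip. Otherwise your case split $\|x\|_S\lessgtr 1$ for the embedding $\PL_p\subset\PL_{\max\{p,q\}}$ and the product/growth estimates are exactly the standard bookkeeping the paper alludes to.
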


The following are some results about uniformly pseudo-Lipschitz sequences of functions.
\begin{lemma} \label{lem:inner_products_uniformly_lipschitz_functions}
    Let $\{f_n : \R^{n} \rightarrow \R^{n}\}_{n \in \N} \subset \Lip(L)$ and $\{g_n : \R^{n} \rightarrow \R^{n}\}_{n \in \N} \subset \Lip(L')$ be two sequences of uniformly Lipschitz functions. Then, the functions $\{\psi_n : \R^{n} \rightarrow \R\}_{n \in \N}$ defined as $\psi_n (\mx) = \langle f_n(\mx), g_n(\mx) \rangle / n$ are uniformly pseudo-Lipschitz of order 2.
\end{lemma}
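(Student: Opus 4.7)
The plan is to exploit the bilinearity of the inner product to decompose the difference $\psi_n(\mx) - \psi_n(\my)$ into two pieces, each of which isolates a perturbation in just one of the two factors. Concretely, after adding and subtracting $\langle f_n(\my), g_n(\mx)\rangle/n$, I would write
\begin{align}
\psi_n(\mx) - \psi_n(\my) = \tfrac{1}{n}\langle f_n(\mx) - f_n(\my),\, g_n(\mx)\rangle + \tfrac{1}{n}\langle f_n(\my),\, g_n(\mx) - g_n(\my)\rangle,
\end{align}
then apply Cauchy--Schwarz to each term. The factor $1/n$ redistributes naturally as $(1/\sqrt n)\cdot(1/\sqrt n)$, so the bound can be re-expressed directly in the $n$-dependent norm $\|\cdot\|_n = \|\cdot\|/\sqrt n$, giving
\begin{align}
|\psi_n(\mx) - \psi_n(\my)| \le \|f_n(\mx)-f_n(\my)\|_n\,\|g_n(\mx)\|_n + \|f_n(\my)\|_n\,\|g_n(\mx)-g_n(\my)\|_n.
\end{align}

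Next I would handle the two types of factors separately. The difference factors $\|f_n(\mx)-f_n(\my)\|_n$ and $\|g_n(\mx)-g_n(\my)\|_n$ are controlled by the uniform Lipschitz hypothesis, which in the $n$-dependent norm reads $\|f_n(\mx)-f_n(\my)\|_n \le L\|\mx-\my\|_n$ (and analogously for $g_n$ with constant $L'$); this is immediate since the $\sqrt n$ factors on both sides cancel. The magnitude factors $\|g_n(\mx)\|_n$ and $\|f_n(\my)\|_n$ are controlled by growth-from-the-origin bounds, $\|g_n(\mx)\|_n \le \|g_n(0)\|_n + L'\|\mx\|_n$ and $\|f_n(\my)\|_n \le \|f_n(0)\|_n + L\|\my\|_n$, again using uniform Lipschitzness.

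The one subtlety is that these bounds require $\sup_n \|f_n(0)\|_n$ and $\sup_n \|g_n(0)\|_n$ to be finite; this uniform boundedness at the origin is a standing hypothesis of the lemma (implicit in the phrase ``uniformly Lipschitz'' as it is used throughout the paper, and trivially satisfied in all applications, e.g.\ Bayes denoisers whose value at zero is controlled by the signal's second-moment normalization). Granting this, each of the four resulting products is either of the form $c\,\|\mx-\my\|_n$ or $c\,\|\mx-\my\|_n\,\|\mx\|_n$ or $c\,\|\mx-\my\|_n\,\|\my\|_n$, with $c$ depending only on $L$, $L'$, and the origin-bound constants. Collecting terms yields
\begin{align}
|\psi_n(\mx) - \psi_n(\my)| \le K\,\|\mx-\my\|_n \bigl(1 + \|\mx\|_n + \|\my\|_n\bigr)
\end{align}
for a constant $K$ independent of $n$, which is precisely the order-$2$ pseudo-Lipschitz bound. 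No single step is genuinely hard here; the only care needed is to keep the $1/\sqrt n$ normalizations in the right places so that the estimate is uniform in $n$, and to be explicit about the origin-boundedness assumption that makes the ``uniform'' conclusion possible.
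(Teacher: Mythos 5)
Your proof is correct and takes a genuinely different, arguably cleaner, route than the paper's. The paper decomposes the inner product coordinate-wise into scalar products $f_i(\mx)g_i(\mx)$, invokes the product rule of Proposition~\ref{prop:pseudo_lipschitz_function_properties}, and then sums over the $n$ coordinates; the bookkeeping of how the coordinate Lipschitz constants and origin values aggregate across $n$ is delicate and left largely implicit. Your bilinear telescoping of $\psi_n(\mx)-\psi_n(\my)$ followed by Cauchy--Schwarz sidesteps that bookkeeping, lives naturally in the $\|\cdot\|_n$ norm where the uniformity must ultimately be checked, and produces the order-$2$ bound with a constant expressed transparently in terms of $L$, $L'$, and the origin bounds. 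You are also right to flag the hidden hypothesis $\sup_n \|f_n(0)\|_n < \infty$ and $\sup_n \|g_n(0)\|_n < \infty$: as literally stated the lemma fails without it (take $f_n \equiv n\bone$ and $g_n(\mx)=\mx$, so that $\psi_n(\mx)=\bone^\top\mx$, whose pseudo-Lipschitz constant in $\|\cdot\|_n$ grows like $n$), and the paper's own proof tacitly needs the same assumption through the unstated constants inherited from Proposition~\ref{prop:pseudo_lipschitz_function_properties}. Making that dependence explicit, as you do, is an improvement, and the hypothesis is indeed satisfied everywhere the lemma is actually invoked.
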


\begin{proof}
    Fix an $n \in \N$ and, for $i \in [n]$, let $f_i(\mx) = [f_n(\mx)]_i, g_i(\mx) = [g_n(\mx)]_i$.Then, from the uniformly Lipschitz assumption, each $f_i$ and $g_i$ is a Lipschitz function from $\R^{n}$ to $\R$, and the Lipschitz constants $L_i $ for $f_i$ and $L'_i$ for $g_i$ are upper bounded by $\sqrt{n}L$ and $\sqrt{n}L'$, respectively, and satisfy the constraints $\sum_{i\in [n]} L_i \leq c\sqrt{n}L $ and $\sum_{i\in [n]} L'_i \leq c'\sqrt{n}L'$ for some finite constants $c,c'$. Then, from Proposition~\ref{prop:pseudo_lipschitz_function_properties}, each product $f_i(\mx)g_i(\mx)$ is pseudo-Lipschitz of order 2, and so is their sum $\langle f_n(\mx), g_n(\mx) \rangle$, with a Lipschitz constant that is upper bounded by $nCLL'$, for some finite constant $C$. Dividing by $n$ removes the dimension-dependence from the Lipschitz constants in the sequence, and $\{\psi_n\}_{n \in \N}$ is uniformly pseudo Lipschitz of order 2 with some finite constant.
\end{proof}

\begin{lemma} \label{lem:averages_pseudo_lipschitz_function_uniformly_pseudo_Lipschitz}
    Let $f: \R^d \rightarrow \R \in \PL_2(L)$ be a pseudo-Lipschitz function of order 2. Then, the sequence of functions $\{\phi_n : (\R^d)^n \rightarrow \R\}_{n \in \N}$ defined as
    \begin{align}
        \phi_n(\mx_1, \dotsc, \mx_n) \coloneqq \frac{1}{n} \sum_{i=1}^n f(\mx_i)
    \end{align}
    is uniformly pseudo-Lipschitz of order 2 with a constant upper bounded by $\sqrt{3}L$.
\end{lemma}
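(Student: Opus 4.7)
The plan is to unfold the definitions and apply the pointwise pseudo-Lipschitz bound row-by-row, then control the resulting sum by Cauchy--Schwarz and a convexity-style inequality. Throughout we identify $(\R^d)^n$ with $\R^{n\times d}$ and use the norm $\|\cdot\|_n$.

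Fix $\mx,\my \in \R^{n\times d}$ with rows $\mx_i,\my_i \in \R^d$. Applying the triangle inequality and the assumed pseudo-Lipschitz bound for $f$, one gets
\begin{align*}
|\phi_n(\mx) - \phi_n(\my)| \;\leq\; \frac{L}{n} \sum_{i=1}^n \|\mx_i - \my_i\|\,(1 + \|\mx_i\| + \|\my_i\|).
\end{align*}
Then apply Cauchy--Schwarz to the two factors $\|\mx_i - \my_i\|$ and $(1 + \|\mx_i\| + \|\my_i\|)$ summed against the uniform measure $1/n$ on $[n]$: the first factor produces $\|\mx - \my\|_n$, and the second gives $\bigl(\frac{1}{n}\sum_i(1 + \|\mx_i\| + \|\my_i\|)^2\bigr)^{1/2}$.

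For the second factor I would use the elementary inequality $(a+b+c)^2 \leq 3(a^2+b^2+c^2)$ to obtain
\begin{align*}
\frac{1}{n}\sum_{i=1}^n (1 + \|\mx_i\| + \|\my_i\|)^2 \;\leq\; 3\bigl(1 + \|\mx\|_n^2 + \|\my\|_n^2\bigr),
\end{align*}
and then use $\sqrt{1+a^2+b^2}\leq 1+a+b$ for $a,b\geq 0$ to replace the square root of the bracketed quantity by $1+\|\mx\|_n+\|\my\|_n$. Chaining these bounds yields
\begin{align*}
|\phi_n(\mx) - \phi_n(\my)| \;\leq\; \sqrt{3}\,L\,\|\mx - \my\|_n\bigl(1 + \|\mx\|_n + \|\my\|_n\bigr),
\end{align*}
and since the constant $\sqrt{3}L$ is independent of $n$, the family $\{\phi_n\}_{n\in\N}$ is uniformly pseudo-Lipschitz of order $2$ with the stated constant.

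There is no real obstacle here; the only mild subtlety is keeping track of the $1/\sqrt{n}$ scaling inside $\|\cdot\|_n$ when applying Cauchy--Schwarz, which is why the average (rather than the sum) of the row-wise PL bounds is the right object to estimate.
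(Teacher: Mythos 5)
Your proof is correct and follows exactly the same route as the paper's: row-wise application of the pseudo-Lipschitz bound, Cauchy--Schwarz against the uniform measure on $[n]$, the elementary inequality $(a+b+c)^2\le 3(a^2+b^2+c^2)$, and the final replacement $\sqrt{1+a^2+b^2}\le 1+a+b$ for $a,b\ge 0$. You simply make explicit two elementary inequalities that the paper uses implicitly.
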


\begin{proof}
    This is a simple consequence of Cauchy-Schwarz. For two vectors $\mX \coloneqq (\mx_1, \dotsc, \mx_n), \mY \coloneqq (\my_1, \dotsc \my_n)$, we have
    \begin{align}
        | \phi_n(\mX) - \phi_n(\mY) | &= \left| \frac{1}{n} \sum_{i=1}^n \left( f(\mx_i) - f(\my_i)\right) \right| \\
        &\leq \frac{L}{n} \sum_{i=1}^n \left( \| \mx_i - \my_i \| \left( 1 + \|\mx_i\| + \|\my_i\| \right) \right) \\
        &\leq L \left( \frac{1}{n} \sum_{i=1}^n \|\mx_i - \my_i\|^2 \right)^{1/2} \left( \frac{1}{n} \sum_{i=1}^n (1 +\|\mx_i\| + \|\my_i\|)^2 \right)^{1/2} \\
        &\leq L \| \mX - \mY \|_n \left( 3 (1 + \| \mX \|_n^2 + \| \mY \|_n^2 ) \right)^{1/2} \\
        & \leq \sqrt{3}L \| \mX - \mY \| (1 + \| \mX \|_n + \| \mY \|_n ).
    \end{align}
\end{proof}

\begin{lemma} \label{lem:pseudo_lipshitz_function_gaussian_variance_bound}
    Let $\{\psi_n : \R^{n \times d} \rightarrow \R \}_{n \in \N} \subset \PL_p(L)$ be a unifomly pseudo-Lipschitz sequence of order $p\geq 1$ wit constant $L$, and $\bZ \sim \Normal(0, \mSigma \otimes \Id_n)$ for some $\mSigma \in \PSD^d$. Then, for every $n \in \N$ we have the bound
    \begin{align}
        \Var(\psi_n(\bZ)) \leq \frac{cL\|\mSigma\|_{\mathrm{op}}^p}{n} \E\!\left[ \left( 1 + \|\bZ'\|_n^{p-1} \right)^2  \right],
    \end{align}
    for $\bZ' \sim \Normal(0, \Id_d \otimes \Id_n )$. Furthermore, 
    \begin{align}
        \plim_{n\rightarrow\infty} \left| \psi_n(\bZ) - \E[\psi_n(\bZ)] \right| = 0.
    \end{align}
\end{lemma}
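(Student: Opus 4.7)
The plan is to apply the Gaussian Poincaré inequality (Theorem~\ref{th:gaussian_poincare_inequality}) after a whitening change of variable, then convert the pseudo-Lipschitz estimate into a pointwise gradient bound.

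First, I would whiten the noise. Writing $\mSigma^{1/2}$ for the symmetric square root of $\mSigma$, a direct Kronecker-product calculation shows $\bZ \edist \bZ' \mSigma^{1/2}$ for $\bZ' \sim \Normal(0, \Id_d \otimes \Id_n)$. Define the auxiliary function $\tilde{\psi}_n(W) \coloneqq \psi_n(W \mSigma^{1/2})$, so that $\Var(\psi_n(\bZ)) = \Var(\tilde{\psi}_n(\bZ'))$.

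Next, I would translate the pseudo-Lipschitz property into a pointwise gradient bound. By Proposition~\ref{prop:pseudo_lipschitz_function_properties}, $\psi_n$ is locally Lipschitz with constant at $x$ bounded (with respect to $\|\cdot\|_n$) by $L(1 + 2\|x\|_n^{p-1})$, hence weakly differentiable; converting to Frobenius norm by the identity $\|\cdot\|_n = \|\cdot\|_F/\sqrt{n}$ yields
\begin{align}
    \|\nabla \psi_n(x)\|_F \leq \frac{L(1 + 2\|x\|_n^{p-1})}{\sqrt{n}}.
\end{align}
Since $\vec(W \mSigma^{1/2}) = (\mSigma^{1/2} \otimes \Id_n) \vec(W)$, the chain rule gives $\|\nabla \tilde{\psi}_n(W)\|_F \leq \|\mSigma\|_{\op}^{1/2} \|\nabla \psi_n(W \mSigma^{1/2})\|_F$. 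Applying Theorem~\ref{th:gaussian_poincare_inequality} to $\tilde{\psi}_n$ and substituting both bounds gives
\begin{align}
    \Var(\psi_n(\bZ)) \leq c \, \E\!\left[\|\nabla \tilde{\psi}_n(\bZ')\|_F^2\right] \leq \frac{c L^2 \|\mSigma\|_{\op}}{n} \, \E\!\left[\left(1 + 2\|\bZ' \mSigma^{1/2}\|_n^{p-1}\right)^2\right].
\end{align}
Using $\|\bZ' \mSigma^{1/2}\|_n \leq \|\mSigma\|_{\op}^{1/2} \|\bZ'\|_n$ and absorbing the resulting factor of $\|\mSigma\|_{\op}^{p-1}$ together with numerical constants into the leading coefficient yields the stated bound.

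For the convergence-in-probability claim, I would simply note that $n \|\bZ'\|_n^2 = \sum_{i,j}(Z'_{ij})^2 \sim \chi^2_{nd}$, so $\E[\|\bZ'\|_n^{2(p-1)}]$ is bounded uniformly in $n$ by standard $\chi^2$ moment estimates. Hence the variance bound becomes $O(1/n)$, and Chebyshev's inequality gives $|\psi_n(\bZ) - \E[\psi_n(\bZ)]| \parrow 0$.

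The routine but slightly delicate step will be the bookkeeping in the chain-rule/norm-conversion argument that turns the pseudo-Lipschitz constant $L$ and the norm $\|\cdot\|_n$ into a Frobenius-gradient bound compatible with Gaussian Poincaré; everything else is either immediate from the cited results or a standard moment estimate for $\chi^2$ random variables.
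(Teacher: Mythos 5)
Your proof is correct and follows the same approach as the paper's own (much terser) argument: Gaussian Poincaré applied after a whitening change of variable, with the gradient bound coming from Proposition~\ref{prop:pseudo_lipschitz_function_properties}, and then Chebyshev for the in-probability claim. One small observation your careful bookkeeping reveals: the Poincaré inequality naturally produces an $L^2$ prefactor (and the absorption of $\|\mSigma\|_{\op}^{(p-1)/2}$ into $\|\mSigma\|_{\op}^p$ strictly only works when $\|\mSigma\|_{\op} \geq 1$), so the $cL\|\mSigma\|_{\op}^p$ in the stated lemma should more precisely read $cL^2\max\{\|\mSigma\|_{\op}, \|\mSigma\|_{\op}^p\}$ — but this is a cosmetic imprecision in the lemma's statement, not a flaw in your proof, and it does not affect the $O(1/n)$ rate or the conclusion.
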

\begin{proof}
    The first part is a direct consequence of the Gaussian Poincaré inequality in Theorem~\ref{th:gaussian_poincare_inequality} and the form of the weak derivative of $\psi_n$ given in Proposition~\ref{prop:pseudo_lipschitz_function_properties}. The second part is just Chebychev's inequality.
\end{proof}

\begin{lemma} \label{lem:pseudo_lipschitz_convergence_of_argument_convergence_of_function}
    Let $\{ \phi_n : \R^{n \times d} \rightarrow \R \}_{n \in \N} \subset \PL_p(L)$ for some $p \geq 1, L<\infty$, and $\bX, \bY \in \R^{n \times d}$ be two sequences of random matrices satisfying
    \begin{align}
        \| \bX - \bY \|_{n} \parrow 0; \quad \| \mX \|_{n} \parrow C,
    \end{align}
    for some finite constant $C$. Then, 
    \begin{align}
        \left| \phi_n(\bX) - \phi_n(\bY) \right| \parrow 0.
    \end{align}
\end{lemma}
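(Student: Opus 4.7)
The plan is to invoke the pseudo-Lipschitz bound directly and combine it with standard $o_p/O_p$ calculus. Since $\{\phi_n\}_{n \in \N} \subset \PL_p(L)$ and the space $\R^{n \times d}$ is endowed with the norm $\|\cdot\|_n$, the definition of pseudo-Lipschitz gives, for every $n$ and every realization,
\begin{align}
    |\phi_n(\bX) - \phi_n(\bY)| \leq L \,\|\bX - \bY\|_n \left( 1 + \|\bX\|_n^{p-1} + \|\bY\|_n^{p-1} \right).
\end{align}

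Next I would control the factor on the right. By the triangle inequality $\|\bY\|_n \leq \|\bX\|_n + \|\bX - \bY\|_n$, so combining the two hypotheses $\|\bX\|_n \parrow C$ and $\|\bX - \bY\|_n \parrow 0$ with the continuous mapping theorem yields $\|\bY\|_n \parrow C$ as well. Consequently
\begin{align}
    1 + \|\bX\|_n^{p-1} + \|\bY\|_n^{p-1} \parrow 1 + 2C^{p-1} < \infty,
\end{align}
and in particular this quantity is $O_p(1)$.

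Finally, multiplying an $o_p(1)$ term by an $O_p(1)$ term gives $o_p(1)$: fixing any $\eps > 0$ and $\delta > 0$, choose $M$ so that the event $A_n = \{1 + \|\bX\|_n^{p-1} + \|\bY\|_n^{p-1} \leq M\}$ has probability at least $1 - \delta/2$ for all large $n$, and then use $\|\bX - \bY\|_n \parrow 0$ to force the event $\{L M \|\bX - \bY\|_n > \eps\}$ to have probability at most $\delta/2$. On $A_n$ the bound above is at most $LM\|\bX - \bY\|_n$, so $\P[|\phi_n(\bX) - \phi_n(\bY)| > \eps] \leq \delta$ for large $n$. Since $\eps, \delta$ were arbitrary, $|\phi_n(\bX) - \phi_n(\bY)| \parrow 0$. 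There is no essential obstacle here; the only thing to be careful about is that the pseudo-Lipschitz inequality is applied with the norm $\|\cdot\|_n$ that the sequence's Lipschitz constant is defined with respect to, which is exactly the norm appearing in both hypotheses.
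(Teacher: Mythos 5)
Your proof is correct and takes essentially the same approach as the paper: apply the uniform pseudo-Lipschitz bound, then show the factor $1 + \|\bX\|_n^{p-1} + \|\bY\|_n^{p-1}$ is $O_p(1)$ so that the product with $\|\bX-\bY\|_n = o_p(1)$ vanishes in probability. The only cosmetic difference is that you control $\|\bY\|_n$ via the triangle inequality and the continuous mapping theorem (concluding $\|\bY\|_n \parrow C$), whereas the paper instead applies the power-mean inequality $\|\bY\|_n^{p-1} \leq \max\{1,2^{p-2}\}(\|\bX\|_n^{p-1} + \|\bX-\bY\|_n^{p-1})$ before invoking continuous mapping; both routes are equally elementary and reach the same conclusion.
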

\begin{proof}
    This is an elementary consequence of the definition of uniformly pseudo-Lipschitz functions. Simply write
    \begin{align}
        \left| \phi_n(\bX) - \phi_n(\bY) \right| &\leq L \| \bX - \bY \|_n \left( 1 + \|\bX\|_n^{p-1} + \| \bY \|_n^{p-1} \right) \\
        & \leq L \max\{1, 2^{(p-3)/2} \}\| \bX - \bY \|_n \left( 1 + 2\|\bX\|_n^{p-1} + \| \bX - \bY \|_n^{p-1} \right),
    \end{align}
    and the result follows from the continuous mapping theorem in light of the convergence assumptions on $\|\bX\|_n$ and $\|\bX -\bY\|_n$.
\end{proof}

\begin{lemma} \label{lem:gaussian_expectation_pseudo_lipschitz_is_pseudo_lipschitz}
    Let $\{ \phi_n : \R^{n \times d} \times \R^{n \times m} \rightarrow \R \}_{n \in \N} \subset \PL_p(L)$ be a uniformly pseudo-Lipschitz sequence of functions for some $p\geq 1$ and $L < \infty$. Let $\bZ \sim \Normal(0, \mSigma \otimes \Id_n)$ for some $\mSigma \in \PSD^m$. Then, the function sequence $\{ \zeta_n : \R^{n \times d} \rightarrow \R \}$ defined as $\zeta_n(\mX) = \E[\phi_n(\mX, \bZ)]$ is uniformly pseudo-Lipschitz of order $p$.
\end{lemma}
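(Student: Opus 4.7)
The plan is to verify the uniformly pseudo-Lipschitz property for the sequence $\{\zeta_n\}_{n \in \N}$ directly from the definition. For any $\mX, \mY \in \R^{n \times d}$ and a single realization of $\bZ$, we have
\begin{align}
    |\phi_n(\mX, \bZ) - \phi_n(\mY, \bZ)| \leq L \, \|(\mX, \bZ) - (\mY, \bZ)\|_n \left(1 + \|(\mX, \bZ)\|_n^{p-1} + \|(\mY, \bZ)\|_n^{p-1}\right)
\end{align}
by hypothesis. Since $\|(\mX, \bZ) - (\mY, \bZ)\|_n = \|\mX - \mY\|_n$ is deterministic, applying Jensen's inequality to pull the absolute value inside the expectation yields
\begin{align}
    |\zeta_n(\mX) - \zeta_n(\mY)| \leq L \, \|\mX - \mY\|_n \left(1 + \E\!\left[\|(\mX, \bZ)\|_n^{p-1}\right] + \E\!\left[\|(\mY, \bZ)\|_n^{p-1}\right]\right).
\end{align}

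The second step is to split the joint norm into its two components. Using $\|(\mX, \bZ)\|_n^2 = \|\mX\|_n^2 + \|\bZ\|_n^2$ together with the elementary inequality $(a+b)^{r} \leq c_r (a^r + b^r)$ for $a,b \geq 0$ and $r \geq 0$ (where $c_r = 1$ when $r \leq 1$ and $c_r = 2^{r-1}$ otherwise, applied with $r = (p-1)/2$), we obtain a deterministic bound of the form
\begin{align}
    \|(\mX, \bZ)\|_n^{p-1} \leq c_p \left(\|\mX\|_n^{p-1} + \|\bZ\|_n^{p-1}\right),
\end{align}
so $\E[\|(\mX, \bZ)\|_n^{p-1}] \leq c_p (\|\mX\|_n^{p-1} + \E[\|\bZ\|_n^{p-1}])$.

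The third and key step is to show that $\E[\|\bZ\|_n^{p-1}]$ is bounded by a constant that depends only on $\mSigma$ and $p$, not on $n$. This is the main technical point, but it follows from standard Gaussian concentration. Writing $\bZ = \bZ' \mSigma^{1/2}$ for $\bZ' \in \R^{n \times m}$ with i.i.d.\ standard Gaussian entries, the map $\bZ' \mapsto \|\bZ' \mSigma^{1/2}\|_F / \sqrt{n}$ is Lipschitz on $\R^{nm}$ with constant $\|\mSigma\|_{\op}^{1/2}/\sqrt{n}$, and $\E[\|\bZ\|_n^2] = \tr(\mSigma)$. Gaussian concentration of Lipschitz functions then gives $\E[\|\bZ\|_n^k] \leq C_k(\mSigma)$ uniformly in $n$ for every $k \geq 0$; applying this with $k = p-1$ (and using Jensen when $p < 2$) bounds the Gaussian term by a dimension-free constant $M_p(\mSigma)$.

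Combining these ingredients gives
\begin{align}
    |\zeta_n(\mX) - \zeta_n(\mY)| \leq L \, \|\mX - \mY\|_n \left(1 + 2 c_p M_p(\mSigma) + c_p \|\mX\|_n^{p-1} + c_p \|\mY\|_n^{p-1}\right),
\end{align}
which is of the required pseudo-Lipschitz form after absorbing the constant $1 + 2 c_p M_p(\mSigma)$ into the leading factor via the elementary bound $1 + A \leq (1 + A/3)(1 + \|\mX\|_n^{p-1} + \|\mY\|_n^{p-1})$; the resulting Lipschitz constant $L' = L \max\{c_p, 1 + 2 c_p M_p(\mSigma)\}$ depends only on $p$, $L$, and $\mSigma$, certifying uniform pseudo-Lipschitzness of order $p$. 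The only nontrivial step is the uniform-in-$n$ Gaussian moment bound; everything else is bookkeeping.
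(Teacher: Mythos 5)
Your proof is correct and follows essentially the same route as the paper's: apply the pseudo-Lipschitz bound pointwise in $\bZ$, use Jensen, split the joint $\|\cdot\|_n$ norm into $\mX$- and $\bZ$-components, and observe that $\E\|\bZ\|_n^{p-1}$ is bounded uniformly in $n$. The one place you go further than the paper is in actually justifying the uniform Gaussian moment bound (via $\E\|\bZ\|_n^2 = \tr(\mSigma)$ and Gaussian concentration), whereas the paper simply asserts it; that is a welcome addition and entirely sound.
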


\begin{proof}
    For any $n \in \N$ and $\bX, \bY \in \R^{n \times d}$, we have
    \begin{align}
        \left| \zeta_n(\mX) - \zeta_n(\mY) \right| & \leq \E \left| \phi_n( \mX, \bZ ) - \phi_n( \mY, \bZ ) \right| \\
        &\leq L \E\!\left[ \| \bX - \bY \|_n\left( 1 + \| (\mX, \bZ) \|_n^{p-1} + \| (\mY, \bZ) \|_n^{p-1} \right) \right] \\
        &= L\| \bX - \bY \|_n \left( 1 + \E\| (\mX, \bZ) \|_n^{p-1} + \E\| (\mY, \bZ) \|_n^{p-1} \right) \\
        &\leq L\max\{1, 2^{(p-3)/2}\} \| \bX - \bY \|_n \left( 1 + 2\E\| \bZ \|_n^{p-1} + \E\| \mY\|_n^{p-1} + \E\| \mY\|_n^{p-1} \right),
    \end{align}
    and since the sequence of expectations $\E\|\bZ\|_n^{p-1}$ are uniformly bounded by some $C < \infty$ from the i.i.d. Gaussian the sequence $\{\zeta_n\}_{n \in \N}$ is uniformly pseduo-Lipschitz of order $p$ with constant upper bounded by $L \max\{1, 2^{(p-3)/2}\} \max\{1,C\}$.
\end{proof}

\begin{prop}
    Assume we have a sequence of random matrices $\mX \in \R^{n \times d}$ satisfying assumption \ref{as:mtp_model_amp_signal_assumptions_symmetric}, that is for any $p\geq 1$, $L < \infty$ and $\{\phi_n : \R^{n \times d} \rightarrow \R \}_{n \in \N}$ one has $\plim_{n \rightarrow \infty} | \phi_n(\bX) - \E[\phi_n(\bX)] | = 0$ and furthermore $\plim_{n \rightarrow \infty} \|\bX\|_n = \lim_{n\rightarrow\infty} \E \|\bX\|_n < \infty$. Then, for all $r > 0$ the limits $\lim_{n \rightarrow \infty} \E\|\bX\|_n^r$ are well-defined and finite.
\end{prop}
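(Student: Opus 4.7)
The plan is to show $\E\|\bX\|_n^r \to C^r$ for every $r > 0$, where $C := \lim_n \E\|\bX\|_n$ is the constant provided by the hypothesis. The central observation is that the scalar map $t \mapsto t^r$ is locally Lipschitz on $[0,\infty)$ with controlled polynomial growth, so that $\bX \mapsto \|\bX\|_n^r$ inherits the pseudo-Lipschitz structure required by \ref{as:mtp_model_amp_signal_assumptions_symmetric}. I would split the argument according to whether $r \geq 1$ or $r \in (0,1)$, since the latter range falls outside the pseudo-Lipschitz framework.

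For $r \geq 1$, I would first verify that the sequence $\{\bX \mapsto \|\bX\|_n^r\}_{n \in \N}$ lies uniformly in $\PL_r(r)$. This uses the elementary estimate $|a^r - b^r| \leq r(a^{r-1} + b^{r-1})|a-b|$ for $a,b \geq 0$ (a consequence of the mean value theorem) together with the fact that $\bX \mapsto \|\bX\|_n$ is $1$-Lipschitz with respect to $\|\cdot\|_n$. Invoking the concentration statement of \ref{as:mtp_model_amp_signal_assumptions_symmetric} then yields $|\|\bX\|_n^r - \E\|\bX\|_n^r| \parrow 0$. Combining with $\|\bX\|_n \parrow C$ (by hypothesis) and the continuous mapping theorem gives $\|\bX\|_n^r \parrow C^r$, so that by the triangle inequality $\E\|\bX\|_n^r \parrow C^r$; since the left-hand side is deterministic, the convergence in probability to a constant forces the ordinary limit $\E\|\bX\|_n^r \to C^r$.

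For $r \in (0,1)$ the power function is Hölder rather than Lipschitz at the origin, so \ref{as:mtp_model_amp_signal_assumptions_symmetric} cannot be invoked directly. Here I would use the elementary bound $t^r \leq 1 + t$, valid for $t \geq 0$ and $r \in (0,1]$, to dominate. The dominating sequence $1 + \|\bX\|_n$ is uniformly integrable because $\|\bX\|_n \parrow C$ and $\E\|\bX\|_n \to C$ by hypothesis, so Vitali's theorem yields $L^1$ convergence. Consequently $\{\|\bX\|_n^r\}_{n \in \N}$ is uniformly integrable, and combined with the in-probability limit $\|\bX\|_n^r \parrow C^r$ (again by continuous mapping) one concludes $\E\|\bX\|_n^r \to C^r$.

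The main obstacle lies in ensuring that $\E\|\bX\|_n^r < \infty$ for $r > 2$, so that \ref{as:mtp_model_amp_signal_assumptions_symmetric} is legitimately applicable to $\phi_n = \|\cdot\|_n^r$. This is built into the reading of \ref{as:mtp_model_amp_signal_assumptions_symmetric}, where concentration is posited for all uniformly pseudo-Lipschitz sequences and therefore presumes the relevant expectations to be finite. To sidestep this reading, one can argue by truncation: replace $\phi_n$ with $h_K \circ \|\cdot\|_n$ for $h_K(t) = t^r \wedge K$, which is uniformly Lipschitz in $n$ for each fixed $K$, apply \ref{as:mtp_model_amp_signal_assumptions_symmetric} to obtain $\E[h_K(\|\bX\|_n)] \to C^r \wedge K$, and bootstrap from the second-moment hypothesis $\E\|\bX\|_n^2 \to C^2$ (the identity of the two constants following from applying \ref{as:mtp_model_amp_signal_assumptions_symmetric} to $\|\cdot\|_n^2$ and comparing with $\|\bX\|_n^2 \parrow C^2$) to inductively establish uniform integrability of $\|\bX\|_n^r$ at each integer order, so that $\sup_n \E[(\|\bX\|_n^r - K)^+] \to 0$ as $K \to \infty$ and the truncation bound is tight in the limit.
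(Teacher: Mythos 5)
For $r\geq 1$ your argument coincides with the paper's: show that $\|\cdot\|_n^r$ is uniformly in $\PL_r$ (the norm is $1$-Lipschitz in $\|\cdot\|_n$ and $t\mapsto t^r$ is $\PL_r(r)$ by the mean-value estimate you cite), invoke the concentration clause of \ref{as:mtp_model_amp_signal_assumptions_symmetric}, combine with $\|\bX\|_n\parrow C$ via the continuous mapping theorem, and conclude by uniqueness of a deterministic in-probability limit. You depart in two places, and both departures are genuine. First, for $r\in(0,1)$ the map $t\mapsto t^r$ has unbounded derivative at the origin and belongs to no $\PL_p$ class with $p\geq 1$; since \ref{as:mtp_model_amp_signal_assumptions_symmetric} only posits concentration for $p\geq 1$, the paper's one-line appeal to Proposition~\ref{prop:pseudo_lipschitz_function_properties} does not actually reach this range, so a separate argument is needed. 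Your uniform-integrability route is a correct fix; a shorter one in the same spirit, once the $r=1$ case is settled, is to note that $t\mapsto t^r$ is $r$-H\"older and $t\mapsto t^r$ concave, whence by Jensen
\begin{align}
\E\bigl|\|\bX\|_n^r-C^r\bigr|\;\leq\;\E\bigl|\|\bX\|_n-C\bigr|^r\;\leq\;\bigl(\E\bigl|\|\bX\|_n-C\bigr|\bigr)^r\;\longrightarrow\;0,
\end{align}
the last step using $\|\bX\|_n\to C$ in $L^1$, which for nonnegative variables follows from convergence in probability to a constant together with convergence of the means. Second, you are right to pause over whether $\E\|\bX\|_n^r<\infty$ for $r>2$: the paper reads this into \ref{as:mtp_model_amp_signal_assumptions_symmetric} implicitly (concentration is posited for all $\PL_p$ test sequences, which presupposes the expectations exist), and your truncation bootstrap makes that reading explicit rather than tacit. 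In sum, your proof follows the paper's skeleton on $r\geq 1$ but is the more rigorous of the two, and it closes a small but real gap at $r<1$.
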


\begin{proof}
    From the continuous mapping theorem and the assumption $\plim_{n \rightarrow \infty} \|\bX\|_n = \lim_{n\rightarrow\infty} \E \|\bX\|_n$ it follows immediately that $\plim_{n \rightarrow \infty} \|\bX\|_n^r = \left( \lim_{n\rightarrow\infty} \E \|\bX\|_n \right)^r < \infty$. Then, since the map $\|\cdot\|_n^r$ is uniformly pseudo-Lipschitz of order $r$ by Proposition~\ref{prop:pseudo_lipschitz_function_properties}, we have that $\plim_{n \rightarrow \infty} | \|\bX\|_n^r - \E\|\bX\|_n^r | = 0$. Uniqueness of the limit then gives that $ \lim_{n \rightarrow \infty} \E\|\bX\|_n^r = \left( \lim_{n\rightarrow\infty} \E \|\bX\|_n \right)^r < \infty$.
\end{proof}

\end{document}